\DeclareMathOperator*{\argmax}{argmax} 
\newtheorem{theorem}{Theorem}
\newtheorem{lemma}{Lemma} 
\begin{document}

\title{Model-Based Offline Reinforcement Learning with Adversarial Data Augmentation}


\author{Anonymous Submission}

\author{Hongye Cao, Fan Feng, Jing Huo, Shangdong Yang, Meng Fang, Tianpei Yang, and Yang Gao
\thanks{
Hongye Cao, Jing Huo, Tianpei Yang, and Yang Gao are with the National Key Laboratory for Novel Software Technology, Nanjing University, Nanjing 210093, China (e-mail: hongyecao528@gmail.com; huojing@nju.edu.cn; tianpei.yang@nju.edu.cn; gaoy@nju.edu.cn).
}
\thanks{
Fan Feng is with the Department of Electrical Engineering, City University of Hong Kong, Hong Kong, China (email: fan.feng@my.cityu.edu.hk).
}
\thanks{
Shangdong Yang is with the School of Computer Science, Nanjing University of Posts and Telecommunications, Nanjing 210023, China (e-mail: sdyang@njupt.edu.cn).
}
\thanks{
Meng Fang is with the Department of Computer Science, University of Liverpool, L69 3BX Liverpool, U.K. (e-mail: meng.fang@liverpool.ac.uk).
}
}



\maketitle

\begin{abstract}
    Model-based offline Reinforcement Learning (RL) constructs environment models from offline datasets to perform conservative policy optimization. 
    Existing approaches focus on learning state transitions through ensemble models, rollouting conservative estimation to mitigate extrapolation errors. 
    However, the static data makes it challenging to develop a robust policy, and offline agents cannot access the environment to gather new data.
    To address these challenges, we introduce \textbf{M}odel-based \textbf{O}ffline \textbf{R}einforcement learning with \textbf{A}dversaria\textbf{L} data augmentation (MORAL). 
    In MORAL, we replace the fixed horizon rollout by employing adversaria data augmentation to execute alternating sampling with ensemble models to enrich training data. Specifically, this adversarial process dynamically selects ensemble models against policy for biased sampling, mitigating the optimistic estimation of fixed models, thus robustly expanding the training data for policy optimization. 
    Moreover, a differential factor is integrated into the adversarial process for regularization, ensuring error minimization in extrapolations. 
    This data-augmented optimization adapts to diverse offline tasks without rollout horizon tuning, showing remarkable applicability. 
    Extensive experiments on D4RL benchmark demonstrate that MORAL outperforms other model-based offline RL methods in terms of policy learning and sample efficiency.
\end{abstract}

\begin{IEEEkeywords}
Model-based offline reinforcement learning, adversarial data augmentation, zero-sum Markov game, differential factor regularization.
\end{IEEEkeywords}

\section{Introduction}
Reinforcement Learning (RL) solves sequential decision problems through interaction with the environment~\cite{sutton2018reinforcement}. 
RL has been applied in various real-world applications, including robotics~\cite{10771594, 10145841}, smart transportation~\cite{9537641}, and sequential recommendation systems~\cite{10144689}.
However, in safety-critical areas, online interaction is dangerous and expensive ~\cite{yu2021reinforcement,kiran2021deep}. Hence, model-based offline RL proposes to directly learn the state transition from datasets without online interaction, and this learning paradigm is widely regarded as an approach for deploying RL in real-world settings, such as robotic manipulation~\cite{9940310, Kumar2019}, autonomous driving~\cite{li2023boosting}, and healthcare~\cite{tang2021model}.

In offline setting, the learned policy of the agent is susceptible to extrapolation errors, which caused by the distribution shift between environmental transitions and offline datasets generated by the behavior policy~\cite{10310284,10301548}. These errors can be accumulated by bootstrapping, resulting in severe estimation errors. Naturally, it is challenging to correct these errors due to the lack of online exploration in offline RL. 

Pioneer attempts to mitigate these challenges in existing methods focus on the policy constraints~\cite{anuncertainty,Kumar2020}. 
In particular, model-free approaches~\cite{Kumar2020, Xu2022} introduce regularization for the value functions to confine the learned policy within the offline data manifold. Conversely, the model-based counterpart~\cite{mopo,kidambi2020morel} involves constructing environment models from offline data and implementing conservative rollouts to enrich training data for policy optimization, as illustrated in Fig.~\ref{sub_1}. Since model-based approaches easily access prior knowledge of dynamics, they are likely more effective at enhancing policy stability and generalization in policy learning~\cite{chen2023offline,pmdb}. 

\begin{figure}[t]
\centering
\subfloat[]{\includegraphics[width=1.75in]{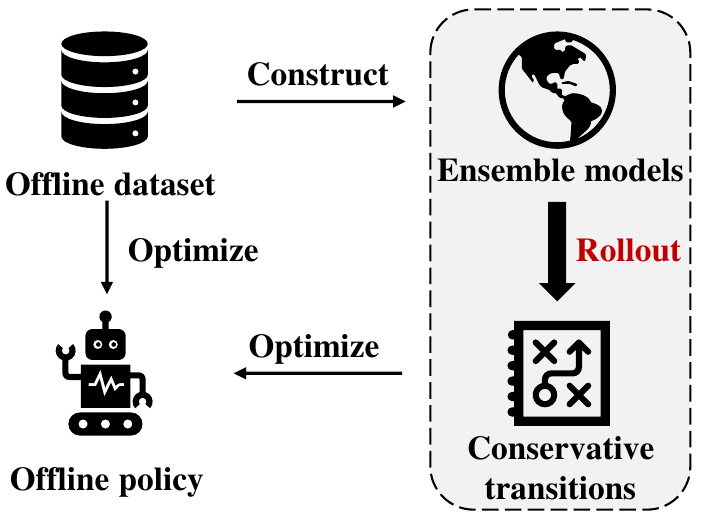}%
\label{sub_1}
}
\hfil
\subfloat[]{\includegraphics[width=1.6in]{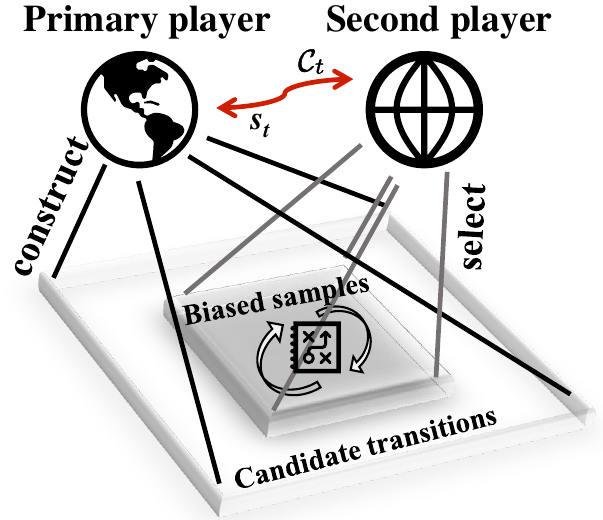}%
\label{sub_2}
}
\caption{(a) The framework of model-based offline RL. (b) We utilize a adversarial data augmentation process to execute alternating sampling, replacing the fixed horizon rollout marked red in (a). The primary player constructs candidate transition set and the second player selects biased samples to enrich training data without the rollout horizon configuration across datasets and tasks.}
\label{pipeline}
\end{figure}

\IEEEpubidadjcol 
Learning from the offline data is difficult in offline settings since the data does not cover the full distribution. Although model-based approaches offer advantages in data augmentation, their design patterns tend to be inefficient and inflexible. 
Specifically, these methods employ offline datasets to train models that approximate environmental dynamics, with policies further optimized using synthetic data generated from model rollouts~\cite{kidambi2020morel,chen2023offline}. Crafting the rollout horizon meticulously for diverse datasets often leads to low sample efficiency and restricted applicability. The scope of rollout exploration is influenced by different datasets and environments, requiring a balance between the risk and return associated with the rollout process. 
Consequently, designing appropriate rollout strategies for diverse scenarios remains a long-standing challenge~\cite{rambo,crop}.
To be specific, exploratory small-scale rollouts provide insufficient data for offline optimization, failing to mitigate the impact of extrapolation errors and reducing sample efficiency. Conversely, extensive large-scale rollout exploration may lead to overly conservative policies that constrain policy performance. Thus, the central question becomes \textit{whether we can adaptively design the rollout horizon for data augmentation, without rigid pre-specifications, to ensure flexibility and robustness across varying scenarios}. 

To this end, we introduce a novel approach named \textbf{M}odel-based \textbf{O}ffline \textbf{R}einforcement learning with \textbf{A}dversaria\textbf{L} data augmentation (MORAL). 
MORAL first trains ensemble models using collected offline datasets for data augmentation by executing model rollout. To address the challenge of lacking online exploration, MORAL employs these trained models to execute alternating sampling, constructing an adversarial dataset that enriches the policy optimization process more robustly than traditional rollout.
MORAL eliminates the need for manually setting the rollout horizon for different datasets and tasks by framing the problem to an adversarial paradigm. 

Specifically, MORAL leverages the alternating sampling process (Fig.~\ref{sub_2}) for adversarial data augmentation, providing a flexible sampling strategy that adapts to the requirements of policy optimization. In this adversarial process, the primary player, consisting of a dynamic selection of ensemble models, performs offline sampling to enhance data diversity. The secondary player introduces stochastic disturbances and selects biased data from the samples conduct by the first player as part of the adversarial process, preventing overly optimistic data estimation and achieving robust adversarial data augmentation. 
Moreover, due to the extrapolation errors in offline rollouts of ensemble models, MORAL introduces a differential factor (DF) to the policy optimization process for regularization. Hence, this adversarial data-augmented design in MORAL effectively mitigates policy instability caused by discrepancies among a large number of ensemble models and fixed rollout horizons. Notably, despite leveraging a substantial ensemble of models, MORAL maintains computational efficiency, avoiding any increase in execution time compared to other offline model-based RL methods (as evaluated in Section~\ref{sec:time_cost}). Overall, by expanding policy spectrum and minimizing the risk of converging to suboptimal local maxima, MORAL ensures a robust policy learning process.

The main contributions of this paper are 3-fold:
\begin{itemize}

\item We frame an adversarial process to conduct alternating sampling, robustly enhancing policy performance with data augmentation under stochastic perturbations induced by this process. 
This approach efficiently utilizes ensemble models, replacing the fixed horizon rollout, and introduces an applicable architecture for model-based offline RL.
\item 
To prevent extrapolation errors during the adversarial process, we introduce a differential factor to regularize the offline policy. 
This iterative regularization prevents divergence and improves sample efficiency during the policy optimization.

\item Extensive experiments conducted on widely studied D4RL benchmark demonstrate the superior performance of MORAL compared to other model-based offline RL methods, achieving an average score of $86.3$ across $15$ offline tasks and improving sample efficiency. Notably, MORAL consistently exhibits robust performance across a wide range of tasks, excelling in both Q-value estimation and uncertainty quantification.
\end{itemize}

The remainder of this paper is organized as follows. Section~\ref{sec:related work} provides an overview of related works, including offline RL, and adversarial RL. Section~\ref{sec:preliminaries} details the necessary preliminaries and backgrounds. Section~\ref{sec:approach} presents the proposed approach in detail. In section \ref{sec:experiments}, we introduce the experiments conducted to demonstrate the superiority of MORAL. Finally, section~\ref{sec:conclusion} draws conclusions and discusses future works.

\section{Related Work}
\label{sec:related work}
\subsection{Offline RL} 
Offline RL seeks to learn a policy from offline datasets without engaging in online exploration. The pioneering works directly utilize off-policy RL algorithms for policy learning in the offline settings~\cite{kumar2019stabilizing,fujimoto2019off}. However, policy learning failures occurred due to inherent discrepancies between the real-world environment and offline datasets, exacerbated by the lack of the online corrective mechanism~\cite{mopo}.
The learned policy is vulnerable to extrapolation errors arising from the distribution shift between offline datasets and the states visited by the behavior policy during offline training~\cite{chen2023offline,10432784}. To tackle these challenges, offline RL restricts policy divergence, enabling the acquisition of conservative policies on offline datasets~\cite{kidambi2020morel,Kumar2020}.

Existing offline RL methods can be broadly classified into two main approaches: model-free and model-based offline RL~\cite{prudencio2023survey,levine2020offline}. 
Policies within model-free offline RL encompass importance sampling algorithms, which constrain the learned policy to resemble the behavior policy~\cite{Kumar2019,Liu2019OFF}. Model-based offline RL methods operate within the framework of supervised learning~\cite{wang2024goplan,kidambi2020morel}. These methods involve the construction of ensemble models through the acquisition of transitions from offline datasets and execute rollout for conservative data augmentation to optimize offline policy\cite{crop,goo2022you,kidambi2020morel}. However, existing methods require meticulous design for the quantity of ensemble models and the rollout horizon across diverse tasks, resulting in increased algorithmic costs and limited universality (as analyzed in Section~\ref{sec:problem}). In our work, we focus on model-based approaches and aim to address the fundamental challenges in the fixed horizon rollout process of data augmentation via adversarial frameworks. 

\subsection{Adversarial RL} 
Adversarial RL involves training an adversarial agent to introduce disturbances during the learning process, aiming to impede the target policy learning~\cite{pinto2017robust,Chenadv2022}. In the highly adversarial environment, this approach ensures that the target policy learns a robust policy~\cite{oikarinen2021robust,zhai2022robust}.
Existing methods involve constructing adversarial agents that conform to the state and action spaces of the original policy and introduce disturbances through adversarial reward~\cite{pinto2017robust,pmdb}. In model-based offline RL, adversarial methods introduce perturbation factors between optimizing the policy and the model. 
However, these approaches (i.e., RAMBO~\cite{rambo}, ARMOR~\cite{armor}) not only amplify computational complexity but also fall short of resolving the problem of inadequate applicability. 
Inspired by adversarial learning, we propose an adversarial data augmentation method for model-based offline RL innovatively, where the construction of adversarial players does not need to adhere to the target policy and replaces the fixed horizon rollout process in existing state-of-the-arts.

\section{Preliminaries and Backgrounds}
\label{sec:preliminaries}
\subsection{Markov Decision Process (MDP)}
In the RL environments, the interaction between the agent and the environment can be formalized as an MDP. The standard MDP is defined by the tuple $ \mathcal{M} = \langle \mathcal{S}, \mathcal{A}, T, \mu_0, r, \gamma \rangle $, where $\mathcal{S}$ denotes the state space, $\mathcal{A}$ represents the action space, $T(s' | s, a)$ is the transition dynamic model, $r(s, a)$ is the reward function, and $\mu_0$ is the distribution of the initial state $s_0$. The discount factor $\gamma \in [0, 1)$ is also included. The objective of RL is to learn a policy $\pi: \mathcal{S} \times \mathcal{A} \to [0, 1]$ that maximizes the expected discounted cumulative reward ${\eta _\mathcal{M}}(\pi) := \mathbb{E}_{s_0 \sim \mu_0, s_t \sim T, a_t \sim \pi} \left[\sum\nolimits_{t = 0}^\infty {\gamma^t}r(s_t, a_t)\right]$. The value function $V_\mathcal{M}^\pi(s): = \mathbb{E}_{s_t \sim T, a_t \sim \pi} \left[\sum\nolimits_{t = 0}^\infty {\gamma^t}r(s_t, a_t) \,|\, s_0 = s\right]$ represents the expected discounted return under policy $\pi$ when starting from the state $s$.

\begin{figure}
\centering
\includegraphics[width=0.49\linewidth]{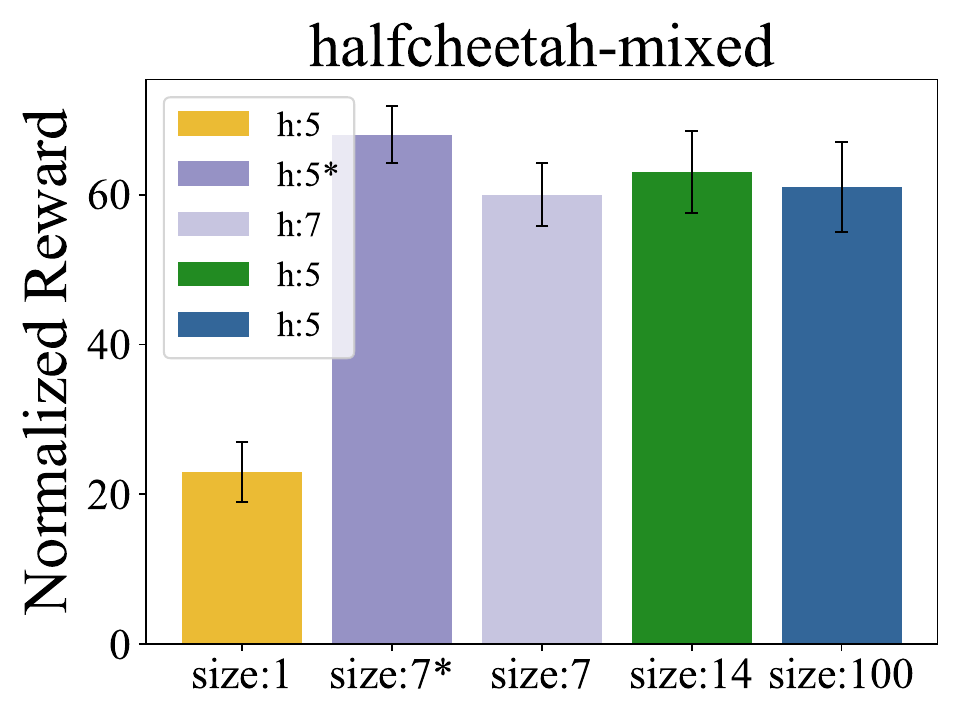}
\includegraphics[width=0.49\linewidth]{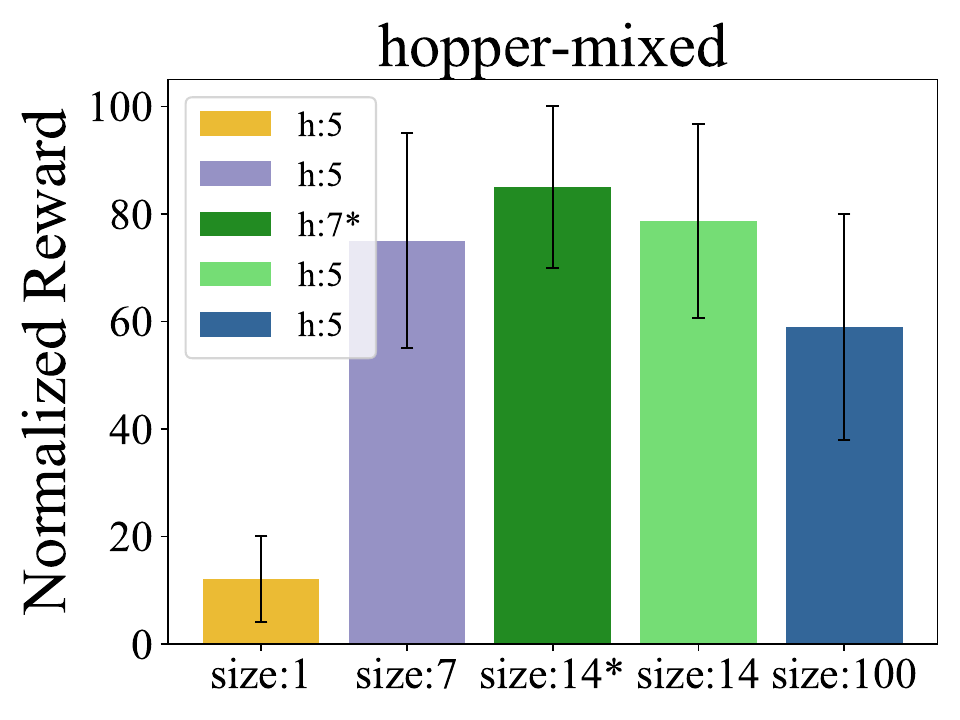}

\caption{The comparative experiments are conducted across various ensemble sizes and rollout horizons (h) within mixed datasets of two environments. We use stars (*) to denote the setups with optimal results. We find that the performance exhibited substantial differences in different settings.}
\label{fig1}
\end{figure}

\subsection{Model-based Offline RL with Ensemble Models}
\subsubsection{Definitions}
In offline RL, the policy is exclusively learned from pre-collected datasets and does not interact with the real-world environment. The offline dataset $\mathcal{D}_{\rm{env}} = \left\{(s_i, a_i, r_i, s_i')\right\}_{i=1}^{|\mathcal{D}_{\rm{env}}|}$ comprises all the gathered state-action transitions. In model-based offline RL, $\hat T(s' | s, a)$ is the dynamic model estimated from transitions in $\mathcal{D}_{\rm{env}}$, which is trained by maximum likelihood estimation: $\min_{\hat{T}} \mathbb{E}_{(s,a,s')\sim \mathcal{D}_{\rm{env}} }[-\log{\hat{T}(s'|s,a) }] $. This dynamic model defines the estimated MDP $\hat {\mathcal{M}} = \langle \mathcal{S}, \mathcal{A}, \hat T, \mu_0, r, \gamma \rangle$. 
Most existing approaches employ the ensemble models $\{\hat{T}^{i}\}_{i=1}^{N}$ to execute fixed horizon rollout for data augmentation $\mathcal{D}_{\rm{aug}}$, where $N$ is the number of ensemble models.
The objective is to learn a policy that maximizes ${\eta _{\hat{\mathcal{M}}}}(\pi)$ using the datasets $\mathcal{D}_{\rm{env}} \cup \mathcal{D}_{\rm{aug}}$~\cite{mopo,chen2023offline,Yu2021}.

\subsubsection{Challenges on Data Augmentation}
\label{sec:problem}
The meticulous design for the quantity of ensemble models and the rollout horizon across diverse tasks during data augmentation, may result in increased algorithmic costs and limited applicability. 

To better investigate this challenge, we conducted an empirical analysis focusing on the relationship between learning performance, ensemble model quantity, and rollout horizons. Our study used MOPO~\cite{mopo}, one of the representative methods, for evaluation in halfcheetah and hopper environments~\cite{Todorov2012}. 
Results (Fig.~\ref{fig1}) indicate that policy performance declines sharply with one single dynamic model but improves as the ensemble size increases. 
However, the optimal ensemble size and rollout horizon differ across tasks and environments, highlighting the difficulty in finding a universal optimal value empirically. Therefore, crafting a rollout process tailored to diverse tasks and environments poses a persistent challenge, demanding the achievement of a balance between exploration and exploitation. 


\subsection{Adversarial RL}

Adversarial RL aims to find a robust policy, by posing the problem as a two-player zero sum game against adversary policy. The alternating decision-making framework facilitates efficient policy learning through adversarial interactions. At timestep $t$, two players take actions $a_{t} \sim \pi^{\theta}{(s_{t})}$, $\bar{a}_{t} \sim \pi^{\vartheta}{(\bar{s}_{t})}$, where $\theta$ is the parameter of the primary player policy and $\vartheta$ is the parameter of the second player policy. 
The primary player gets a reward $r_{t}$, and the second player gets a reward $\bar{r}_{t}$, where $\bar{r}_{t}=-r_{t}$. 
This two-player zero-sum game process efficiently facilitates alternating learning for agents.
The player seeks to maximize the 
cumulative reward: 
\begin{eqnarray}
R(\pi^{\theta}, \pi^{\vartheta})=\mathbb{E}_{s_{0}\sim \mu,a \sim \pi^{\theta}{(s)}, \bar{a} \sim \pi^{\vartheta}(\bar{s}) }\left [ { \sum_{t=0}^{\infty}} \gamma^{t} r_{t}(s_{t},a_{t},\bar{a}_{t})  \right].
\end{eqnarray}

This problem is solved by using the minimax equilibrium for this game with optimal equilibrium reward \cite{patek1997stochastic,perolat2015approximate}, learning a robust policy under this process:
\begin{eqnarray}
R^{\ast}=\min_{\pi^{\theta}}\max_{\pi^{\vartheta}}R(\pi^{\theta}, \pi^{\vartheta})    =\max_{\pi^{\theta}}\min_{\pi^{\vartheta}}R(\pi^{\theta}, \pi^{\vartheta}).
\end{eqnarray}
Hence, the goal is to find a robust agent policy $\pi$ against the adversary policy, formulated as a two-player zero-sum game:
\begin{eqnarray} 
   \pi =\argmax_{\pi^{\theta}}{\min_{\pi^{\vartheta}}{V_{\tilde{\mathcal{M}}}(\pi^{\theta},\pi^{\vartheta})}},
\end{eqnarray}
where $V_{\tilde{\mathcal{M}}}(\pi^{\theta},\pi^{\vartheta})$ is the expected value from executing two-player policies $\pi^{\theta}$ and $\pi^{\vartheta}$.

For a comprehensive analysis of this adversarial game framework, we list Algorithm~\ref{alg:mzg} below. 
The initial parameters for both players are randomly generated. In each of $I$ iterations, a two-stage alternating optimization for two players is executed. 
Firstly, for each iteration, the parameter $\vartheta$ of the second player is held constant, while the parameter $\theta$ of the primary player is optimized to maximize the reward $R$. The rollout generates trajectories based on the environment and policies by using two players for data augmentation (Lines 2-6). 
In the second step, the parameter of the primary player is then held constant. The rollout process is executed and the parameter of the second player is optimized (Lines 7-11). This alternating procedure is repeated for a total of $I$ iterations. Finally, we get two players policies (Line 13). In MORAL, we incorporate this game into model-based offline RL and the construction of adversarial players does not need to adhere to the target policy. By adopting an actor-critic learning architecture, the alternating sampling between the two players effectively reduces the computational complexity of the algorithm, providing an elegant solution to complex policy optimization challenges.

\begin{algorithm}[t]
    \caption{Adversarial RL}
    \label{alg:mzg}
    \textbf{Input}: Environment $\mathcal{E} $; Primary player policy and second player policy of $\pi^\theta$, $\pi^\vartheta$
    \\
    \textbf{Parameter}: Initialize parameters $\theta$ for $\pi^\theta$ and $\vartheta$ for $\pi^\vartheta$ randomly
    \begin{algorithmic}[1] 
        \FOR {$i=1$,$\dots$, $I$}
        \STATE $\pi^{\theta}_{i}$ $\leftarrow$  $\pi^{\theta}_{i-1}$
        \FOR{$t=1$,$\dots$, $I_{\theta}$}
        \STATE  \{ ($s^{i}_{t}$,$a^{i}_{t}$, $\Bar{s}^{i}_{t}$, $\Bar{a}^{i}_{t}$, $r^{i}_{t}$, $\Bar{r}^{i}_{t}$) \} $\leftarrow$ rollout($\mathcal{E}$, $\pi^{\theta}_{i}$, $\pi^{\vartheta}_{i}$)
        \STATE $\pi^{\theta}_{i}$ $\leftarrow$ Optimizer({($s^{i}_{t}$,$a^{i}_{t}$,$r^{i}_{t}$)},$\theta$,  $\pi^{\theta}_{i}$)
        \ENDFOR

         \STATE $\pi^{\vartheta}_{i}$ $\leftarrow$  $\pi^{\vartheta}_{i-1}$
        \FOR{$t=1$,$\dots$, $I_{\vartheta}$}
        \STATE  \{ ($s^{i}_{t}$,$a^{i}_{t}$, $\Bar{s}^{i}_{t}$, $\Bar{a}^{i}_{t}$, $r^{i}_{t}$, $\Bar{r}^{i}_{t}$) \} $\leftarrow$ rollout($\mathcal{E}$, $\pi^{\theta}_{i}$, $\pi^{\vartheta}_{i}$)
        \STATE $\pi^{\vartheta}_{i}$ $\leftarrow$ Optimizer({($\bar{s}^{i}_{t}$,$\bar{a}^{i}_{t}$,$\bar{r}^{i}_{t}$)},$\vartheta$,  $\pi^{\vartheta}_{i}$)
        \ENDFOR
        \ENDFOR
        \RETURN $\pi^{\theta}_{I}$, $\pi^{\vartheta}_{I}$
    \end{algorithmic}
\end{algorithm}

\begin{figure*}[t]
\centering
\includegraphics[width=0.95\linewidth]{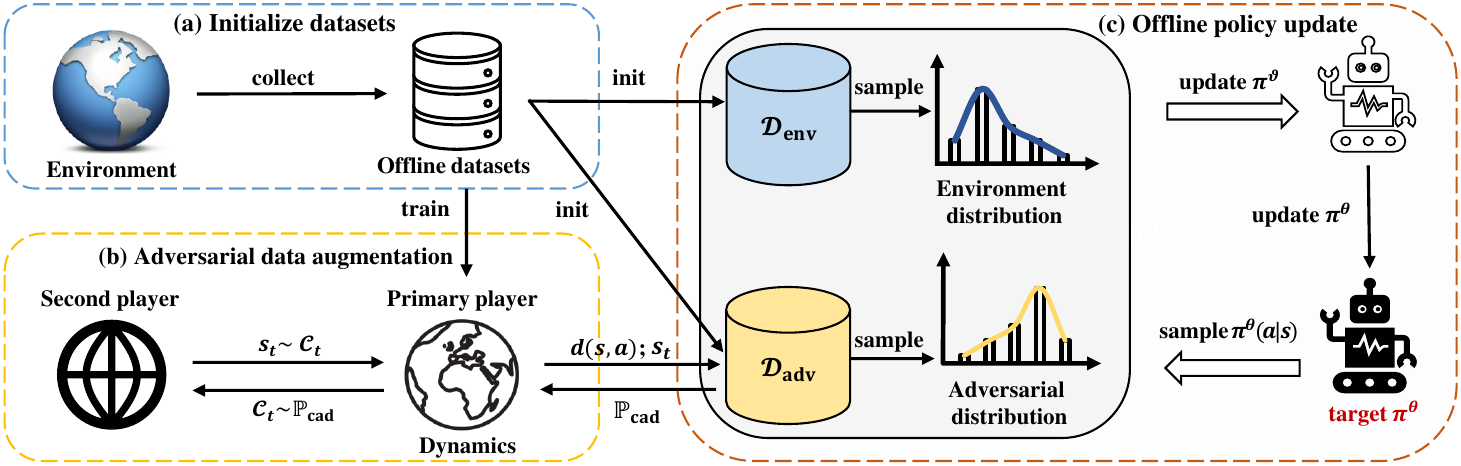}
\caption{Overall framework of MORAL. (a) Initialize offline datasets $\mathcal{D}_{\rm{env}}$, $\mathcal{D}_{\rm{adv}}$ and train ensemble models. (b) Alternating sampling with two players for data augmentation in $\mathcal{D}_{\rm{adv}}$. (c) Update offline policy with both $\mathcal{D}_{\rm{env}}$ and $\mathcal{D}_{\rm{adv}}$, and provide sample policy $\pi^\theta$ in $\mathcal{D}_{\rm{adv}}$.}
\label{moral}
\end{figure*}

\section{Proposed Method}
\label{sec:approach}

In this section, we introduce MORAL (Fig.~\ref{moral}), which advances beyond traditional methods by replacing the conventional fixed horizon rollout with an adversarial framework for adaptive and flexible data augmentation in model-based offline RL. 

Specifically, MORAL facilitates offline alternating sampling for adversarial data augmentation, utilizing ensemble models to navigate the sampling process. Additionally, a differential factor is incorporated as a regularization mechanism. MORAL not only extends the capability of the learning model but also robustly enhances policy performance. The integration of adversarial data augmentation within MORAL provides an applicable framework that improves the robustness and effectiveness of policy learning across various domains.



In this section, we first introduce how to leverage the alternating sampling for adversarial data augmentation. 
Subsequently, we give the pipeline of the policy optimization strategies in MORAL. Following this, we present the differential factor to regularize the offline policy, aiming for error minimization in extrapolations. Finally, we provide insights into the practical implementation of MORAL.



\subsection{Adversarial Data Augmentation}

In model-based offline RL, the ensemble models $\{\hat{T}^{i}\}_{i=1}^{N}$ execute the fixed horizon rollout procedure to sample trajectories for data augmentation, which necessitates a meticulous design process. 
In MORAL, this procedure has transformed into an alternating sampling process between two players without rollout horizon configuration. We incorporate this adversarial game process into offline RL, where the primary player represents the dynamic model to emit transitions, and the second player embodies the selection of MDP transition from ensemble models, as shown in Fig.~\ref{moral}. 
The primary player seeks to optimize a dependable policy for the relevant MDP in response to stochastic disruptions caused by the biased samples from the second player.

The conventional approach to identifying equilibrium policies often necessitates the greedy resolution of $M$ minimax equilibria within a zero-sum matrix game, where $M$ corresponds to the quantity of observed data points. The computational complexity of this greedy solution grows exponentially with the cardinality of the action spaces, rendering it impractical~\cite{perolat2015approximate,pinto2017robust}. 
In our approach, we only need to approximate the advantage function without the need to ascertain the equilibrium solution at every iteration. Hence, we focus on learning stationary policies $\pi^{\theta^{\ast}}$ and $\pi^{\vartheta^{\ast}}$. 
By adopting these policies, we can circumvent the need for computationally intensive optimization at each iteration. 

Specifically, we apply this adversarial framework through a concrete sampling mechanism. The primary player constructs an $N'$-sized state candidate set from ensemble model rollouts, while the secondary player selects states using a k-th minimum criterion on these transitions. Both players share the same state-action space of transition selection but with opposing objectives - the primary player aims to maximize data diversity while the secondary player introduces pessimistic bias that helps prevent policy extrapolation errors, driving the system toward a minimax equilibrium. \textbf{Through this interplay, the sampling process constructs a dataset that balances diversity with conservative estimates, enabling robust policy optimization in the offline setting.}

Both players operate on the same offline policy but employ opposing selection criteria during the alternating sampling process. The adversarial nature manifests in the sampling process rather than in policy updates, as both players use actor-critic methods to update their policies based on the collected transitions. This framework effectively leverages ensemble models while addressing the uncertainty inherent in fixed-step rollouts.



\subsection{Policy Optimization}

In MORAL, the primary player constructs a set of system dynamic transitions using candidate ensemble models. Based on these dynamic transitions, the second player selects a transition to expand offline data for policy optimization. 
The establishment of dynamic transition sets fully capitalizes on the crafted extensive ensemble models, ensuring the abundance of data. Furthermore, the adversarial game achieves robust optimization to strike a balance between return and risk in the offline policy learning process, avoiding overly conservative approaches.
This adversarial game process accomplishes alternating sampling to achieve robust policy optimization.


For the primary player, its state space $\mathcal{S}$, action space $\mathcal{A}$, and reward function remain consistent with the original MDP. During each step in the game process, the primary player takes actions, the game produces a set of system transition candidates from ensemble models denoted as $\mathcal{C}$. This set subsequently serves as the state for the second player. Formally, $\mathcal{C}$ of $(s,a)$ is generated according to the following:
\begin{eqnarray}
\label{equ:C}
\mathcal{C}_t \sim \mathbb{P}_{\mathrm{cad}} \left( \{ \hat{T}^{i}(s_{i}|s, a) \}_{i=1}^{N}  \right),
\end{eqnarray}
where $N$ represents the number of system transitions from ensemble models.  $\mathcal{C}_t$ is the set of states at time $t$ sampled according to the distribution $\mathbb{P}_{\mathrm{cad}}$, which represents a stable probability distribution used for sampling candidate states\footnote{The subscript $t$ in $\mathcal{C}_t$ indicates its temporal nature. For simplicity, the explicit time dependency is not shown in the later sections.}.  Hence, the candidate set $\mathcal{C}_t$ consists of $N'$ sampled states: $\mathcal{C}_t = \{ s_{1},s_{2}, \cdots, s_{N'} \}$. 



Based on the ensemble models, the primary player constructs a dynamic set of system transitions for the second player to choose. 
According to Eq.~\ref{equ:C}, the elements within $\mathcal{C}$ are independent and distributed samples that adhere to the game distribution. The candidate set is stochastic for each step during the game process. 
Hence, the development of this dynamic set extensively employs ensemble models, avoiding the excessively pessimistic posed by fixed model execution in long-term policy learning.

For the second player, the state is defined by the dynamic set $\mathcal{C}$. The action taken by the second player involves choosing a transition from this set:
\begin{eqnarray}
    \label{equ:asec}
    \bar{s} = \mathcal{P}(\bar{s}'|s, a^{\rm{sec}} ),
\end{eqnarray}
where $s \in \mathcal{C}$, and $a^{\rm{sec}} = \pi^{\vartheta}(s)$.
After the second player chooses the state $\bar{s}$ according to the set $\mathcal{C}$, the primary player receives $\bar{s}$ to continue the game process in MORAL. 
Through this interplay, the sampling process constructs a dataset that balances diversity with conservative estimates, enabling robust policy optimization in the offline setting. 
Hence, the cumulative discounted reward of the primary player with policy $\pi^{\theta}$ can be written as:
\begin{eqnarray}
\begin{aligned}
    V^{\pi}(s) := \underset{\mathcal{P}}{\mathbb{E}} \left[
    \mathbb{E} \left[
        \sum^{\infty}_{t=0}{\gamma^{t}r(s_t,a^{\rm{prm}}_t )|s_{0},\pi^{\theta}}
    \right]
    \right].
    \end{aligned}
\end{eqnarray}

We denote the selection operator of the action of the second player: $\min_{k}f(\mathcal{C})$, which denotes finding $k$th minimum of transition. The second player demonstrates varying degrees of aggressive disturbances to future rewards. Our goal can be written as:
\begin{eqnarray}
\hspace{-2mm}
\begin{aligned}
    Q^{\pi}(s,a) =   \underset{\mathcal{C}}{{\min}_{k}} \mathbb{E} \left[{\mathbb{E}}\left[\sum_{t=0}^{\infty}\gamma^{t}r(s_t,a_t) | s_{0}=s, a_{0}=a \right] \right].
    \end{aligned}
\end{eqnarray}

Utilizing biased sampling within this game, the entire process entails iteratively computing cumulative rewards through adversarial computation.
From the view of MDP, this behavior showcases a range of adaptability, stretching from pessimistic to optimistic when evaluating the policy $\pi$. The evaluation by the Bellman backup operator can be defined as:
\begin{eqnarray}
    \mathcal{T}^{\pi}Q(s,a)=r(s,a)+\gamma \mathbb{E}_{\mathbb{P}_{\hat{T}}^{N}}\left[ \underset{\mathcal{C}}{{\min}_{k}}  \mathbb{E}_{\pi}\left[Q(s',a')\right] \right].  
\end{eqnarray}

\subsection{Error Minimization in Extrapolations}
Due to the absence of online interactions, the difference between offline models and the real environment dynamic transition can significantly amplify the occurrence of extrapolation errors during long-term policy learning~\cite{mopo,chen2023offline}. Hence, applying MORAL to offline settings directly is not viable. We analyze the estimation errors in extrapolation and introduce the differential factor into the game to avoid the propagation of estimation errors. 

The estimation error $e_{\hat{ \mathcal{M}}}^\pi (s,a)$ concerning the genuine outcome variance between the model assumed to be optimal and the model in practical use is defined as follows: 
\begin{eqnarray}
e_{\hat{\mathcal{M}} }^{\pi} \left(s,a\right):=\mathbb{E}_{s\sim \hat{T}(s,a)}\left[V_{\hat{\mathcal{M}} }^{\pi}\left(s\right) \right] - \mathbb{E}_{s\sim T(s,a)}\left[V_{\mathcal{M}}^{\pi}\left(s\right)\right].
\label{eq4}
\end{eqnarray}

We then combine the objective of policy optimization to maximize the discounted cumulative reward. The Eq.~\ref{eq4} can be derived as follows:
\begin{eqnarray}
\eta _{\hat{\mathcal{M}} } \left(\pi \right)-\eta _{\mathcal{M}} \left(\pi \right)=\gamma \mathbb{E}_{\left(s,a\right)\sim\rho _{\hat{T}}^{\pi} }\left[e_{\hat{\mathcal{M}}}^{\pi}\left(s,a\right)\right].
\label{eq5}
\end{eqnarray}

Based on the estimation errors between models and the expected discounted return under policy $\pi$, the maximized target discounted return can be derived as follows:
\begin{eqnarray}
\begin{aligned}
\eta _{\mathcal{M}}\left(\pi \right) & =\mathbb{E}_{\left(s,a\right)\sim \rho _{\hat{T} }^{\pi } }\left[r(s,a)-\gamma e_{\hat{\mathcal{M}} }^{\pi }(s,a) \right] \\
& \ge \mathbb{E}_{\left(s,a\right)\sim \rho _{\hat{T} }^{\pi } }\left[r\left(s,a\right)-|\gamma e_{\hat{\mathcal{M}} }^{\pi }\left(s,a\right)| \right] \\
& = \mathbb{E}_{(s,a)\sim \rho _{\hat{T} }^{\pi } }\left[r\left(s,a\right)-d\left(s,a\right)\right] \\
& \ge \eta _{\hat{\mathcal{M}} }\left(\pi\right), 
\end{aligned}
\label{eq6}
\end{eqnarray}
where $\rho _{\hat{T} }^{\pi}$ denotes the discounted occupancey measure and $d(s,a)$ is the differential factor. The goal is to learn a policy that can maximize $ \mathbb{E}_{(s,a)\sim \rho _{\hat{T} }^{\pi } }[r(s,a)-d(s,a)]$.
Bootstrap ensembles have demonstrated theoretical consistency in estimating the population mean~\cite{bickel1981some} and have shown strong empirical performance in model-based RL~\cite{mopo,CaoYHCG23}. To reduce estimation errors, MORAL introduces the differential factors to the policy optimization process for regularization as follows:
\begin{eqnarray}
    d(s,a)=\max_{i\in[1,N]}\left|\left| {\textstyle \sum_{1}^{i}(s,a)}\right|\right|_{\rm{F}},
\end{eqnarray}
We employ the maximum of ensemble elements rather than the mean to prioritize conservatism and robustness, which has demonstrated effective in previous works~\cite{mopo,chen2023offline}. Finally, we combine the adversarial game framework with the differential factor to maximize the cumulative discounted reward:
\begin{eqnarray}
\begin{aligned}
    \bar{V}^{\pi}(s) := & \mathbb{E}  \underset{\mathcal{C}}{{\min}_{k}} \left[{\mathbb{E}}\left[\sum_{t=0}^{\infty}\gamma^{t} (r(s_t,a_t) 
    \right.\right.
   \\
    & \left.\left. -\alpha \max_{i\in[1,N]} || {\textstyle \sum_{1}^{i}(s,a)}||_{\rm{F}} ) | s_{0}, \pi \right] \right],
    \end{aligned}
\end{eqnarray}
where $\alpha$ is a user-chosen penalty coefficient. The balance between reward maximization and this factor serves a crucial purpose: while pursuing optimal rewards, the formulation simultaneously constrains potential extrapolation errors. From a physical perspective, this composition effectively mitigates the negative impacts of ensemble model errors by implementing a pessimistic learning approach that prevents policy divergence.

\begin{theorem} 
\label{theorem}
 (Convergence) According to the Banach fixed point theorem~\cite{smart1980fixed}, the Bellman backup operator $\mathcal{T}^{\pi}$ is a contraction mapping, ensuring the convergence of MORAL. 
    
\end{theorem}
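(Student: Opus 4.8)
The plan is to establish that the Bellman backup operator $\mathcal{T}^{\pi}$, defined by
\begin{eqnarray}
\mathcal{T}^{\pi}Q(s,a)=r(s,a)+\gamma \mathbb{E}_{\mathbb{P}_{\hat{T}}^{N}}\left[ \underset{\mathcal{C}}{{\min}_{k}}  \mathbb{E}_{\pi}\left[Q(s',a')\right] \right], \nonumber
\end{eqnarray}
is a $\gamma$-contraction in the sup-norm $\|\cdot\|_\infty$ over the space of bounded $Q$-functions, and then invoke the Banach fixed point theorem. First I would fix the complete metric space $(\mathcal{Q}, \|\cdot\|_\infty)$ where $\mathcal{Q}$ is the set of bounded real-valued functions on $\mathcal{S}\times\mathcal{A}$ (completeness under the sup-norm is standard). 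The goal is to show $\|\mathcal{T}^{\pi}Q_1 - \mathcal{T}^{\pi}Q_2\|_\infty \le \gamma \|Q_1 - Q_2\|_\infty$ for any $Q_1, Q_2 \in \mathcal{Q}$.

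The key steps proceed as follows. Since the reward term $r(s,a)$ cancels in the difference $\mathcal{T}^{\pi}Q_1 - \mathcal{T}^{\pi}Q_2$, I would bound $|\mathcal{T}^{\pi}Q_1(s,a) - \mathcal{T}^{\pi}Q_2(s,a)|$ by $\gamma$ times the absolute difference of the nested expectation-and-selection terms. The crucial observation is that both the outer expectation $\mathbb{E}_{\mathbb{P}_{\hat{T}}^{N}}$ and the inner expectation $\mathbb{E}_{\pi}$ are nonexpansive under the sup-norm, since averaging against a probability distribution cannot increase the maximum deviation. I would then handle the $\min_k$ selection operator over the candidate set $\mathcal{C}$: for any fixed ordering-by-value the $k$-th order statistic is a $1$-Lipschitz (nonexpansive) map in the sup-norm, because for each index the pointwise difference between the two $Q$-functions is bounded by $\|Q_1 - Q_2\|_\infty$, and selecting an order statistic cannot amplify this bound. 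Chaining these nonexpansive operations together yields $|\mathcal{T}^{\pi}Q_1(s,a) - \mathcal{T}^{\pi}Q_2(s,a)| \le \gamma \|Q_1 - Q_2\|_\infty$ uniformly in $(s,a)$, and taking the supremum over $(s,a)$ gives the contraction with modulus $\gamma < 1$.

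The main obstacle I anticipate is rigorously justifying the nonexpansiveness of the $k$-th minimum selection operator $\min_k$, since it is a discontinuous, nonlinear operator that permutes which candidate state is selected depending on the values of $Q$. The standard max/min backup in $Q$-learning is known to be nonexpansive, and the general claim is that any order-statistic selection inherits this property; the argument is that for order statistics one can use the inequality $|f_{(k)} - g_{(k)}| \le \max_i |f_i - g_i|$, where $f_{(k)}, g_{(k)}$ denote the $k$-th smallest values of two finite collections indexed by the same candidate set. I would verify this elementary inequality carefully, paying attention to the fact that $Q_1$ and $Q_2$ may induce different orderings on $\mathcal{C}$, which is precisely where the subtlety lies.

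Once the contraction property is established, the Banach fixed point theorem immediately guarantees that $\mathcal{T}^{\pi}$ admits a unique fixed point $Q^{\pi}$ and that iterating the operator from any initialization converges geometrically to it, which is exactly the convergence claim of the theorem.
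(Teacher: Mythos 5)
Your proposal is correct and takes essentially the same route as the paper: both show that $\mathcal{T}^{\pi}$ is a $\gamma$-contraction in the sup-norm (reward terms cancel, the nested expectations are nonexpansive, and the $k$-th minimum selection over $\mathcal{C}$ is nonexpansive) and then invoke the Banach fixed point theorem. The order-statistic inequality $|f_{(k)} - g_{(k)}| \le \max_i |f_i - g_i|$ that you flag as the subtle step is exactly what the paper isolates and proves as Lemma~\ref{lemma} via a two-sided case analysis, so no genuinely different ideas are involved.
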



\begin{proof}
Based the Banach Fixed Point Theorem~\cite{smart1980fixed}, we assert that if the Bellman operators act as compression mappings in metric space $(X, d)$, convergence of the approach is assured.

First, we define the metric space $(X,d)$. The metric $d$ is characterized using the $L-\infty$, as elucidated in~\cite{bellemare2017distributional}:
\begin{eqnarray}
    \left|\left|F\right|\right|_{\infty}=\max_{i\in\left[0,|F|\right]}\left|F_{i}\right|.
\end{eqnarray}

Next, we introduce two independent Q-functions, $Q_{1}(s,a)$ and $Q_{2}(s,a)$, to facilitate the proof of Theorem~\ref{theorem}. The analysis unfolds as follows:
\begin{eqnarray}
\hspace{-4.7mm}
\fontsize{8.5}{1}
\begin{aligned}
    & ||\mathcal{T}^{\pi}Q_{1} - \mathcal{T}^{\pi}Q_{2} ||_{\infty} 
    \\
    & = \gamma\max_{s,a} \left|\mathbb{E}_{\mathbb{P}_{\hat{T}}^{N}}\left[ \underset{\mathcal{C}}{{\min}_{k}}  \mathbb{E}_{\pi}\left[Q_{1}(s',a')\right] 
    \right] - \mathbb{E}_{\mathbb{P}_{\hat{T}}^{N}}\left[ \underset{\mathcal{C}}{{\min}_{k}}  \mathbb{E}_{\pi}\left[Q_{2}{(s',a')}\right]
    \right]
    \right|
    \\ 
    & = \gamma\max_{s,a} \left|\mathbb{E}_{\mathbb{P}_{\hat{T}}^{N}}\left[ \underset{\mathcal{C}}{{\min}_{k}}  \mathbb{E}_{\pi}\left[Q_{1}(s',a')\right] - \underset{\mathcal{C}}{{\min}_{k}}  \mathbb{E}_{\pi}\left[Q_{2}(s',a')\right]
    \right] \right|
    \\
     & \le \gamma\max_{s,a} \left(\mathbb{E}_{\mathbb{P}_{\hat{T}}^{N}}\left| \underset{\mathcal{C}}{{\min}_{k}}  \mathbb{E}_{\pi}\left[Q_{1}(s',a')\right] - \underset{\mathcal{C}}{{\min}_{k}}  \mathbb{E}_{\pi}\left[Q_{2}(s',a')\right]
        \right| \right)
    \\
    & \le \gamma\max_{s,a} \left(\mathbb{E}_{\mathbb{P}_{\hat{T}}^{N}}\left[ \underset{\mathcal{C}}{\max} \left|  \mathbb{E}_{\pi}\left[Q_{1}(s',a') - Q_{2}(s',a')
        \right] \right| \right] \right)
    \\
    & \le \gamma\max_{s,a} \left(\mathbb{E}_{\mathbb{P}_{\hat{T}}^{N}} || Q_{1} - Q_{2} ||_{\infty} \right)
    \\
    & \le \gamma||Q_{1}-Q_{2}||_{\infty},
    \end{aligned}
    \label{long}
\end{eqnarray}
where $\gamma \in [0,1]$. For any state $(s,a)$, we can derive:

\begin{equation}
||\mathcal{T}^{\pi}Q_{1} - \mathcal{T}^{\pi}Q_{2} ||_{\infty} \le \gamma ||Q_{1} - Q_{2}||_{\infty}.
\end{equation}

Therefore, the Bellman backup operator $\mathcal{T}^{\pi}$ acts as a compression map in $(\mathbb{R}^{|S|},L_{\infty})$. By the Banach Fixed Point Theorem~\cite{smart1980fixed}, this implies that $\mathcal{T}^{\pi}$ is a contraction mapping, which ensures policy convergence. Consequently, the convergence of the MORAL is established.
\end{proof}

\begin{lemma}
\label{lemma}
    The inequality relationship in Eq.~\ref{long} is denoted as follows:
\begin{eqnarray*}
\begin{aligned}
    \left| 
    \underset{\mathcal{C}}{{\min}_{k}} \mathbb{E}_{\pi}\left[Q_{1}(s',a')\right] - \underset{\mathcal{C}}{{\min}_{k}}  \mathbb{E}_{\pi}\left[Q_{2}(s',a')\right]
        \right| 
        \\
        \le 
        \underset{\mathcal{C}}{\max} 
        \left|  
        \mathbb{E}_{\pi}\left[Q_{1}(s',a') - Q_{2}(s',a')
        \right] \right|.      
\end{aligned}
\end{eqnarray*}

\end{lemma}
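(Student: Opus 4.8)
The plan is to recognize the operator $\min_k^{\mathcal{C}}$ as the $k$-th order statistic of a finite collection of real numbers, and then to exploit two elementary, purely algebraic properties of order statistics: monotonicity and translation equivariance. To set up, I would enumerate the candidate set as $\mathcal{C} = \{s_1, \dots, s_{N'}\}$ and, for each index $j$, define the scalars $f_j := \mathbb{E}_{\pi}[Q_1(s_j', a')]$ and $g_j := \mathbb{E}_{\pi}[Q_2(s_j', a')]$. With this notation the left-hand side of the lemma becomes $|\,\mathrm{ord}_k(f) - \mathrm{ord}_k(g)\,|$, where $\mathrm{ord}_k(\cdot)$ denotes the $k$-th smallest entry of its vector argument, and the right-hand side is exactly $\max_j |f_j - g_j|$. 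The entire claim is therefore a statement that $\mathrm{ord}_k$ is $1$-Lipschitz with respect to the $L_\infty$ norm on vectors.

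First I would set $\epsilon := \max_j |f_j - g_j|$ and record the componentwise two-sided bounds $g_j - \epsilon \le f_j \le g_j + \epsilon$ for every $j$. The crux is then the following pair of properties of $\mathrm{ord}_k$: (i) monotonicity, i.e. if $a_j \le b_j$ for all $j$ then $\mathrm{ord}_k(a) \le \mathrm{ord}_k(b)$; and (ii) translation equivariance, i.e. $\mathrm{ord}_k(a + c\mathbf{1}) = \mathrm{ord}_k(a) + c$ for any constant $c$, where $\mathbf{1}$ adds $c$ to each coordinate. Property (ii) is immediate, since adding the same constant to all entries preserves their relative ordering and shifts every entry — hence the $k$-th smallest — by exactly $c$. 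For property (i) I would use the threshold characterization $\mathrm{ord}_k(a) = \min\{t : |\{j : a_j \le t\}| \ge k\}$ and observe that the count $|\{j : b_j \le t\}| \le |\{j : a_j \le t\}|$ at every threshold $t$ when $a_j \le b_j$ for all $j$, so the minimal threshold attaining count $k$ can only increase, giving $\mathrm{ord}_k(a) \le \mathrm{ord}_k(b)$.

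Combining the pieces, I would apply (i) followed by (ii) to the bound $f_j \le g_j + \epsilon$ to obtain $\mathrm{ord}_k(f) \le \mathrm{ord}_k(g + \epsilon\mathbf{1}) = \mathrm{ord}_k(g) + \epsilon$, and symmetrically $\mathrm{ord}_k(g) \le \mathrm{ord}_k(f) + \epsilon$; together these yield $|\mathrm{ord}_k(f) - \mathrm{ord}_k(g)| \le \epsilon = \max_j |f_j - g_j|$, which is precisely the claimed inequality. The one step that requires genuine care — and the main obstacle — is the clean justification of monotonicity (i) for the $k$-th order statistic; everything else is bookkeeping. I would lean on the threshold/counting characterization above rather than a direct argument on sorted orderings, since that formulation makes monotonicity transparent and automatically handles ties and repeated values without any case analysis.
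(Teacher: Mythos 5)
Your proof is correct, but it takes a genuinely different route from the paper's. The paper argues by a two-case sandwich: it bounds the difference of $k$-th minima below via ${\min}_{\mathcal{C}}\,(f-g) \ge -\max_{\mathcal{C}}|f-g|$ (Case~1) and above via $\max_{\mathcal{C}}(f-g) \le \max_{\mathcal{C}}|f-g|$ (Case~2), then combines the two one-sided bounds; crucially, the first inequality in each case --- that the difference of $k$-th minima is controlled by the minimum (resp.\ maximum) of the pointwise differences --- is asserted without justification. You instead isolate exactly the structural facts that make such assertions true: monotonicity and translation equivariance of the $k$-th order statistic $\mathrm{ord}_k$, with monotonicity proved cleanly via the threshold characterization $\mathrm{ord}_k(a)=\min\{t : |\{j : a_j\le t\}|\ge k\}$, from which the $1$-Lipschitz property in $L_\infty$ follows by the standard $\pm\epsilon$ shift with $\epsilon=\max_j|f_j-g_j|$. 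What your approach buys is rigor at precisely the point the paper glosses over (and the threshold formulation handles ties and repeated values with no case analysis); in fact your two properties subsume the paper's unproved steps, since $f_j \ge g_j + \min_i(f_i-g_i)$ for all $j$ yields $\mathrm{ord}_k(f)\ge \mathrm{ord}_k(g)+\min_i(f_i-g_i)$, and symmetrically for the upper bound. What the paper's formulation buys, once justified, is the marginally sharper two-sided estimate $\min_{\mathcal{C}}(f-g)\le \mathrm{ord}_k(f)-\mathrm{ord}_k(g)\le \max_{\mathcal{C}}(f-g)$, though for the lemma as stated both routes deliver the same conclusion.
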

\begin{proof}
The proof is conducted by considering two cases.

\textbf{Case 1:}
\begin{eqnarray}
\begin{aligned}
     & \underset{\mathcal{C}}{{\min}_{k}} \mathbb{E}_{\pi}\left[Q_{1}(s',a')\right] - \underset{\mathcal{C}}{{\min}_{k}}  \mathbb{E}_{\pi}\left[Q_{2}(s',a')\right]
     \\
     & \ge \underset{\mathcal{C}}{{\min}}(\mathbb{E}_{\pi}[Q_{1}(s',a')-Q_{2}(s',a')])
     \\
     & \ge \underset{\mathcal{C}}{{\min}}(-\left|  
        \mathbb{E}_{\pi}\left[Q_{1}(s',a') - Q_{2}(s',a')
        \right] \right|)
     \\
     & = -\underset{\mathcal{C}}{\max} 
        \left|  
        \mathbb{E}_{\pi}\left[Q_{1}(s',a') - Q_{2}(s',a')
        \right] \right|.  
    \end{aligned}
    \label{eq2}
\end{eqnarray}

\textbf{Case 2:}
\begin{eqnarray}
\begin{aligned}
     & \underset{\mathcal{C}}{{\min}_{k}} \mathbb{E}_{\pi}\left[Q_{1}(s',a')\right] - \underset{\mathcal{C}}{{\min}_{k}}  \mathbb{E}_{\pi}\left[Q_{2}(s',a')\right]
     \\
     & \le \underset{\mathcal{C}}{\max} 
        \left(  
        \mathbb{E}_{\pi}\left[Q_{1}(s',a') - Q_{2}(s',a')
        \right] \right)
        \\
    & \le \underset{\mathcal{C}}{\max} 
        \left|  
        \mathbb{E}_{\pi}\left[Q_{1}(s',a') - Q_{2}(s',a')
        \right] \right|. 
    \end{aligned}
    \label{eq3}
\end{eqnarray}

Combining these two cases of Eq.~\ref{eq2} and Eq.~\ref{eq3}, we can obtain the Lemma~\ref{lemma}. Based on Lemma~\ref{lemma}, we can prove the inequality relationship in Eq.~\ref{long} of Theorem~\ref{theorem}.
\end{proof}

\subsection{Practical Implementation}
\label{sub:practical}
We describe the practical implementation of MORAL driven by abovementioned analysis. Algorithm \ref{alg:algorithm} of MORAL is listed below. 
MORAL replaces the fixed horizon rollout process with an alternating sampling by adversarial data augmentation. Continuous adversarial one-step sampling is executed by dynamically updating adversarial dataset. Offline policy optimization is accomplished through an actor-critic framework. 
Ensemble models are trained on the offline datasets through a supervised learning mode, which efficiently mines state-action transitions from these offline datasets. 

We employ a neural network to model environment dynamics, generating a Gaussian distribution that predicts the next state and associated reward: $\hat{T}_{ \phi}(s_{t+1}|s_{t},a_{t})=\mathcal{N}(\mu_{\phi}(s_{t},a_{t}),{\textstyle \sum_{\phi}(s_{t},a_{t})})$. Our approach involves training an extensive ensemble of $N$ dynamic models: $\{\hat{T}^{i}_{\phi}=\mathcal{N}(\mu^{i}_{\phi},{\textstyle \sum^{i}_{\phi}})\}^{N}_{i=1}$, with each model independently trained using maximum likelihood estimation (Line 1).
After obtaining ensemble models, we sample states from the dataset for the initialization of the game and initial parameters of actor-critic networks for two players randomly (Lines 2-4). For each epoch, the model is rolled out to explore based on the initialized state. Within the exploration phase, we implement biased sampling using an adversarial game mechanism. 
We choose the state from $\mathcal{C}$ and add a new record to the $\mathcal{D}_{\mathrm{adv}}$. When the states are terminal, MORAL executes random sampling from $\mathcal{D}_{\mathrm{env}}$ to replace them. 
The differential factor is introduced through a modified reward normalization process. We update the policy by minimizing the Bellman residual of both players in actor-critic mode. The algorithm alternates between executing adversarial data augmentation and updating the function approximators (Lines 5-10). Finally, we obtain the optimized target policy $\pi^{\theta}$ (Line 11). 


\begin{algorithm}[]
    \caption{Model-Based Offline Reinforcement Learning with Adversarial Data Augmentation}
    \label{alg:algorithm}
    \textbf{Input}: Dataset $\mathcal{D}_{\rm{env}}$, size of ensemble models $N$, epoch length $H$, adversarial data $\mathcal{D}_{\rm{adv}}$
    \begin{algorithmic}[1]
     \STATE Train an ensemble of $N$ dynamic models based on the dataset $\mathcal{D}_{\rm{env}}$, represented as $\{\hat{T}^{i}_{\phi}=\mathcal{N}(\mu^{i}_{\phi},{\textstyle \sum^{i}_{\phi}})\}^{N}_{i=1}$\
     \STATE Initialize critic network $V^{\omega'}$ and actor network $\pi^{\vartheta}$ with random parameters $\omega'$, $\vartheta$
     \STATE Initialize target critic network $V^{\omega}$ and target actor network $\pi^{\theta}$ with parameters $\omega \gets \omega'$, $\theta \gets \vartheta$\
     \STATE Sample states from $\mathcal{D}_{\rm{env}}$  for the initialization of adversarial data $\mathcal{D}_{\rm{adv}}$ 
     \FOR {$epoch= 1$,$\cdots$, $H$} 
     \STATE  \textbf{Primary player:} Sample actions from $\pi^{\theta }{(a|s)}$ with state from $\mathcal{D}_{\rm{adv}}$ and construct candidate transition set $\{s'_{i}\}_{i=1}^{|\mathcal{C}|}$ according to Eq.~\ref{equ:C}
     \STATE Update $\omega'$ and $\vartheta$ with a batch of transitions from $\{(s_{i},a_{i},s'_{i})\}_{i=1}^{|\mathcal{D}_{\rm{adv}}|}$ and $\mathcal{D}_{\rm{env}}$
     \STATE \textbf{Second player:} Choose state from $\mathcal{C}$ according to Eq.~\ref{equ:asec} and update $\mathcal{D}_{\rm{adv}}$
     \STATE Update parameters of target critic network $\omega$ and target actor network $\theta$ with $\omega'$ and $\vartheta$
     \ENDFOR
     \RETURN $\pi^{\theta}$
    \end{algorithmic}
\end{algorithm}


\section{Experiments}
\label{sec:experiments}
In this section, we aim to answer the following questions: (i) How does MORAL perform compared to model-based offline RL methods on the standard offline benchmark across diverse tasks? 
(ii) How does this adversarial game-driven data augmentation affect the policy performance?
(iii) Does MORAL demonstrate robust performance throughout the learning process?
(iv) What are the effects of the hyperparameters in MORAL? 
(v) How to analyze the important components in MORAL through ablation studies?

\subsection{Experimental Datasets and Settings}
We use the simulated environment of MuJoCo~\cite{Todorov2012} and D4RL datasets~\cite{Fu2020} for experiments. 
We select three widely used MuJoCo environments in offline RL for our experiments~\cite{mopo,chen2023offline}: halfcheetah, hopper, and walker2d. Each environment encompasses five distinct pre-collected datasets. 
These datasets are generated as follows:
 \begin{itemize}
    \item \textbf{random}: Roll out a random policy for 1M steps as the random dataset.
    \item \textbf{medium}: Train the policy using SAC~\cite{Haarnoja2018}, then roll it out for 1M steps to collect the transitions as the medium dataset.
    \item \textbf{mixed}: SAC is used to train the policy until a certain performance threshold is reached and roll out to construct the mixed dataset.
    \item \textbf{med-expert}: Combine 1M push samples from fully trained policy with another 1M samples from partially trained policy as medium-expert (med-expert) dataset.
    \item \textbf{expert}: Utilize 1M push samples from the fully trained policy as the expert dataset.
\end{itemize}

Within all domains, a corpus of 100 ensemble models is trained on the offline datasets. Each model is parameterized through a 4-layer feed-forward neural network featuring 256 hidden units. The discount factor is set to 0.99, $\alpha=0.5$, and the epoch length is uniformly established at 1000. The sampling hyperparameters $N'=10$ and $k=2$. All experiments of this approach are implemented on 2 Intel(R) Xeon(R) Gold 6444Y and 4 NVIDIA RTX A6000 GPUs. 



\begin{table*}[t]
 \caption{Performance comparison on the D4RL datasets. Each score is the normalized score~\cite{Fu2020} over 12 random seeds. We evaluate PMDB with public codes and other values were taken from papers. We bold the highest scores and underline the sub-optimal scores. ‘$\pm$’ is the standard deviation. $\bullet$ indicates MORAL is statistically superior to PMDB (pairwise \textit{t}-test at $95\%$ confidence interval)}. 
\renewcommand{\arraystretch}{1.1}
\setlength{\tabcolsep}{3.7pt} 
    \centering 
    \begin{tabular}{ccccccccccc}
    \toprule
        Environment & Dataset & MORAL & PMDB~\cite{pmdb}
        & PBRL~\cite{pbrl} & MOPO~\cite{mopo} & TD3+BC~\cite{td3bc} & ARMOR~\cite{armor} & PBRS~\cite{pbrs} & CROP~\cite{crop} & RAMBO~\cite{rambo} 
        \\ \hline
        HalfCheetah & random & 35.6$\pm$0.6 & 35.1$\pm$0.2 & 11.0$\pm$5.8 & \underline{35.9}  & 10.2  & - & - & 33.3$\pm$2.8 & \textbf{40.0$\pm$2.3} 
        \\
        HalfCheetah & medium & \underline{76.3$\pm$0.8} & 75.6$\pm$0.3$\bullet$   & 57.9$\pm$1.5 & 42.3 & 42.8 & 54.2$\pm$2.4 & 58.2$\pm$0.5 & 68.1$\pm$0.8 & \textbf{77.6$\pm$1.5} 
        \\ 
        HalfCheetah & mixed & \textbf{70.0$\pm$0.4} & \underline{69.7$\pm$1.1}  & 45.1$\pm$8.0 & 53.1 & 43.3 & 50.5$\pm$0.9 & 49.4$\pm$0.2 & 64.9$\pm$1.1 & 68.9$\pm$2.3
        \\ 
        HalfCheetah & med-expert & \textbf{109.2$\pm$2.4} & 95.4$\pm$1.8$\bullet$ & 92.3$\pm$1.1 & 63.3 & \underline{95.9} & 93.5$\pm$0.5 & 66.5$\pm$7.8 & 91.1$\pm$1.1 & 93.7$\pm$10.5  
        \\
         HalfCheetah & expert & \underline{102.9$\pm$0.7} & \textbf{104.7$\pm$1.5} & 92.4$\pm$1.7 & 81.3 & 96.7 & 93.9 & - & - & - 
        \\ \hline
        Hopper & random & \textbf{35.7$\pm$9.2}  & 25.6$\pm$8.2$\bullet$ & \underline{26.8$\pm$9.3} & 16.7 & 11.0 & - & - & 19.1$\pm$11.0 & 21.6$\pm$8.0 
        \\ 
        Hopper & medium & \textbf{107.4$\pm$0.8} & \underline{106.8$\pm$0.2} & 75.3$\pm$31.2 & 28.0 & 98.5 & 101.4$\pm$0.3 & 75.4$\pm$1.8 & 100.6$\pm$3.2 & 92.8$\pm$6.0 
        \\ 
        Hopper & mixed & \textbf{105.2$\pm$0.6} & \underline{104.5$\pm$1.8} & 100.6$\pm$1.0 & 67.5 & 31.4  & 97.1$\pm$4.8  & 102.3$\pm$0.7 
        & 93.0$\pm$2.2 & 96.6$\pm$7.0 
        \\ 
        Hopper & med-expert & \underline{112.0$\pm$1.1} & 111.8$\pm$0.6  & 110.8$\pm$0.8 & 23.7 & \textbf{112.2} & 103.4$\pm$5.9 &109.4$\pm$1.7 & 96.5$\pm$10.2 & 83.3$\pm$9.1
        \\ 
        Hopper & expert & \textbf{113.2$\pm$1.2} & \underline{111.7$\pm$0.3}$\bullet$ & 110.5$\pm$0.4 & 62.5 & 107.8 & 111.6 & - & - & - 
        \\ \hline
        Walker2d & random & \textbf{21.8$\pm$0.2} & 21.5$\pm$0.4 & 8.1$\pm$4.4 & 13.6 & 1.4  & - & - &  \underline{21.6$\pm$0.8} & 11.5$\pm$10.5 
        \\ 
        Walker2d & medium & \textbf{95.2$\pm$1.7} & 86.9$\pm$1.9$\bullet$ & 89.6$\pm$0.7 & 11.8 & 79.7  & \underline{90.7$\pm$4.4} & 88.5$\pm$0.8
        & 89.7$\pm$0.8 & 86.9$\pm$2.7 
        \\ 
        Walker2d & mixed & 77.5$\pm$2.0 & 76.1$\pm$2.8$\bullet$  & 77.7$\pm$14.58 & 39.0 & 25.2 & 85.6$\pm$7.5 & \underline{88.9$\pm$1.0}  & \textbf{89.7$\pm$0.7} & 85.0$\pm$15.0  
        \\ 
        Walker2d & med-expert & \textbf{113.6$\pm$1.5} & 111.9$\pm$0.2$\bullet$ & 110.1$\pm$0.3 & 44.6 & 101.1 & \underline{112.2$\pm$1.7} & 111.7$\pm$0.4 & 109.3$\pm$0.3 & 68.3$\pm$20.6 
        \\
        Walker2d & expert & \textbf{118.8$\pm$0.7} & \underline{115.9$\pm$1.9}$\bullet$ & 108.3$\pm$0.3 & 62.4 & 110.2 & 108.1 & - & - & -  
        \\  \hline
        \multicolumn{2}{c}{\textbf{Average}} &  \textbf{86.3} & \underline{83.5} & 74.4 & 43.0 & 64.5  & - & - & - & - 
        \\
        \bottomrule
    \end{tabular}
    \label{tab:comp}
\end{table*}

\subsection{Compared Methods}
We compare MORAL against five model-based and three model-free offline RL algorithms. Comparative methods are shown as follows:
\begin{itemize}
    \item ARMOR~\cite{armor}: A model-based offline RL method by adversarial optimizing dynamic models.
    \item CROP~\cite{crop}: A model-based offline RL method with conservative reward.
    \item PBRS~\cite{pbrs}: An adaptive reward shifting method based on behavior proximity for offline RL.
    \item RAMBO~\cite{rambo}: A robust adversarial model-based offline RL method.
    \item PMDB~\cite{pmdb}: A model-based offline RL method with pessimism-modulated dynamics belief.
    \item PBRL~\cite{pbrl}:  A purely uncertainty-driven model-free offline RL algorithm.
    \item TD3+BC~\cite{td3bc}: A model-free offline RL method with minimal changes.
    \item MOPO~\cite{mopo}: A model-based offline RL method with the uncertainty return penalty.
\end{itemize}

\begin{figure}[]
  \centering
\includegraphics[width=0.49\linewidth]{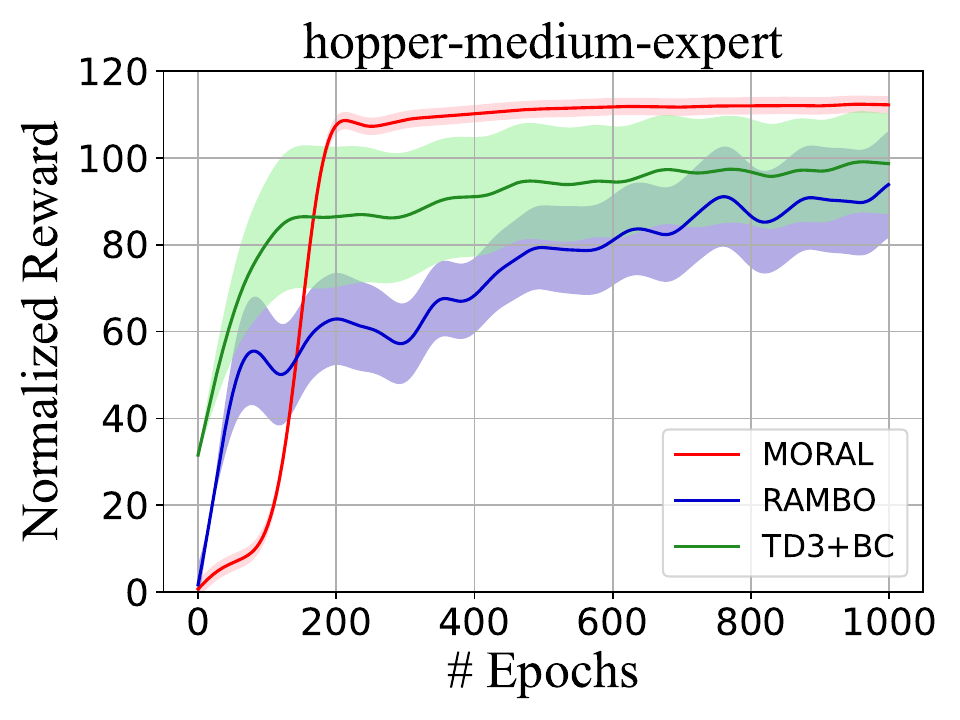}
\includegraphics[width=0.49\linewidth]{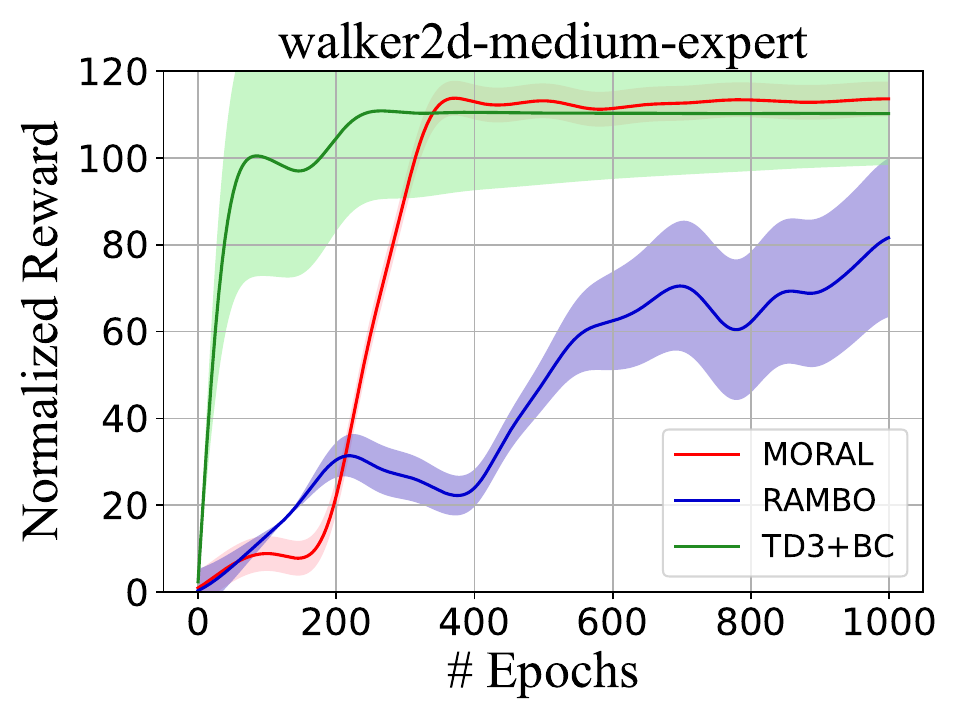}
  \caption{Learning curves on walker2d-medium-expert and hopper-medium-expert datasets. Each number is the normalized reward during training, averaged over $12$ random seeds and the shadow is the standard error.}
  \label{fig:3}
\end{figure}

\begin{table}[h]
\renewcommand{\arraystretch}{1.1}
\setlength{\tabcolsep}{4.3pt} 
 \caption{Normalized scores of three expert tasks. All results are averaged over $12$ random seeds. We bold the highest scores across all methods.}
    \centering
    

      \begin{tabular}{ccccc}
      \toprule
        Environment & MORAL & PMDB\cite{pmdb} & ARMOR\cite{armor} & TD3+BC\cite{td3bc} 
        \\ 
        \hline
        HalfCheetah & 102.9$\pm$0.7 & \textbf{104.7$\pm$1.5} & 93.9 & 96.7
        \\ 
        Hopper & \textbf{113.2$\pm$1.2} & 111.7$\pm$0.3 & 111.6 & 107.8
        \\ 
        Walker2d & \textbf{118.6$\pm$0.7} & 115.9$\pm$1.9 & 108.1 & 110.2
        \\ 
        \hline
        \textbf{Average} & \textbf{111.6} & 110.7 & 104.5 & 104.9 
        \\
        \bottomrule 
    \end{tabular}
   
    \label{tab:2}
\end{table}

\begin{figure}[h]
  \centering
\includegraphics[width=0.32\linewidth]{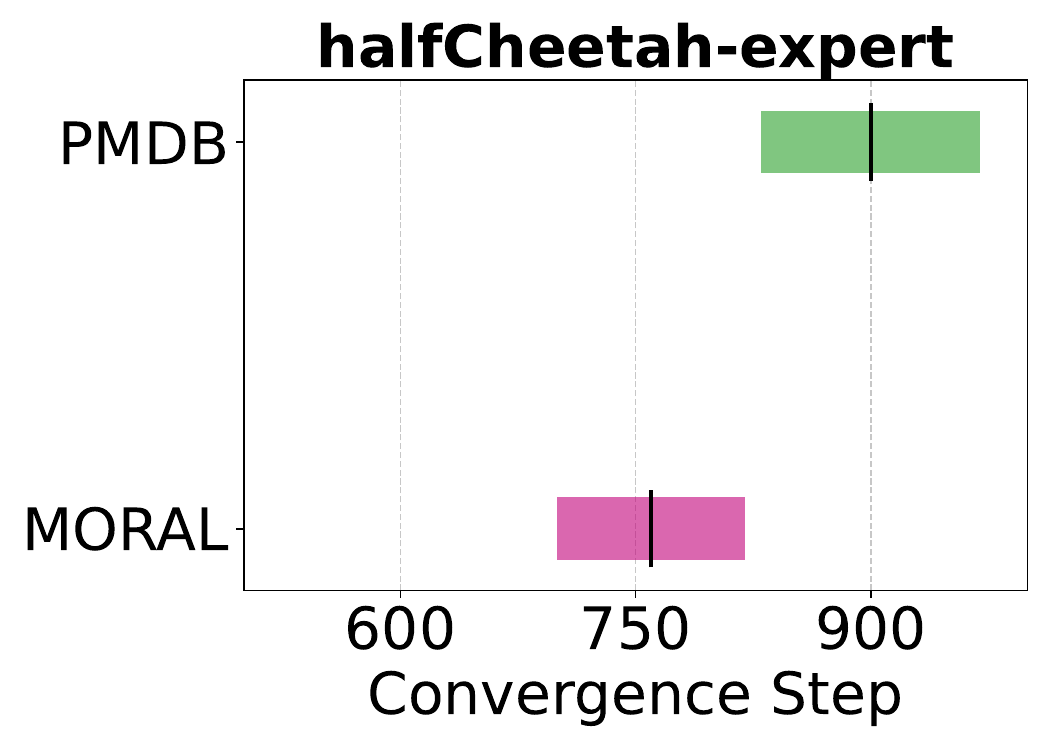}
\includegraphics[width=0.32\linewidth]{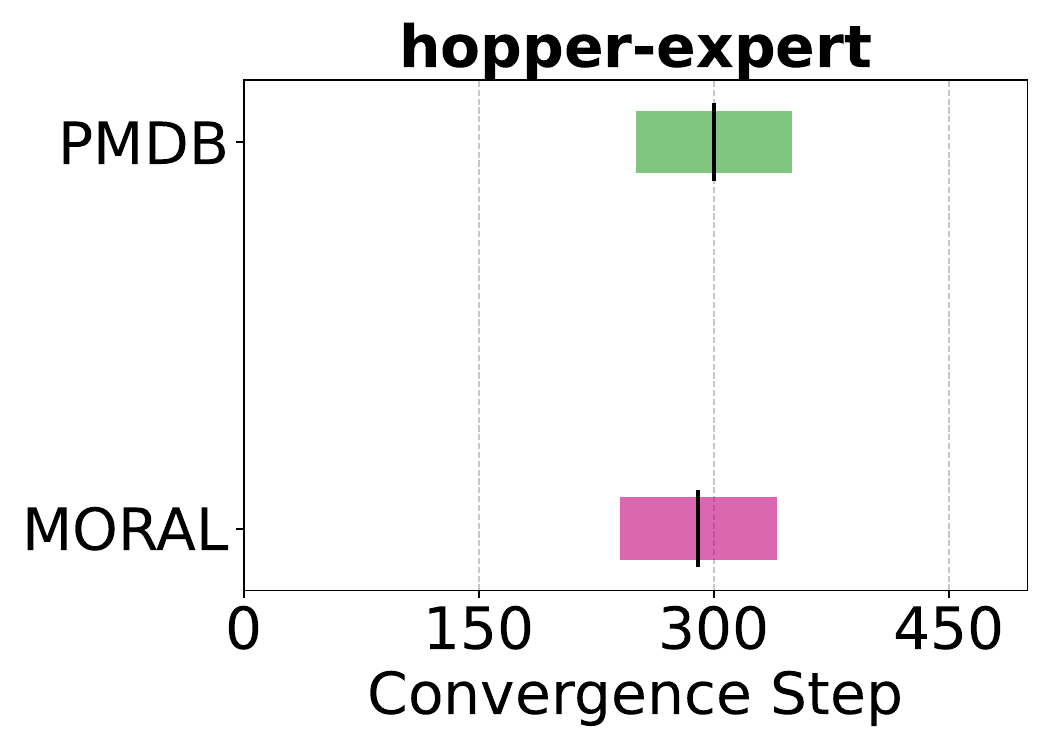}
\includegraphics[width=0.32\linewidth]{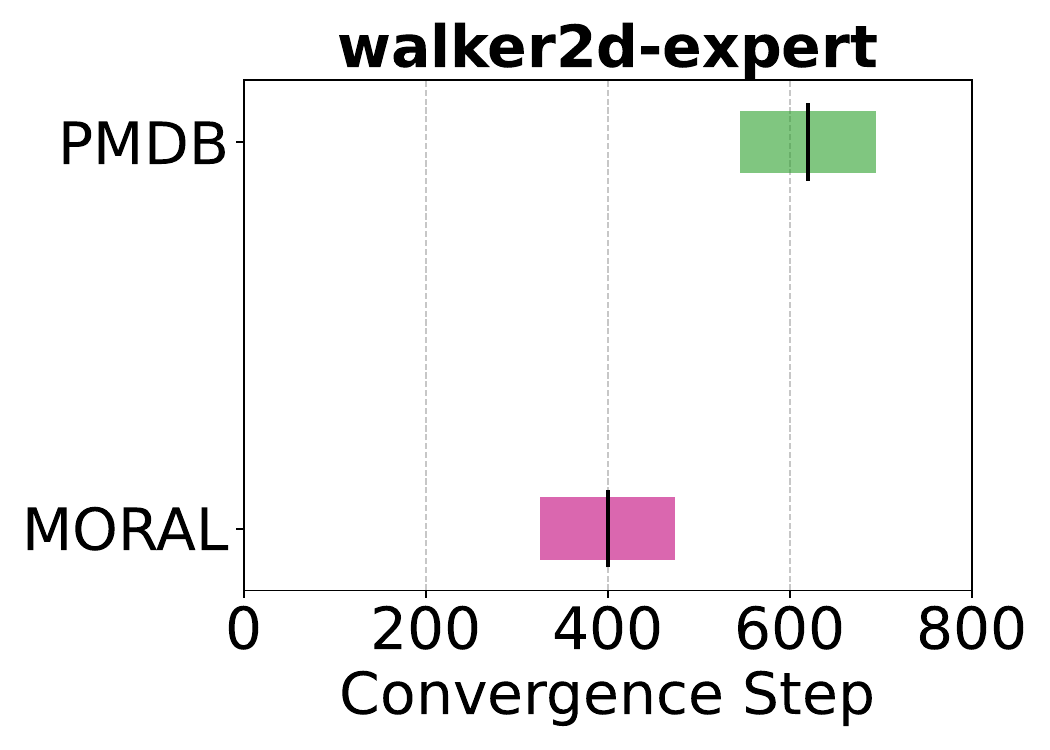}
  \caption{Convergence step of PMDB and MORAL in $3$ expert tasks. The shaded regions are the standard deviation of each method.}
  \label{fig:conv}
\end{figure}

\begin{figure*}[]
  \centering
   \includegraphics[width=0.19\linewidth]{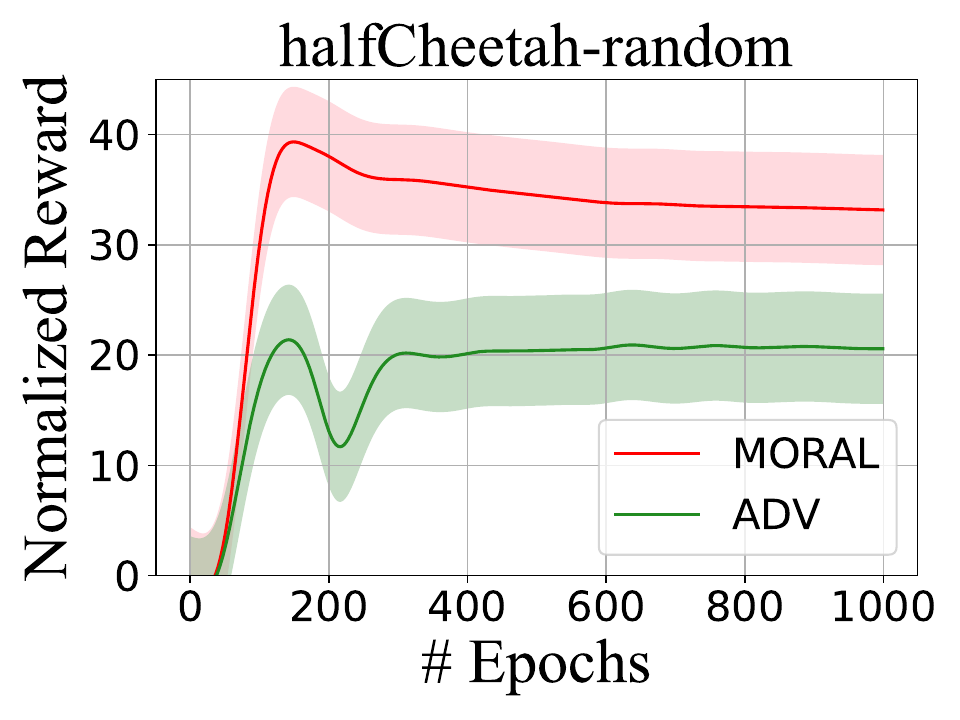}
  \includegraphics[width=0.19\linewidth]{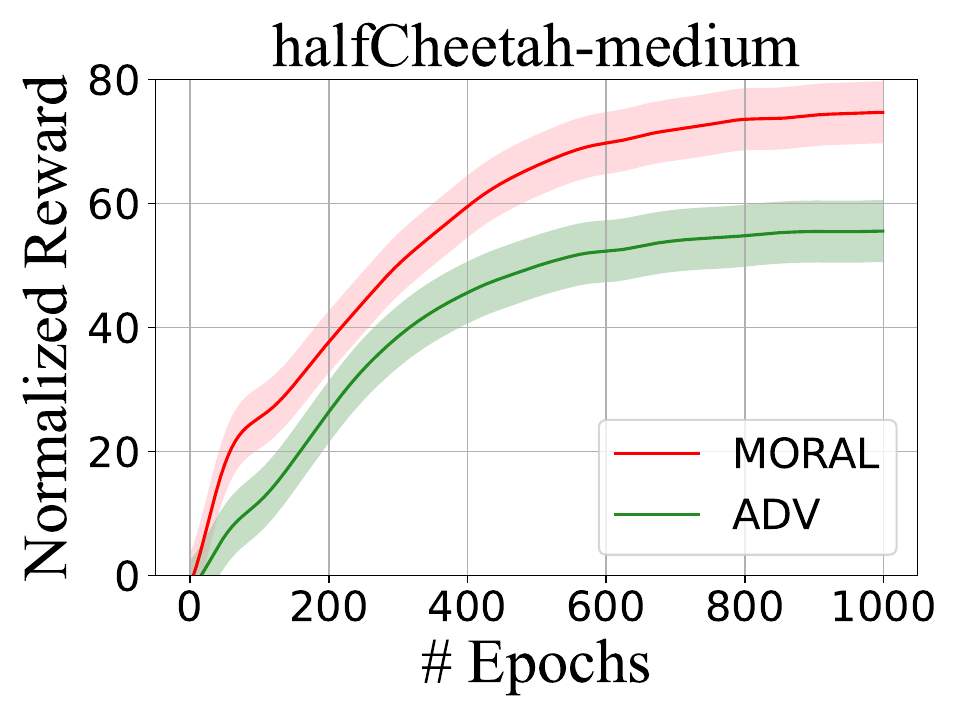}
  \includegraphics[width=0.19\linewidth]{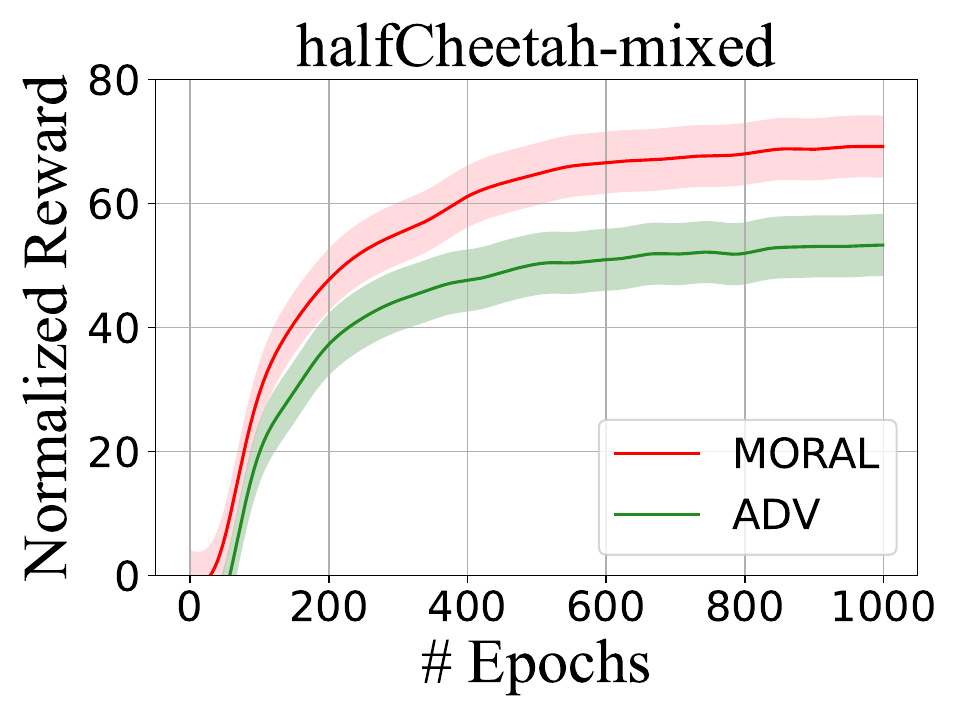}
   \includegraphics[width=0.19\linewidth]{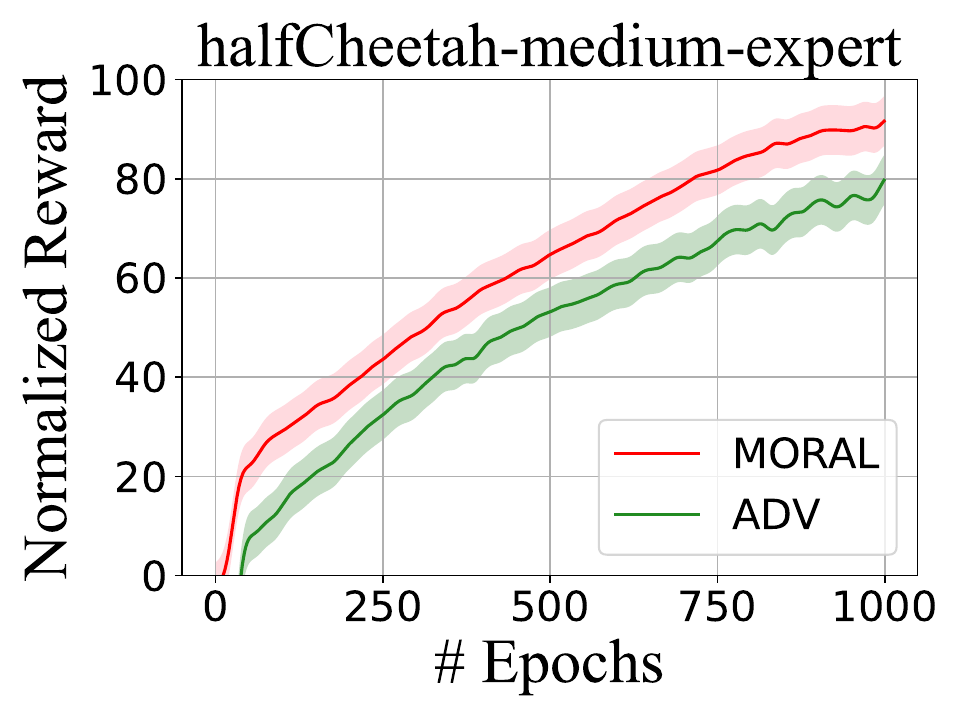}
  \includegraphics[width=0.19\linewidth]{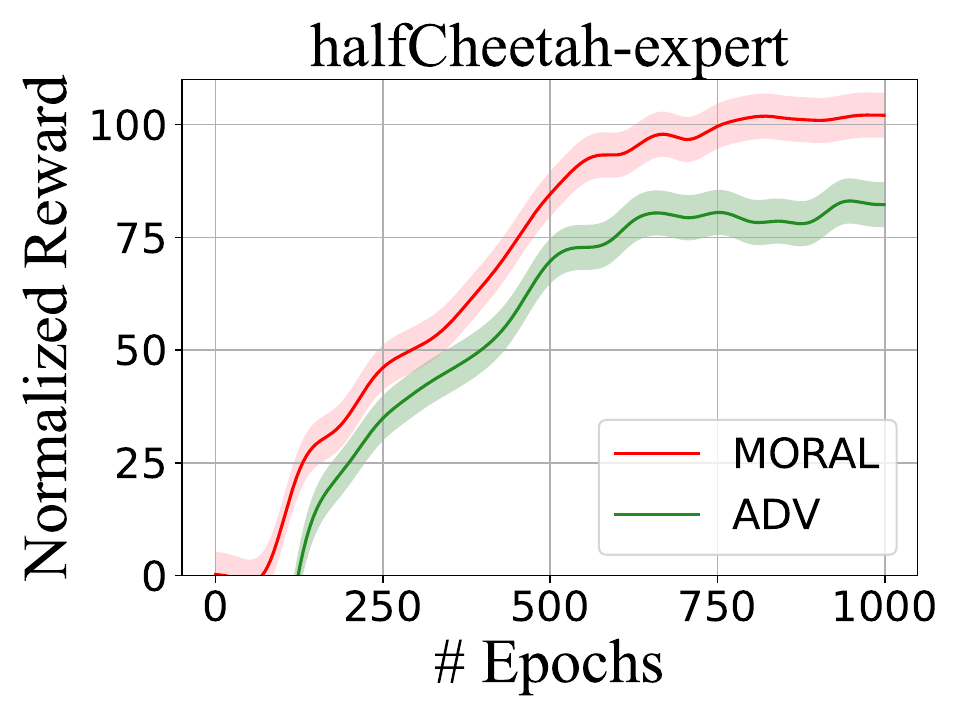}
  \vspace{2mm}
  \includegraphics[width=0.19\linewidth]{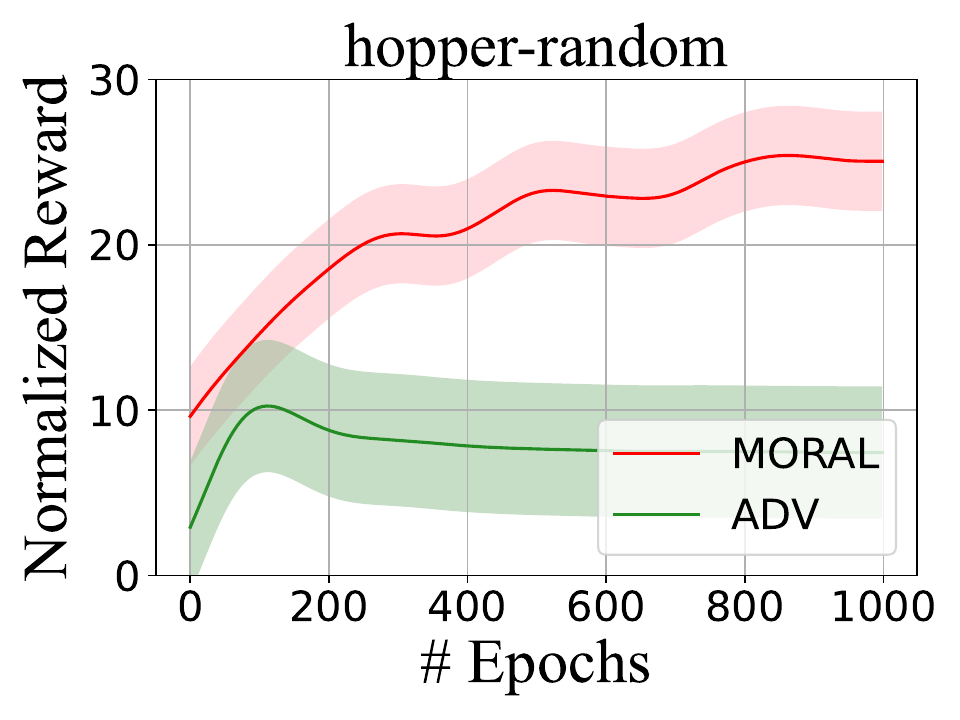}
\includegraphics[width=0.19\linewidth]{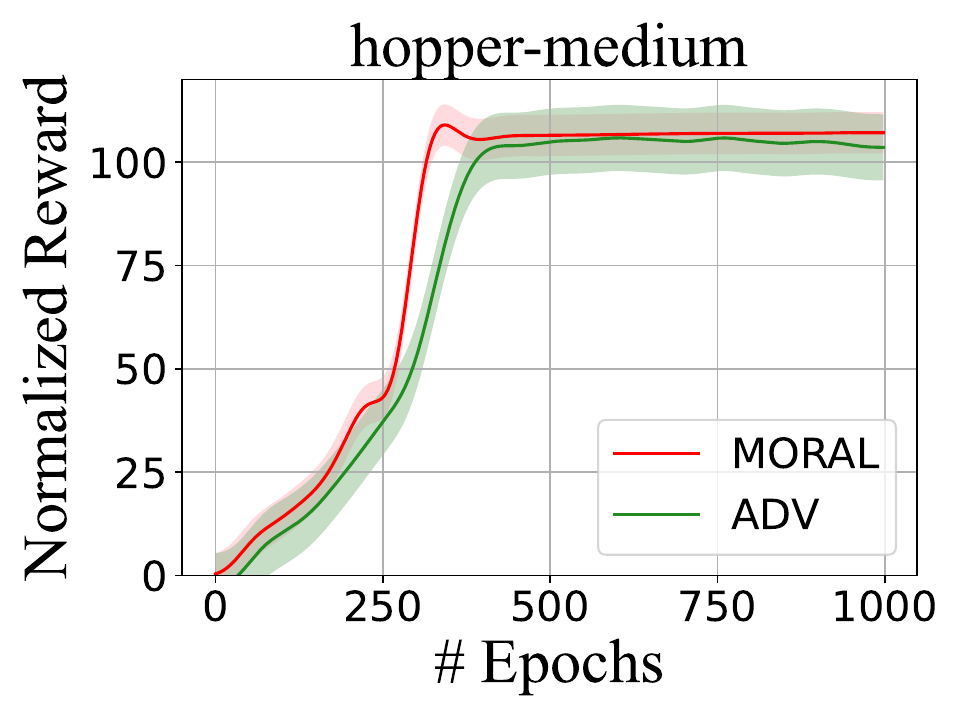} 
\includegraphics[width=0.19\linewidth]{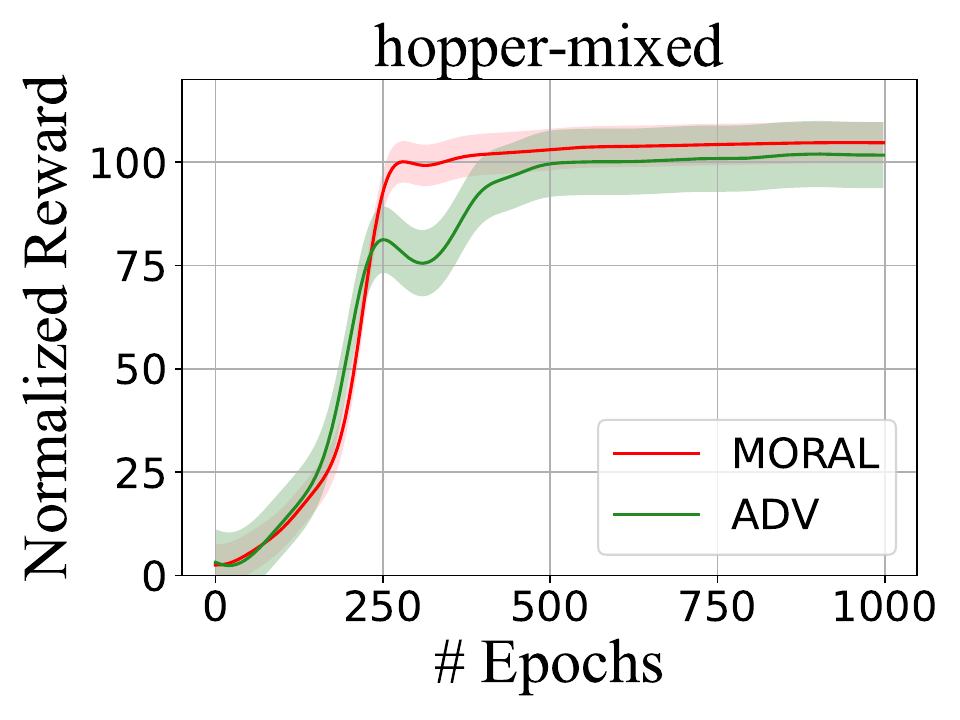}
  \includegraphics[width=0.19\linewidth]{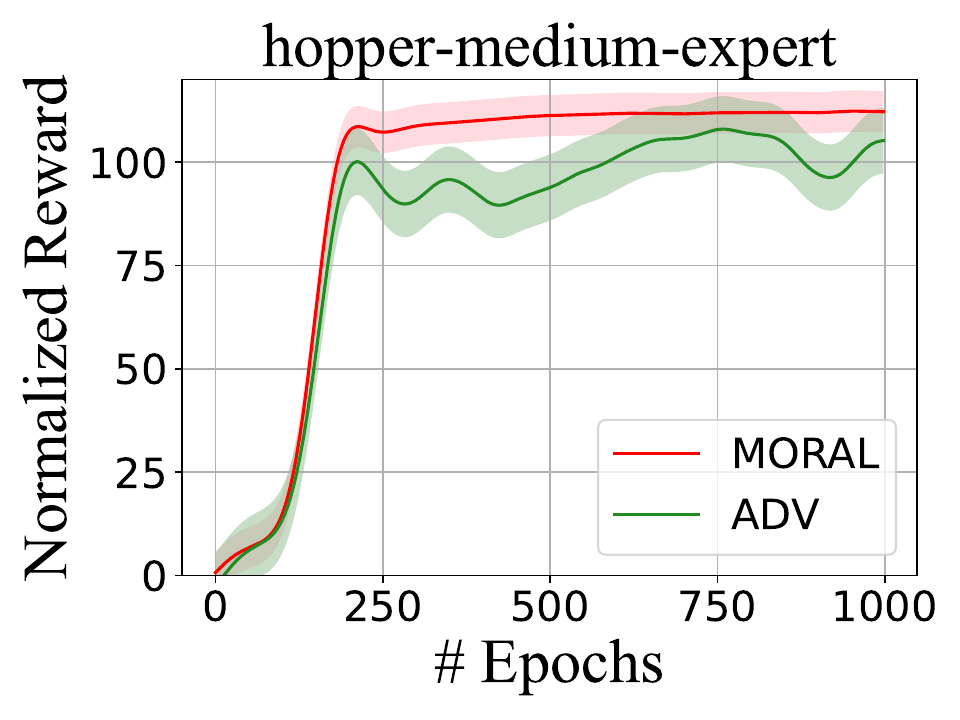}
   \includegraphics[width=0.19\linewidth]{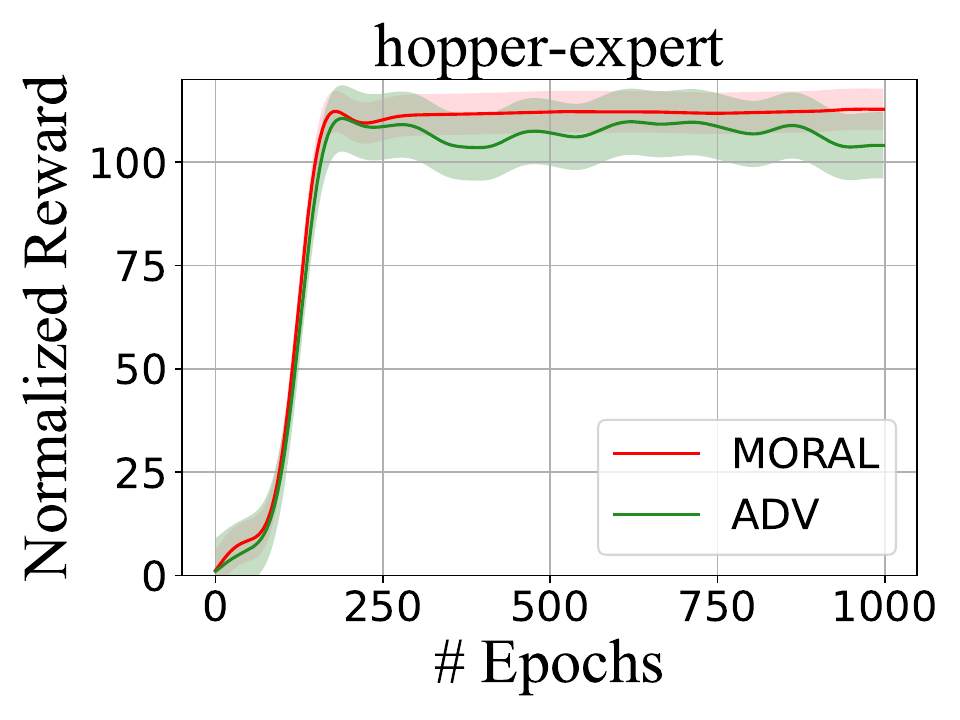}
  \vspace{2mm}
  \includegraphics[width=0.19\linewidth]{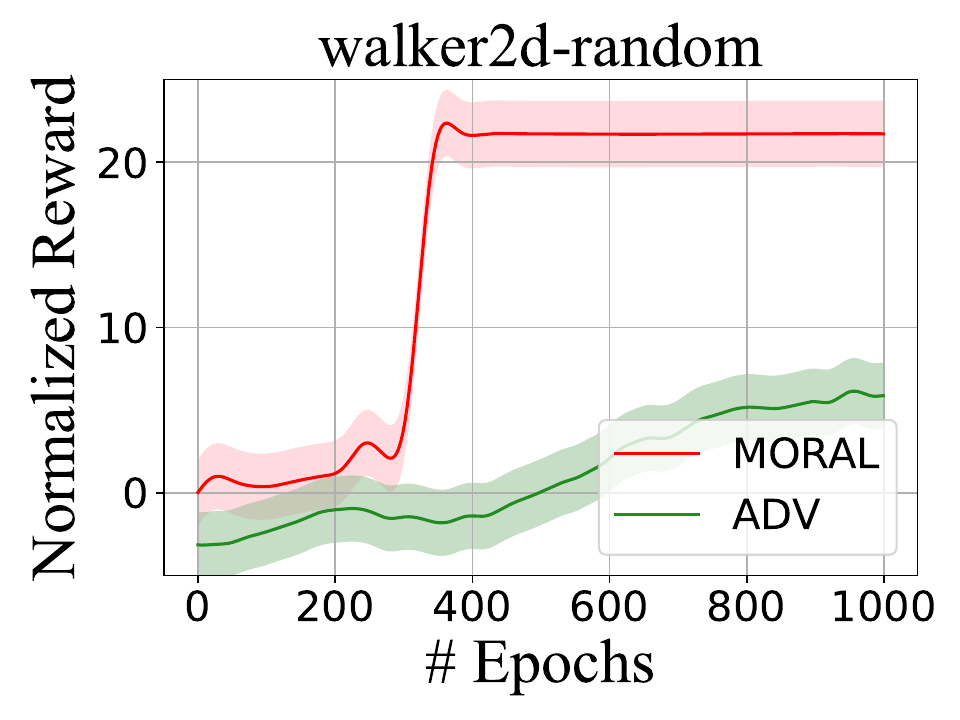}
  \includegraphics[width=0.19\linewidth]{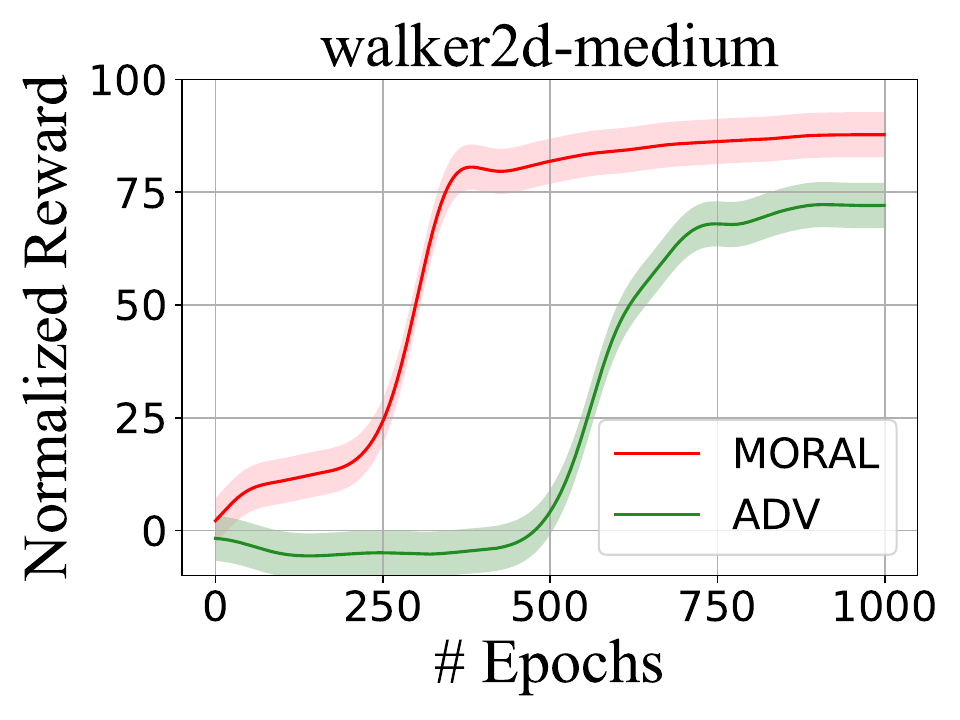}
   \includegraphics[width=0.19\linewidth]{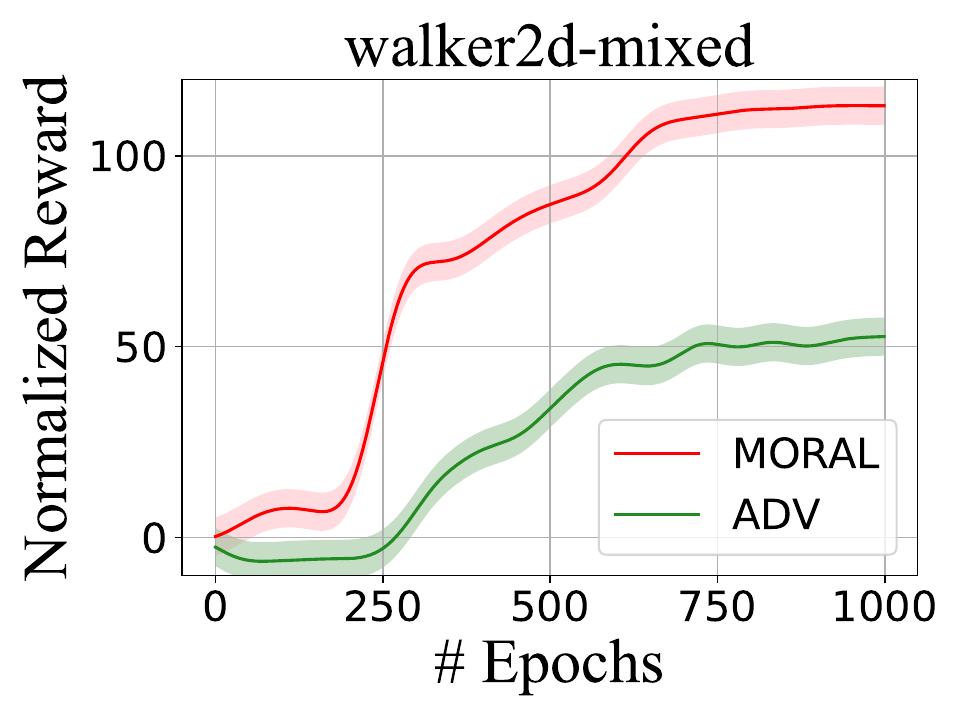}
  \includegraphics[width=0.19\linewidth]{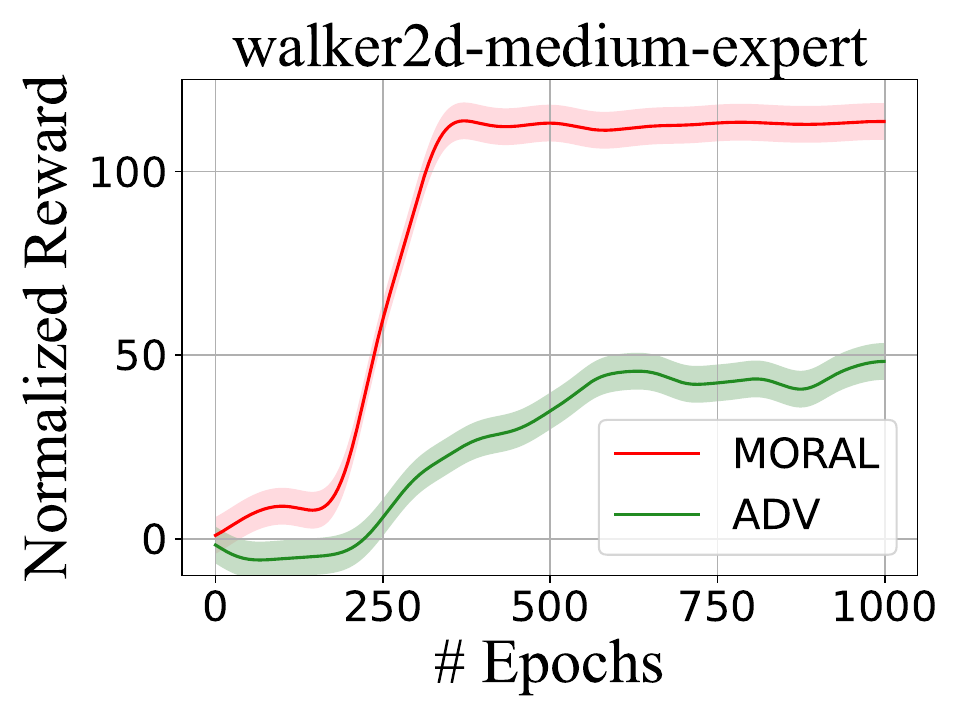}
  \includegraphics[width=0.19\linewidth]{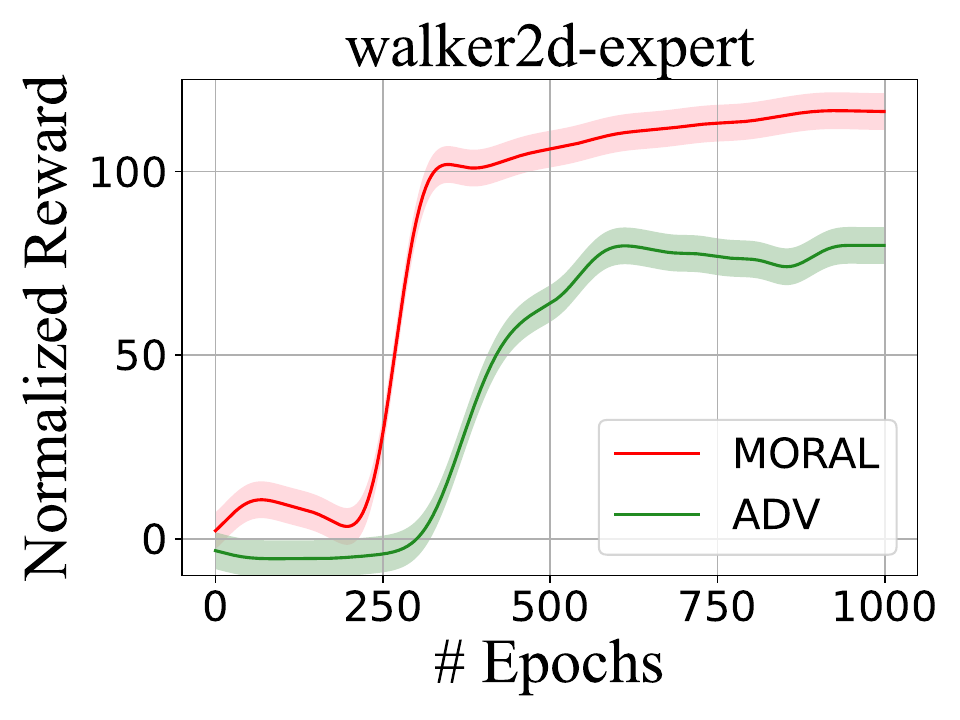}
  \caption{Learning curves on all datasets during adversarial training. Each number is the normalized reward, averaged over $12$ random seeds and the shadow is the standard error. ADV represents the adversarial policy.}
  \label{fig:figure2}
\end{figure*}

\subsection{Performance Comparison}
The results of comparative experiments are shown in Table \ref{tab:comp}.  MORAL achieves 8 best results in $15$ tasks, surpassing all compared methods. Compared with model-based offline RL methods, MORAL is the best in 9 out of the 12 tasks, with the best average score of $86.3$, indicating that MORAL outperforms existing model-based methods. 
The substantial deployment of ensemble models, coupled with the adversarial biased sampling method, markedly elevates policy performance. 
Moreover, we have conducted statistical analysis by pair-wise t-test on our method with PMDB all $15$ tasks under $12$ seeds, demonstrating that MORAL achieves statistically significant improvements in $8$ out of $15$ tasks. 
More importantly, MORAL demonstrates superior performance on challenging random datasets. 
Adversarial alternating sampling has demonstrated remarkable performance in suboptimal settings by achieving data augmentation. 

By employing adversarial sampling, MORAL, leveraging the power of ensemble models, effectively mitigates policy divergence. 
The incorporation of the differential factor effectively mitigates the influence of extrapolation errors, preventing policy divergence without online exploration. MORAL addresses the problem of limited applicability while ensuring stable policy learning without requiring configuration of the rollout horizon.

\subsection{Comparison with Adversarial Methods} 
In order to conduct a comprehensive experimental analysis and further validate the effectiveness of this proposed adversarial approach, we choose methods ARMOR, RAMBO, and PMDB, inspired by the adversarial settings to conduct comparative experiments.

\begin{figure*}[]
  \centering
  \includegraphics[width=0.19\linewidth]{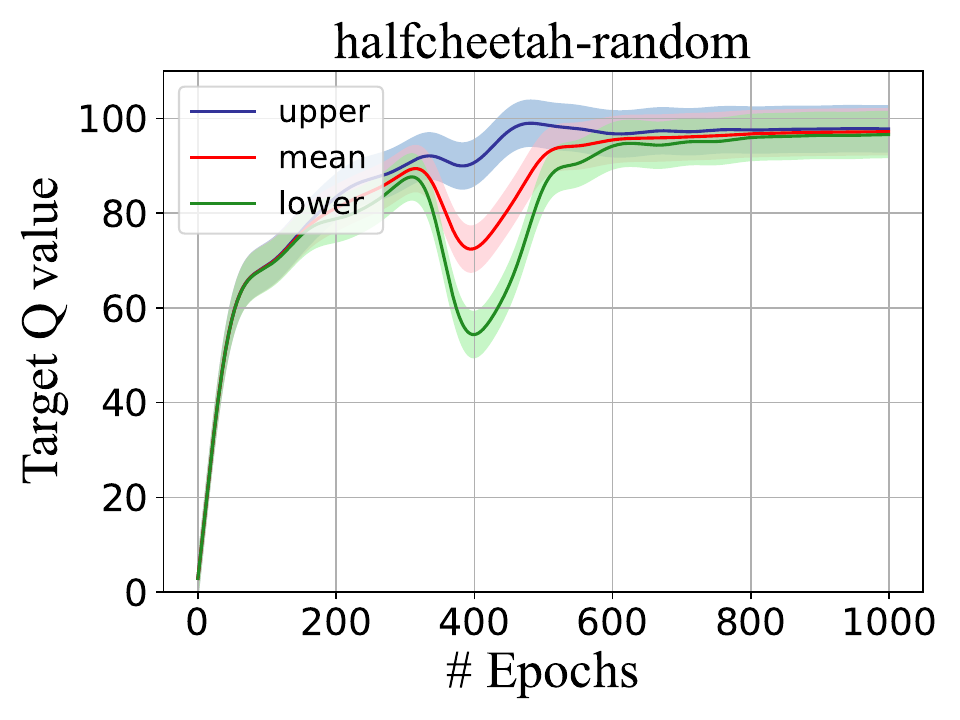}
  \includegraphics[width=0.19\linewidth]{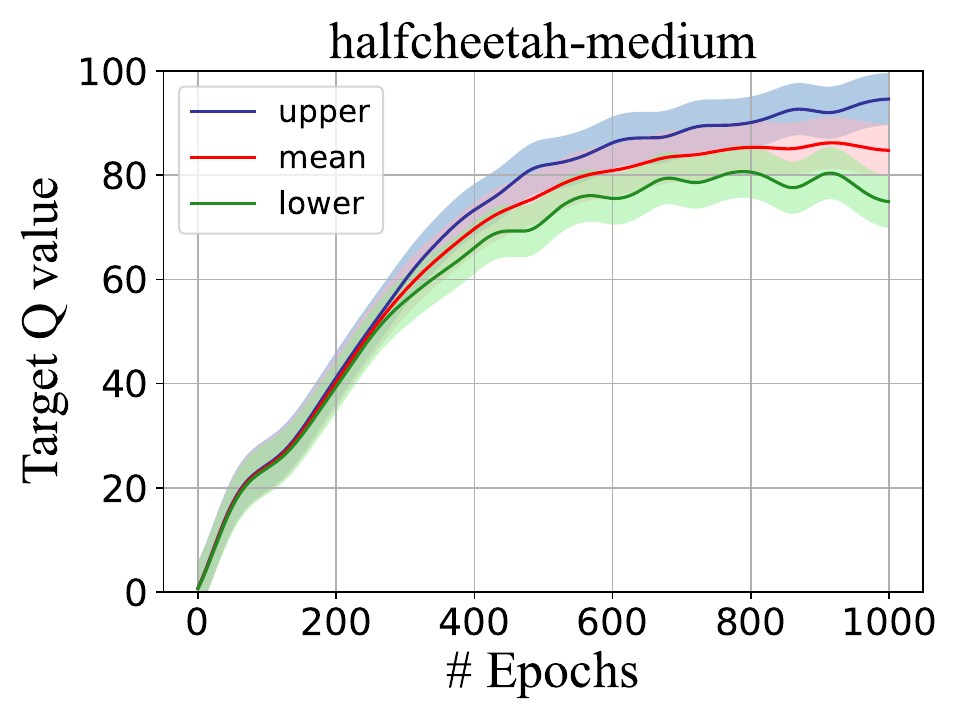}
  \includegraphics[width=0.19\linewidth]{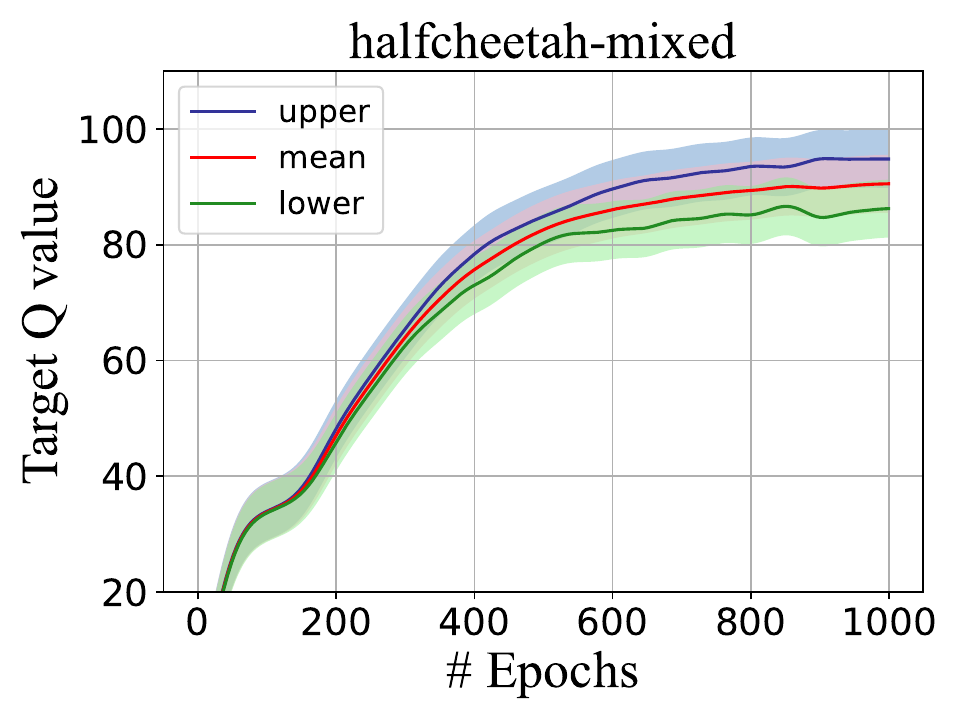}
  \includegraphics[width=0.19\linewidth]{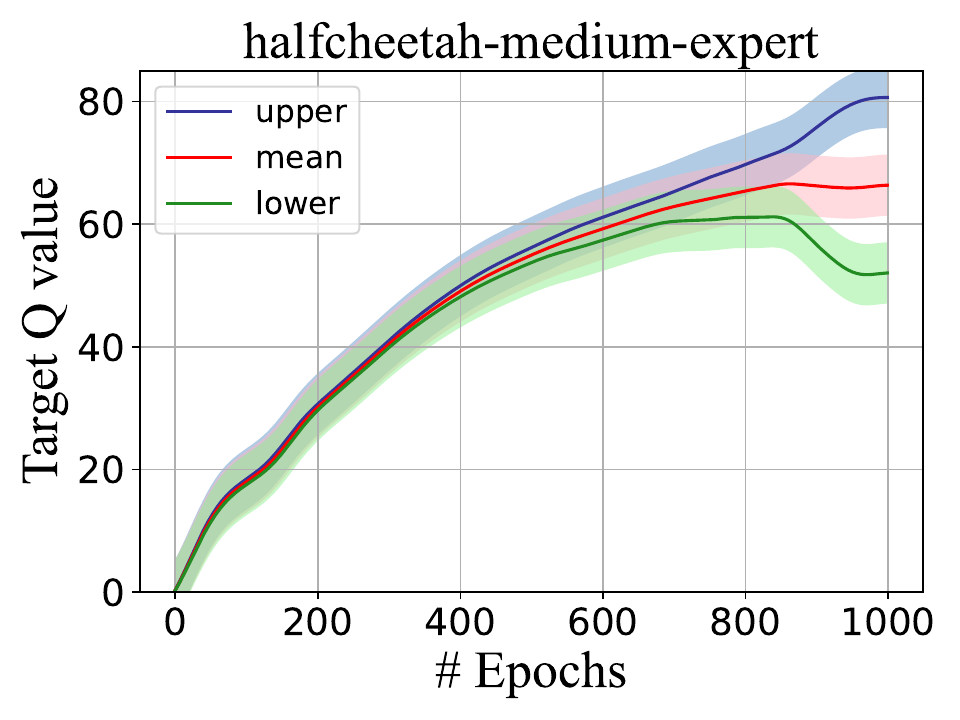}
  \includegraphics[width=0.19\linewidth]{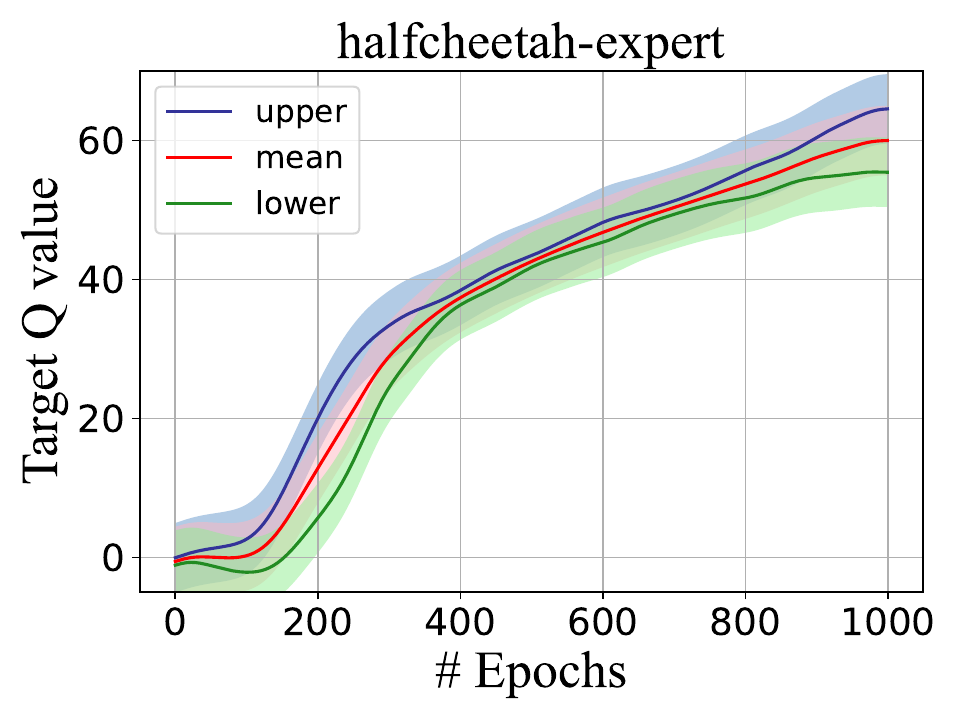}
  \vspace{2mm}
  \includegraphics[width=0.19\linewidth]{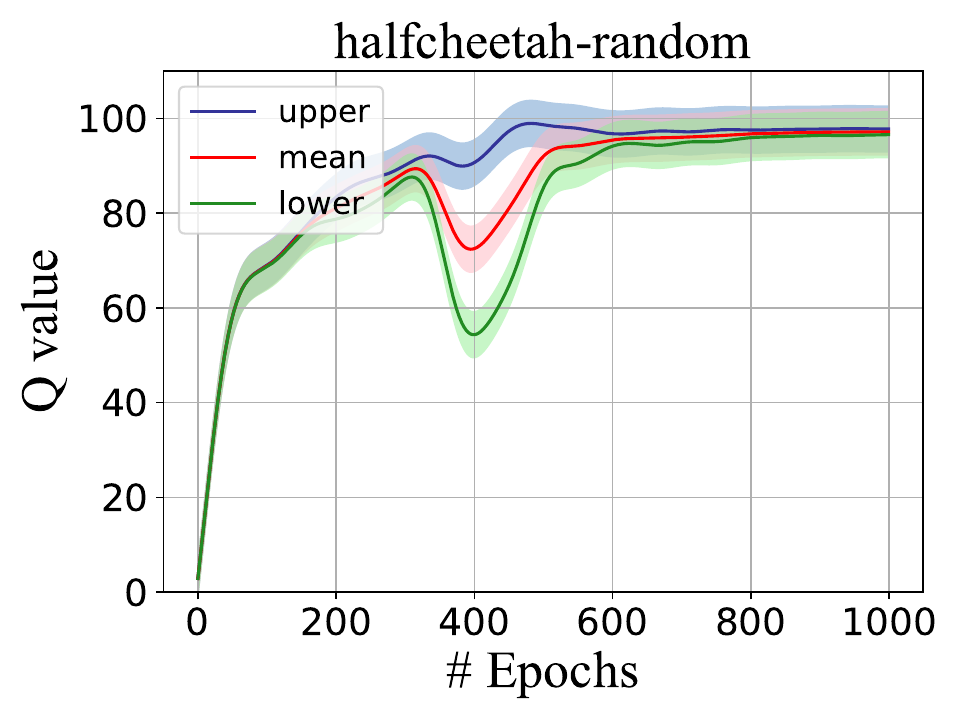}
  \includegraphics[width=0.19\linewidth]{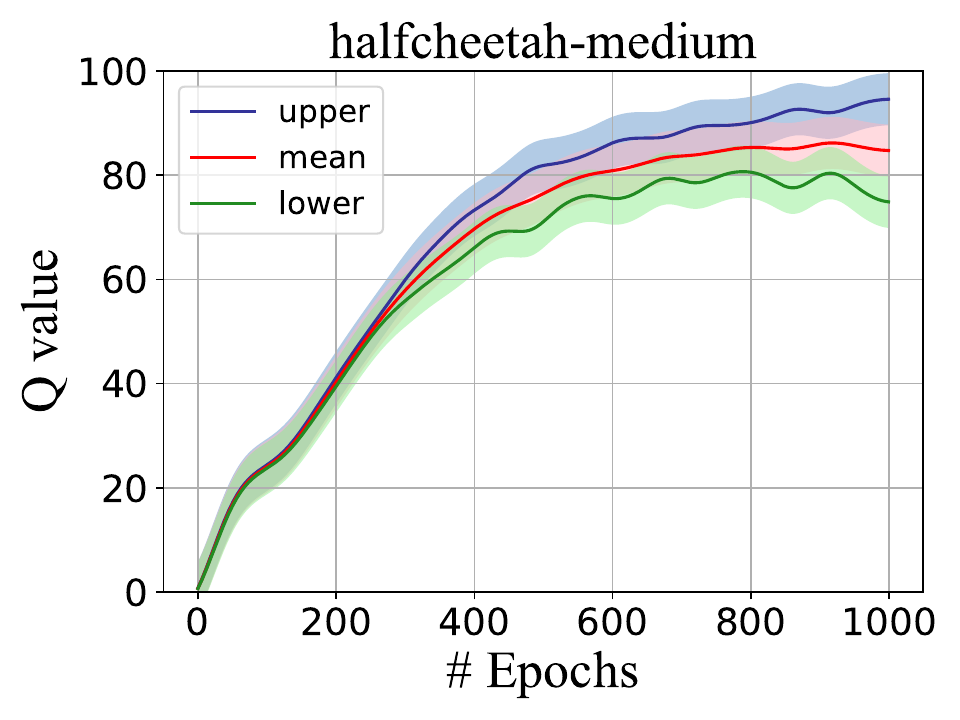}
  \includegraphics[width=0.19\linewidth]{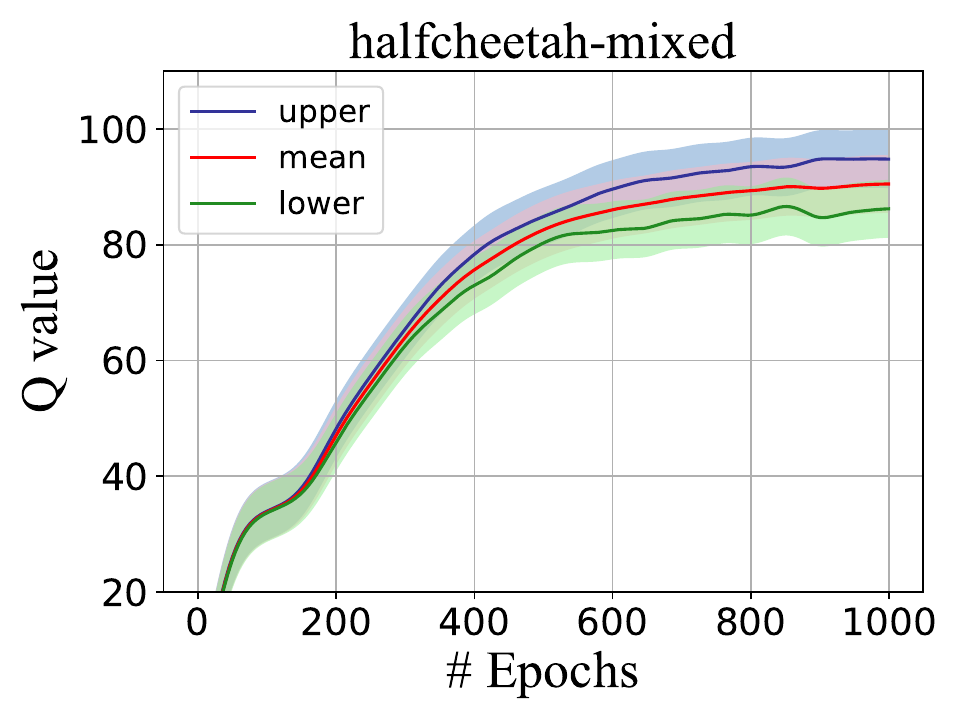}
  \includegraphics[width=0.19\linewidth]{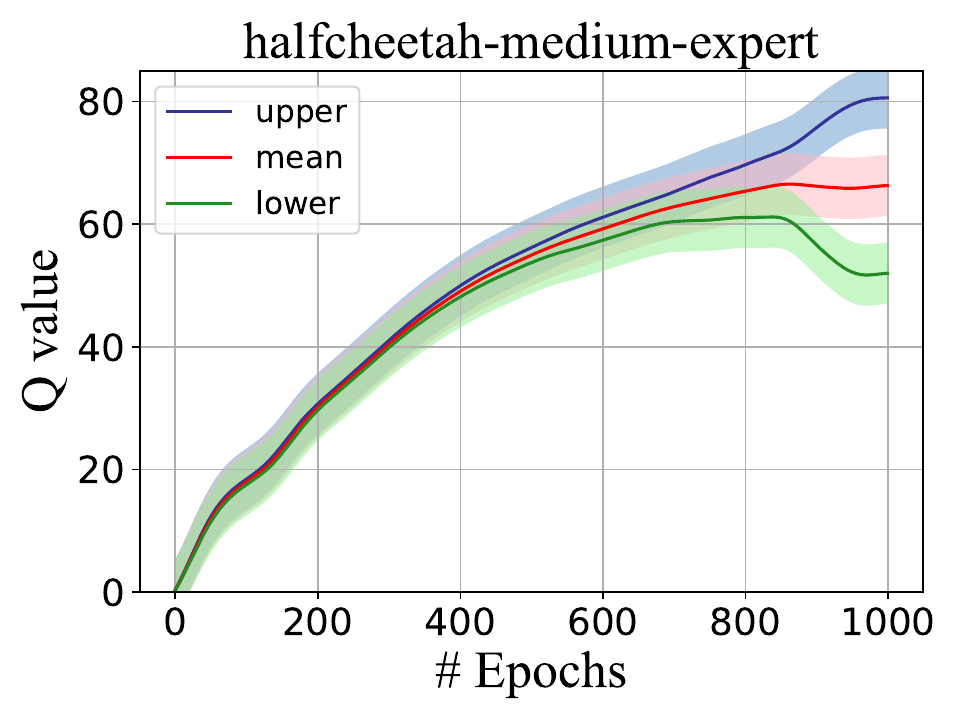}
  \includegraphics[width=0.19\linewidth]{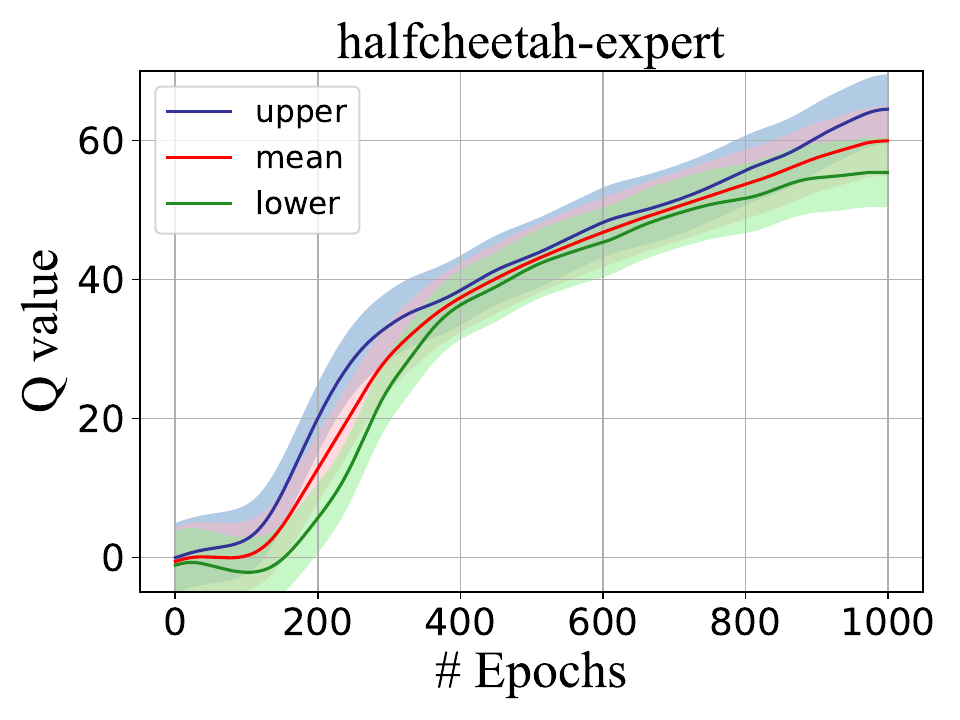}
  \vspace{2mm}
  \includegraphics[width=0.19\linewidth]{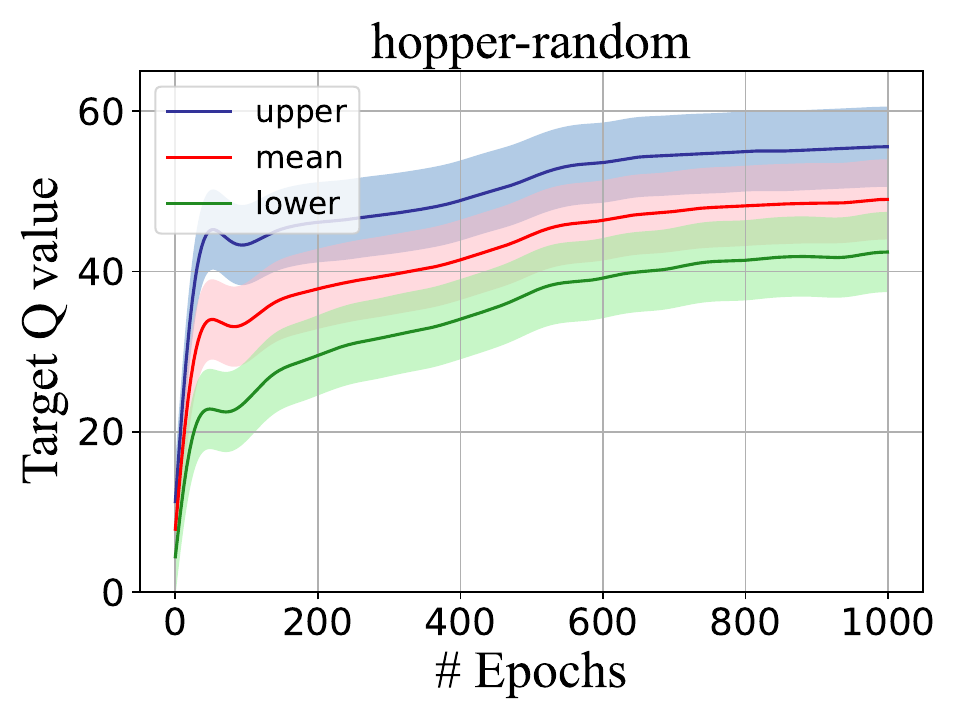}
  \includegraphics[width=0.19\linewidth]{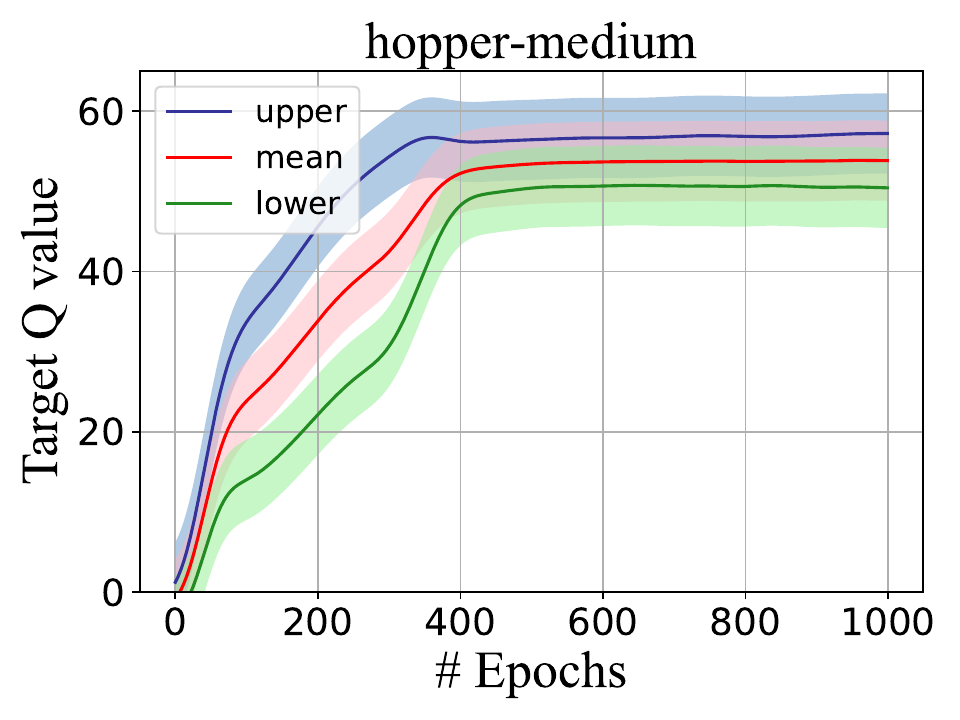}
  \includegraphics[width=0.19\linewidth]{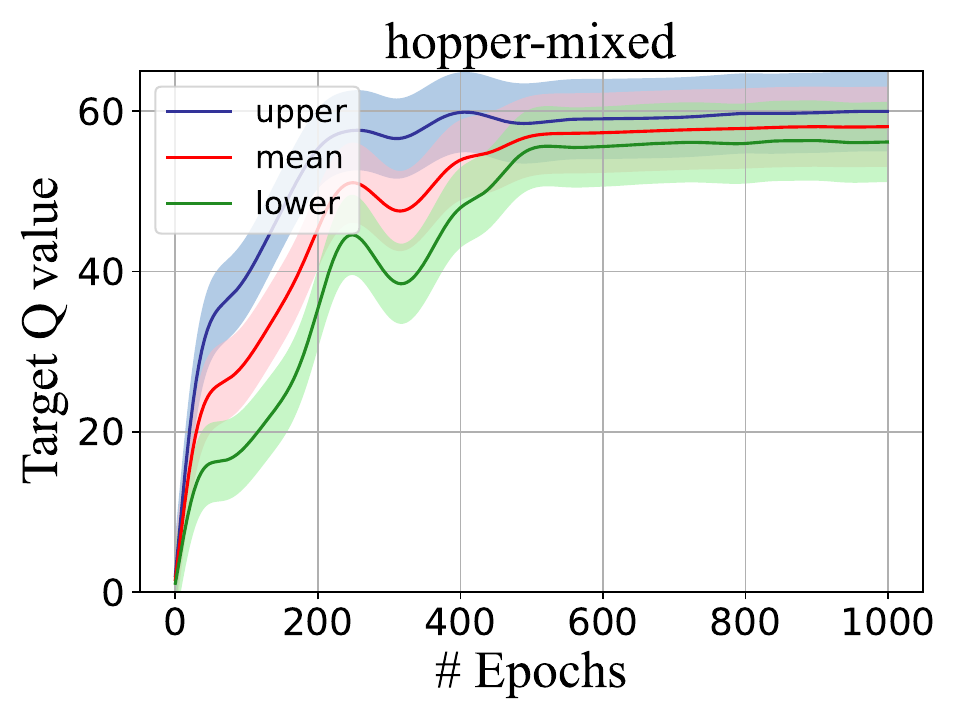}
  \includegraphics[width=0.19\linewidth]{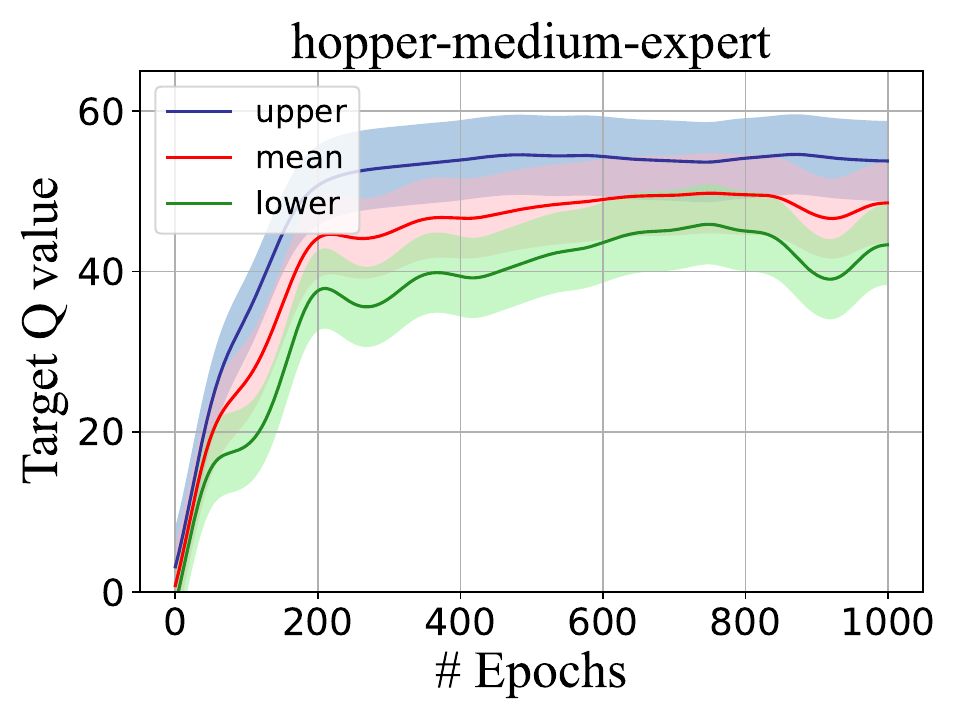}
  \includegraphics[width=0.19\linewidth]{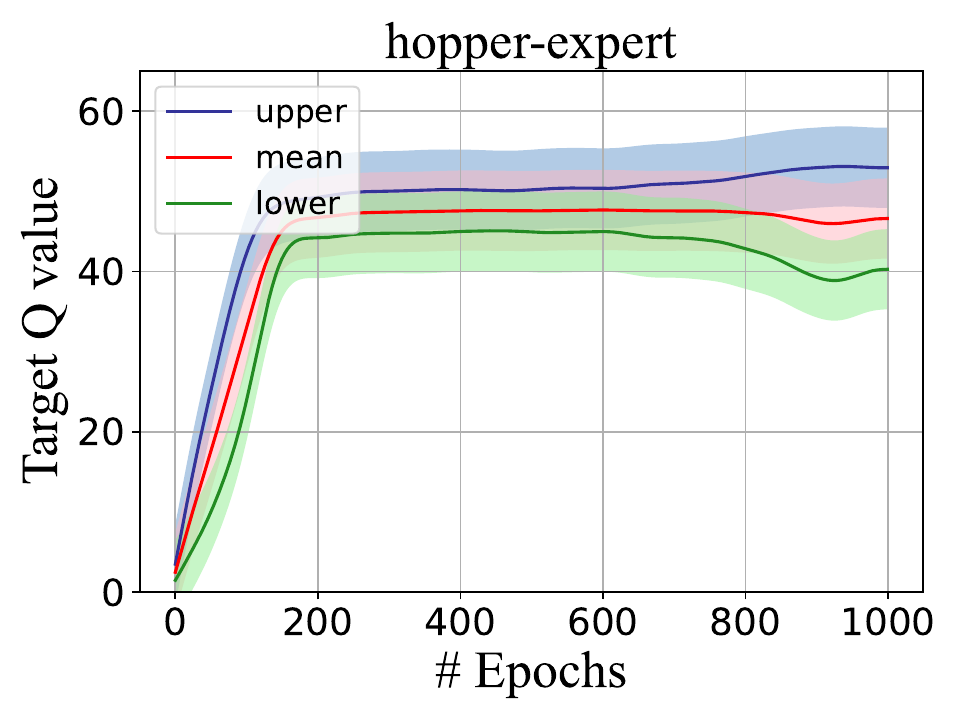}
  \vspace{2mm}
  \includegraphics[width=0.19\linewidth]{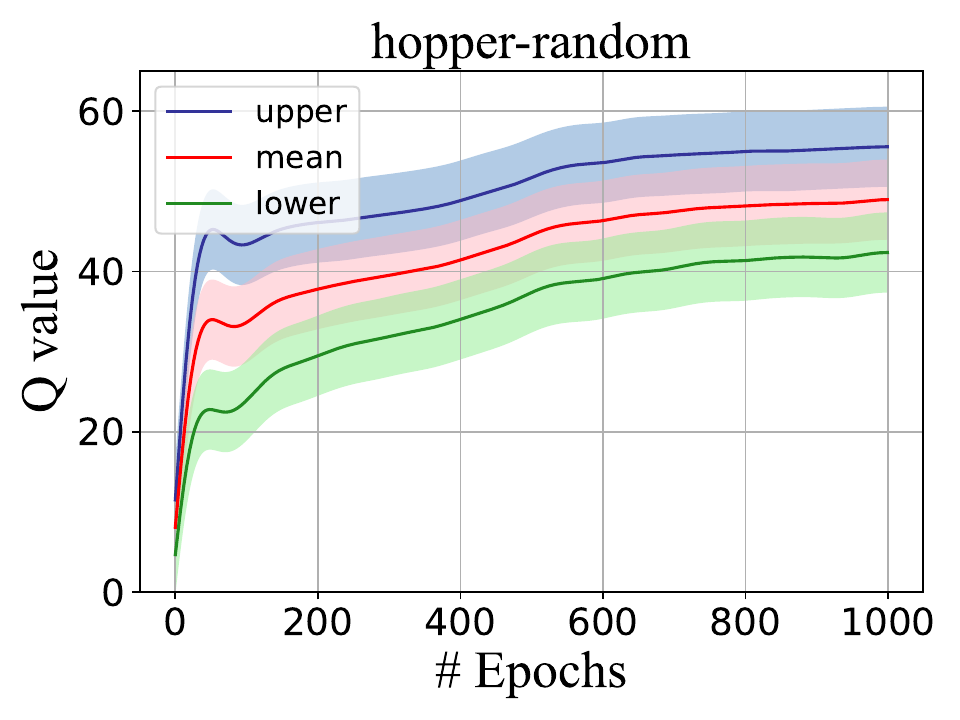}
  \includegraphics[width=0.19\linewidth]{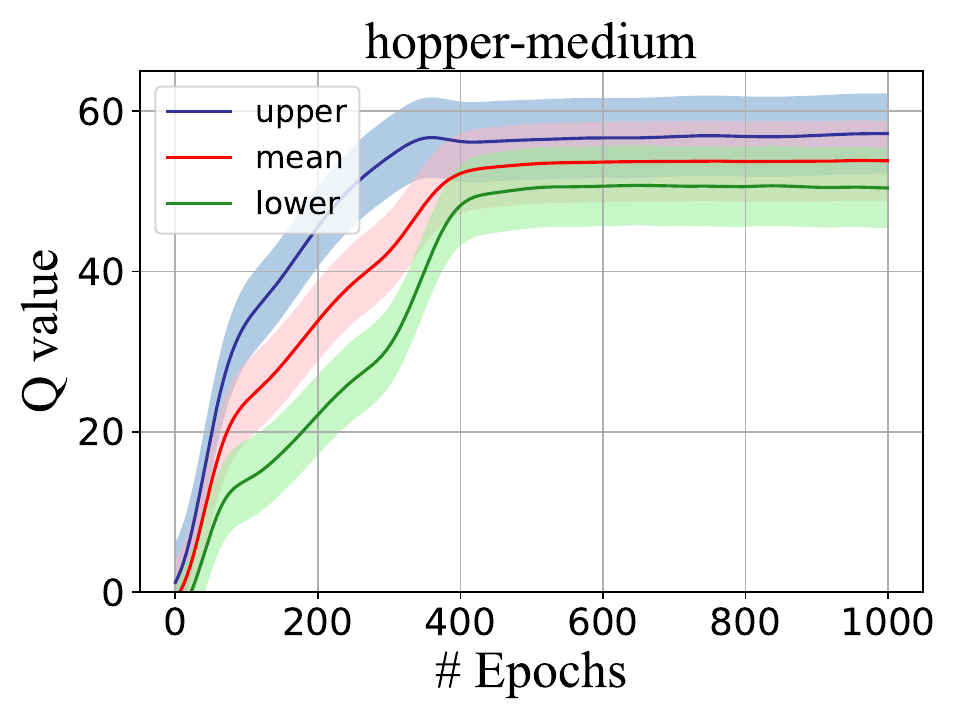}
  \includegraphics[width=0.19\linewidth]{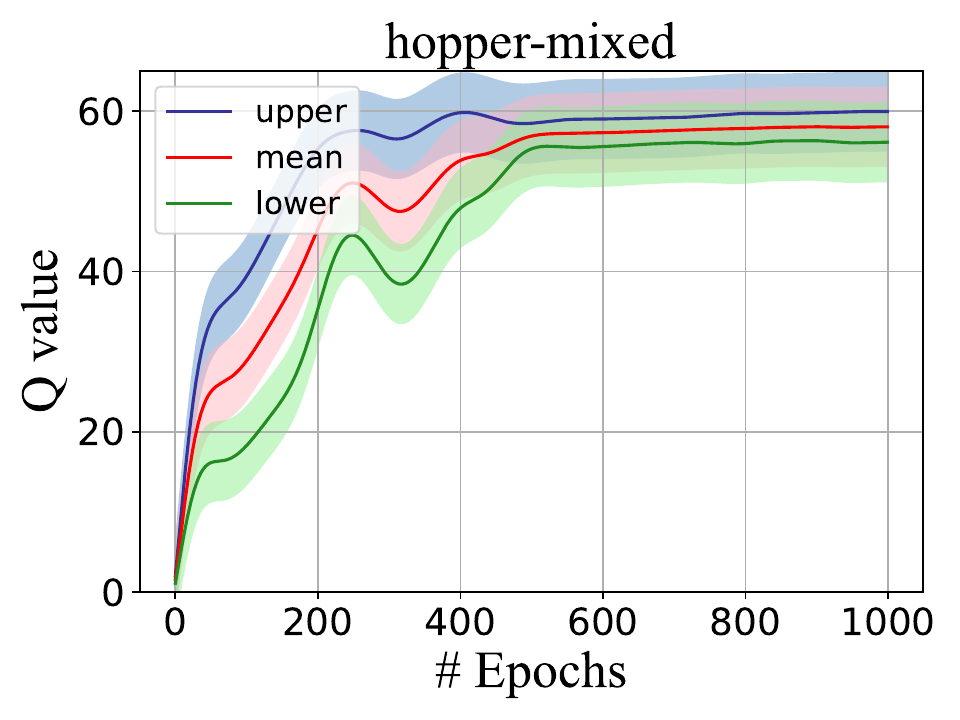}
  \includegraphics[width=0.19\linewidth]{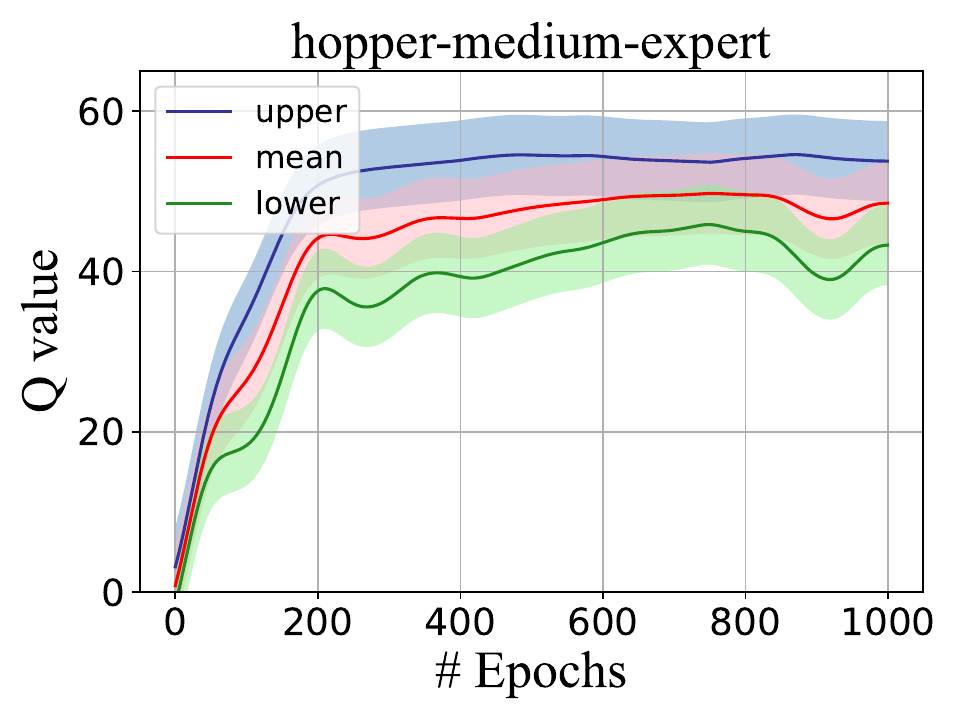}
  \includegraphics[width=0.19\linewidth]{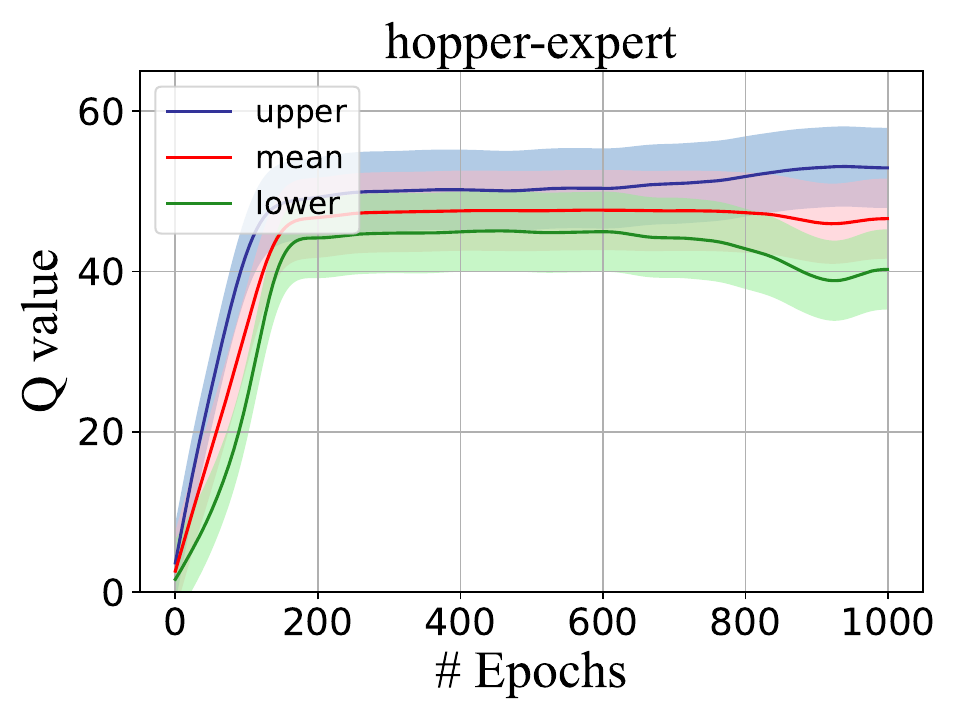}
  \vspace{2mm}
   \includegraphics[width=0.19\linewidth]{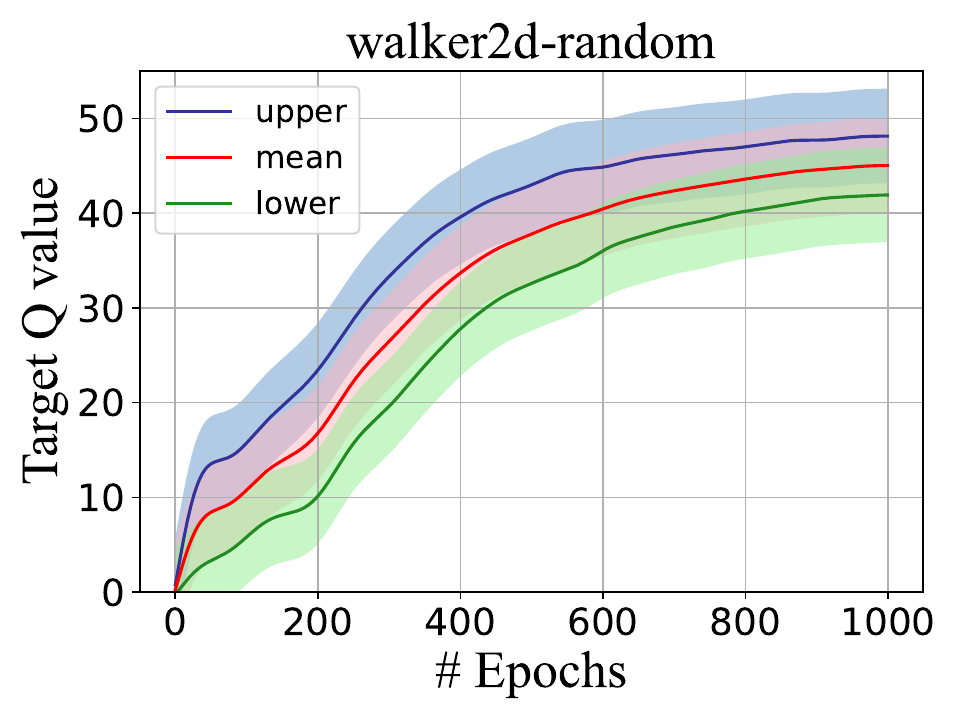}
  \includegraphics[width=0.19\linewidth]{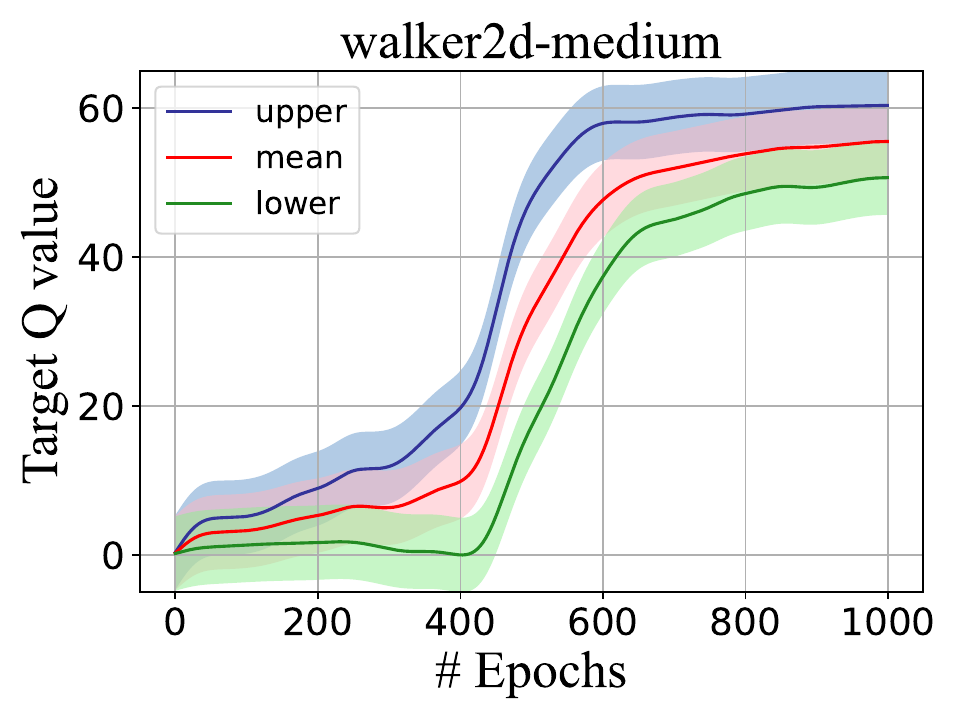}
  \includegraphics[width=0.19\linewidth]{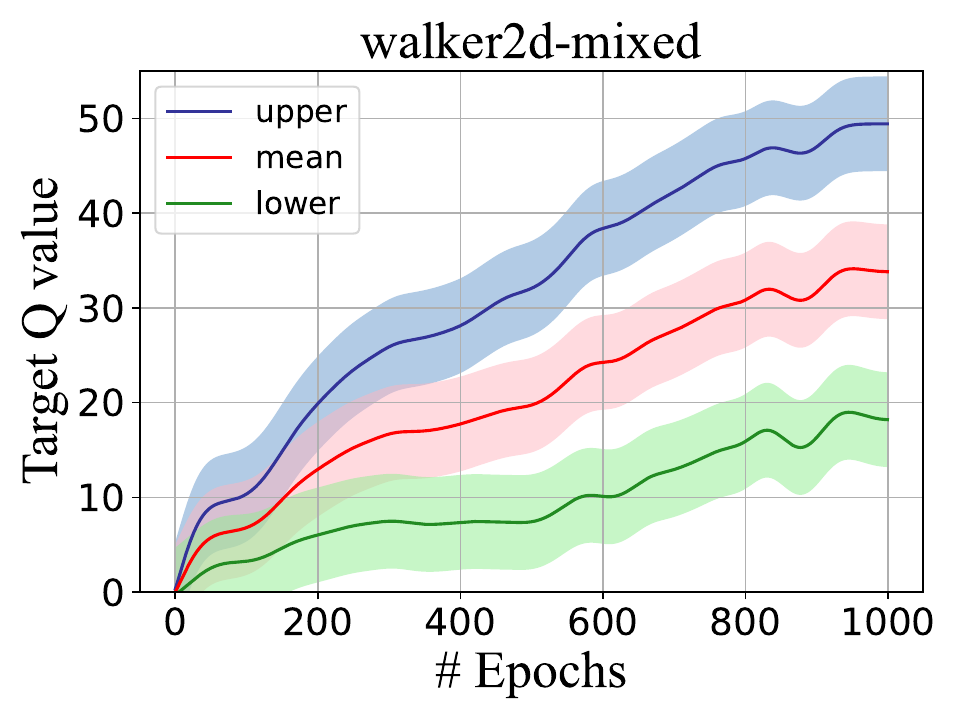}
  \includegraphics[width=0.19\linewidth]{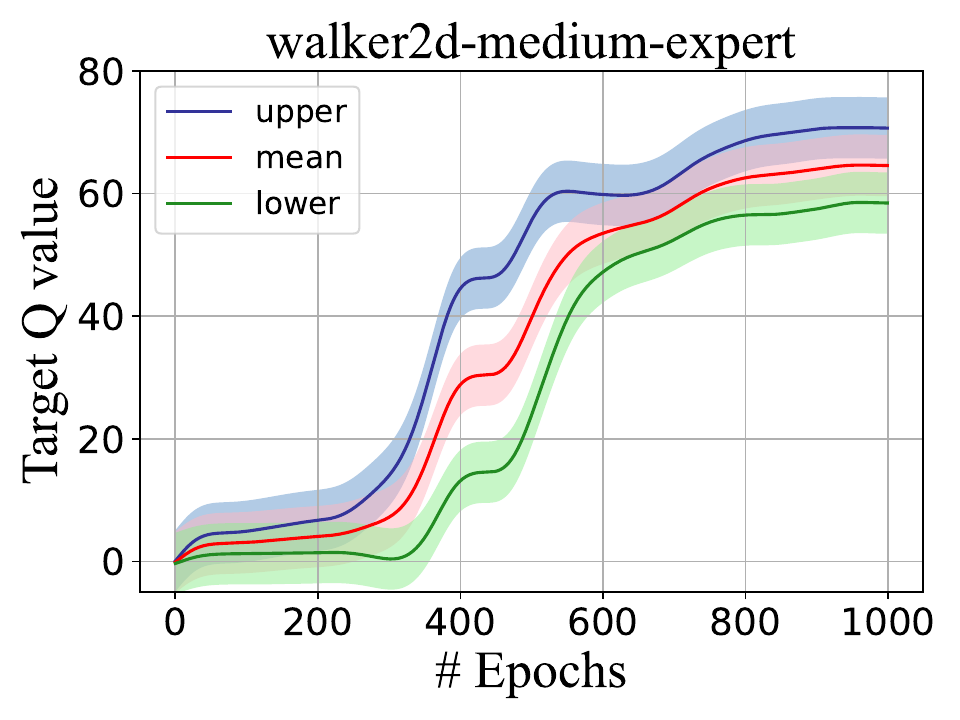}
  \includegraphics[width=0.19\linewidth]{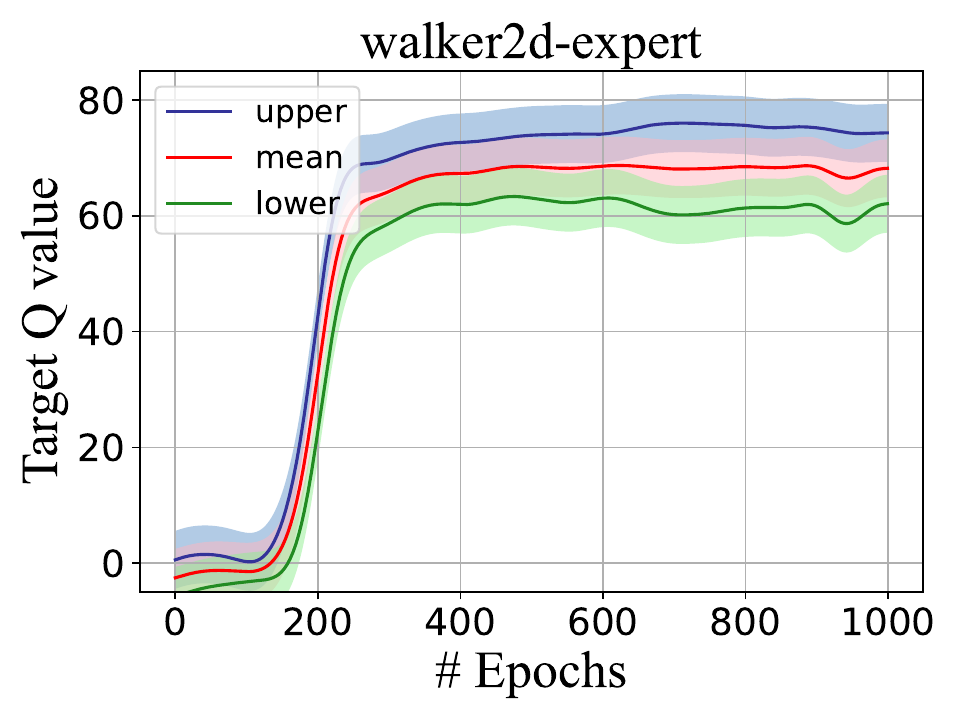}
  \vspace{2mm}
  \includegraphics[width=0.19\linewidth]{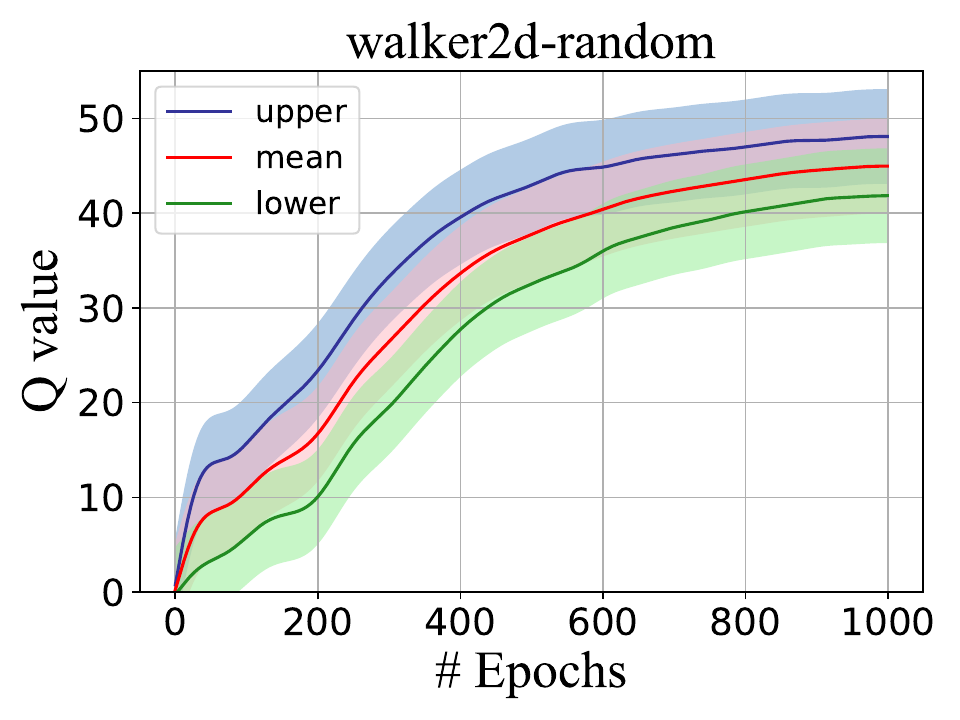}
  \includegraphics[width=0.19\linewidth]{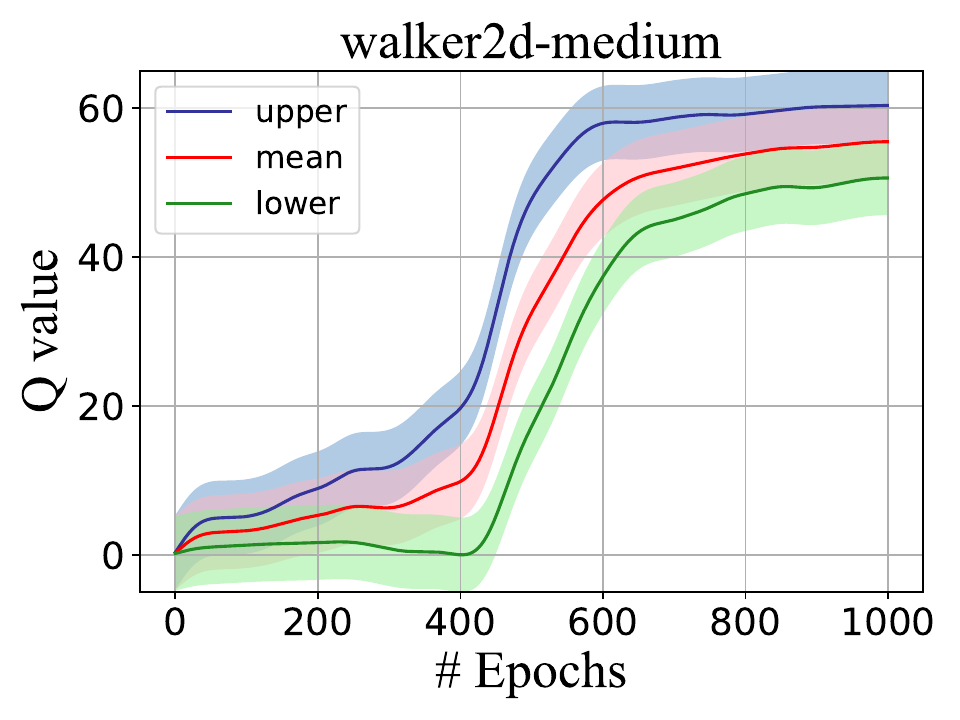}
  \includegraphics[width=0.19\linewidth]{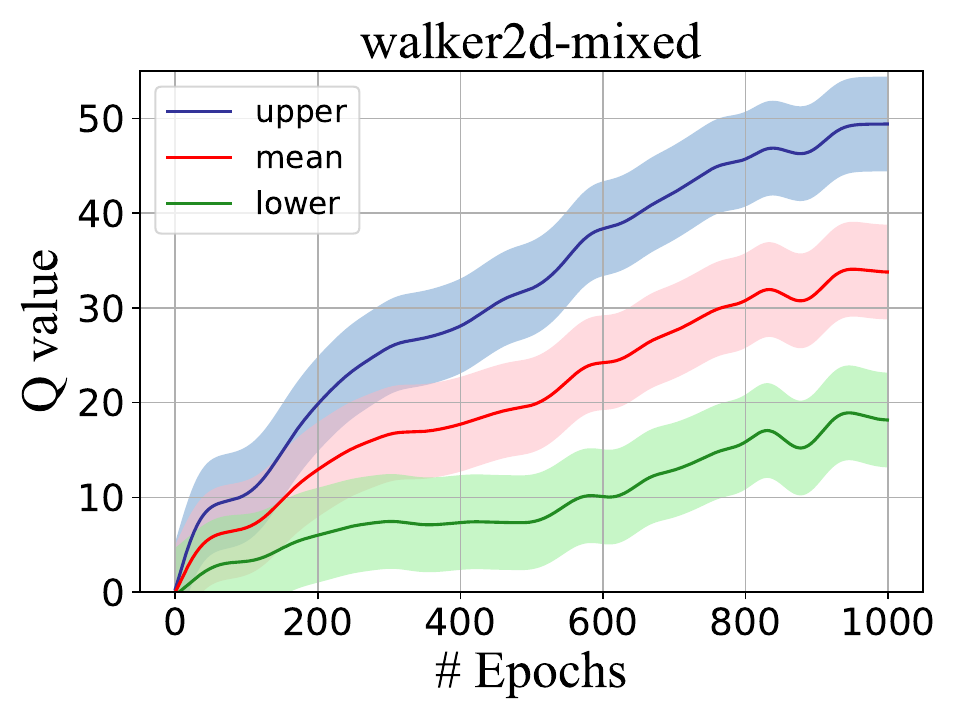}
  \includegraphics[width=0.19\linewidth]{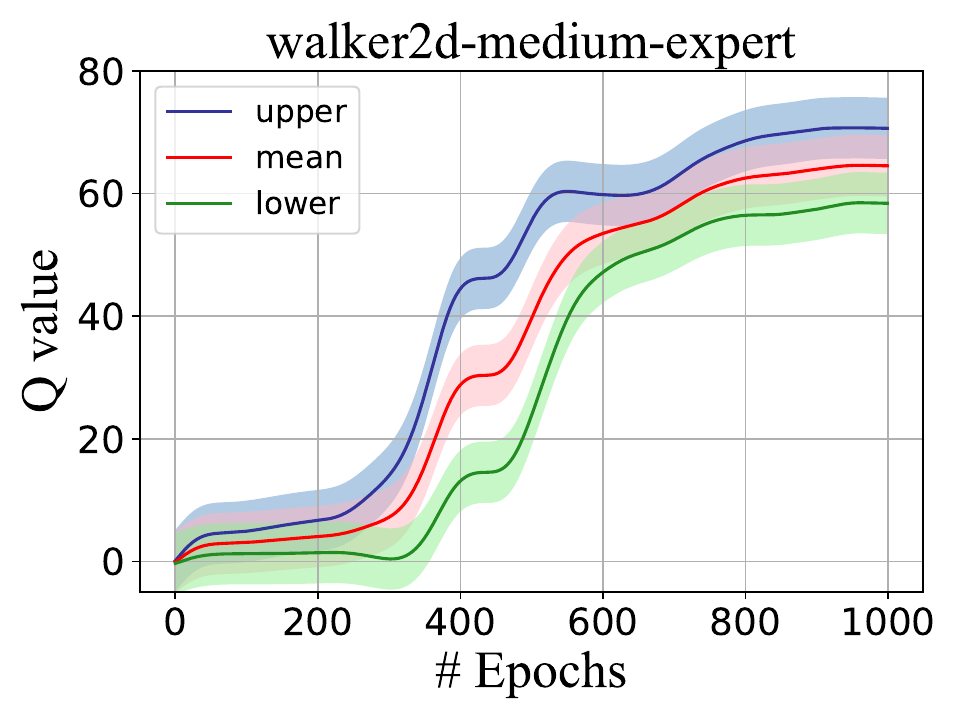}
  \includegraphics[width=0.19\linewidth]{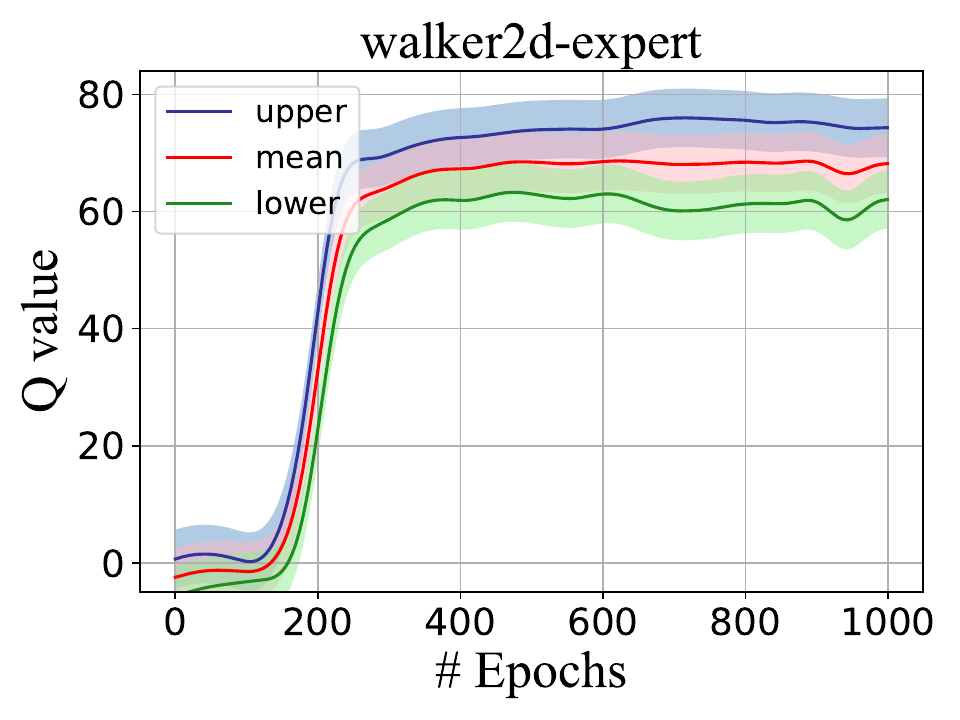}
  \caption{Learning curves of target Q-value and Q-value on three environments of 15 tasks. Three lines represent the upper, mean and lower scores respectively.}
  \label{fig:learn_process_Q}
\end{figure*}

\paragraph{Compared to RAMBO} 
\textbf{(Pessimism)} We conduct a comparative experiment, selecting RAMBO and the high-performing TD3+BC method for evaluation on two medium-expert datasets. 
Based on the learning curves depicted in Fig.~\ref{fig:3} and experimental results in Table \ref{tab:comp}, we can demonstrate that MORAL outperforms the robust adversarial method RAMBO and TD3+BC in terms of both performance and stability. Fig.~\ref{fig:3} reveals the instability in the policy learning process of RAMBO. 
RAMBO is built on the robust adversarial RL, using conservative optimization between models and policies. MORAL instead considers the entire spectrum of plausible transitions, and autonomously determines which portion merits heightened attention. 
From the view of MDP, the behavior showcases a range of adaptability, stretching from pessimistic to optimistic when evaluating policy $\pi$.
\textbf{(Applicability)} Different from RAMBO, MORAL does not require setting deterministic parameters (i.e., ensemble numbers \& rollout horizons) for different tasks and environments. Additionally, MORAL outperforms RAMBO across 12 tasks in three environments.

\paragraph{Compared to ARMOR and PMDB} We conduct a thorough analysis of MORAL in comparison to ARMOR and PMDB in three expert datasets. The results in Table \ref{tab:2} highlight the superior performance achieved by MORAL. 
Compared to PMDB, which solely relies on dynamic adversarial optimization within pessimism, MORAL incorporated the differential factor in adversarial process demonstrates superior sample efficiency and stability. 
Regarding sample efficiency, we measure the convergence step, which we define as the number of training steps required for a method to maintain a stable score within a 5\% variance band for $10$ consecutive steps. As shown in Fig.~\ref{fig:conv}, MORAL achieves the fastest convergence in expert tasks. 
Moreover, Compared to ARMOR, which involves building an actor-critic adversarial architecture, MORAL based on the adversarial policy (ADV) exhibits more accurate learning of offline transitions from expert datasets, showcasing superior performance. 
In summary, MORAL demonstrates high performance and remarkable sample efficiency in comparison to other adversarial methods.

\subsection{Property Analysis}
\paragraph{Analysis of adversarial process} 
To assess the effectiveness of the adversarial framework proposed in MORAL, we conduct a comprehensive experiment across all 15 datasets. We present the learning curves of all datasets in Fig. \ref{fig:figure2}, consistently demonstrating the performance improvement for both MORAL and the adversarial policy over the whole training epochs. Finally, both policies converge, demonstrating the stability and robustness of MORAL.

Throughout the training process across all 15 datasets, the policies of the primary player consistently outperform the adversarial policies of the second player. Remarkably, both policies exhibit notable sample efficiency, converging after a mere 600 epochs.
Even on more challenging random datasets, while the learning process shows some expected fluctuations, the secondary player's strategic perturbations serve a crucial purpose: they create a robust training environment that strengthens the primary policy's ability to handle diverse scenarios. This controlled adversarial pressure helps the primary policy achieve stable performance after 600 epochs.
The game process introduces interference from the second player to achieve robust learning of offline policies. The primary player seeks to optimize a dependable policy for the relevant MDP in response to stochastic disruptions caused by the biased samples from the second player. 
Overall, these experimental findings further substantiate the effectiveness and robustness of the incorporation of secondary player.

\begin{figure*}[]
  \centering
  \includegraphics[width=0.19\linewidth]{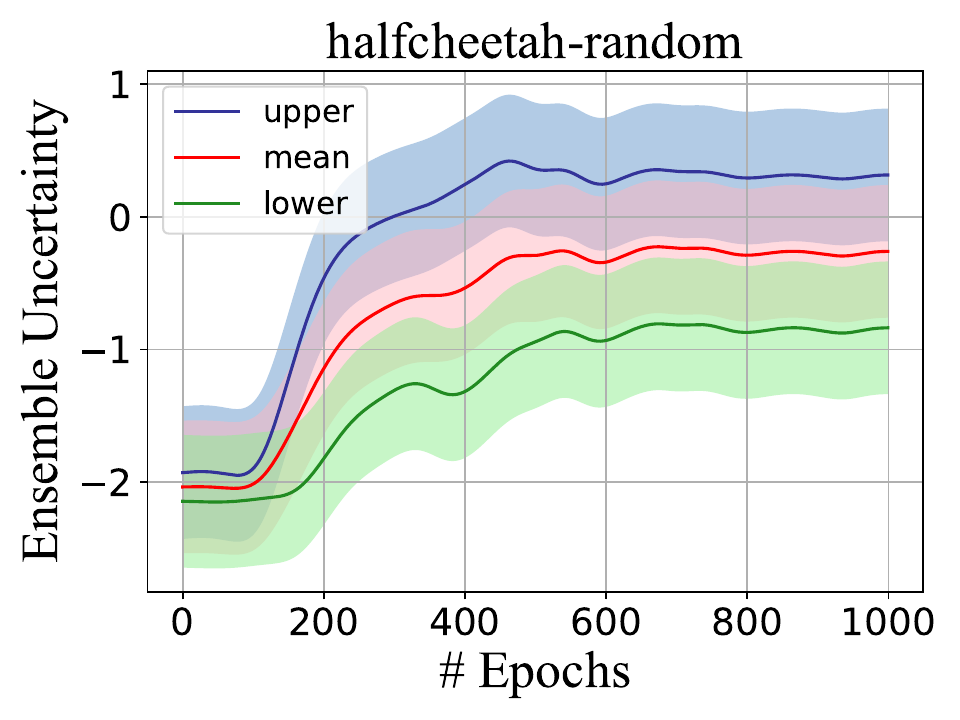}
  \includegraphics[width=0.19\linewidth]{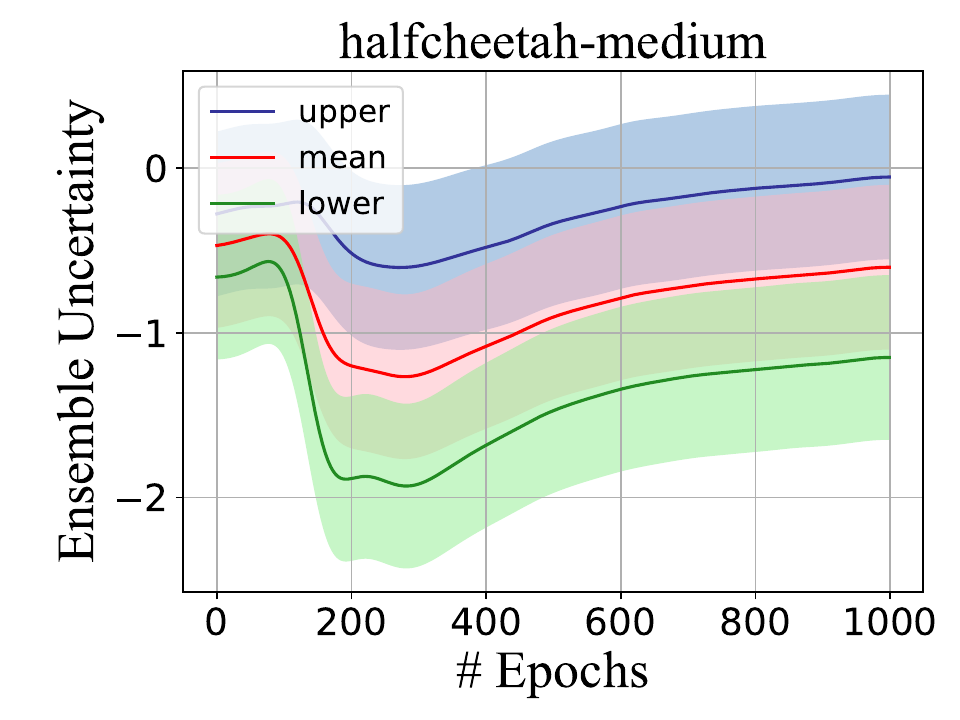}
  \includegraphics[width=0.19\linewidth]{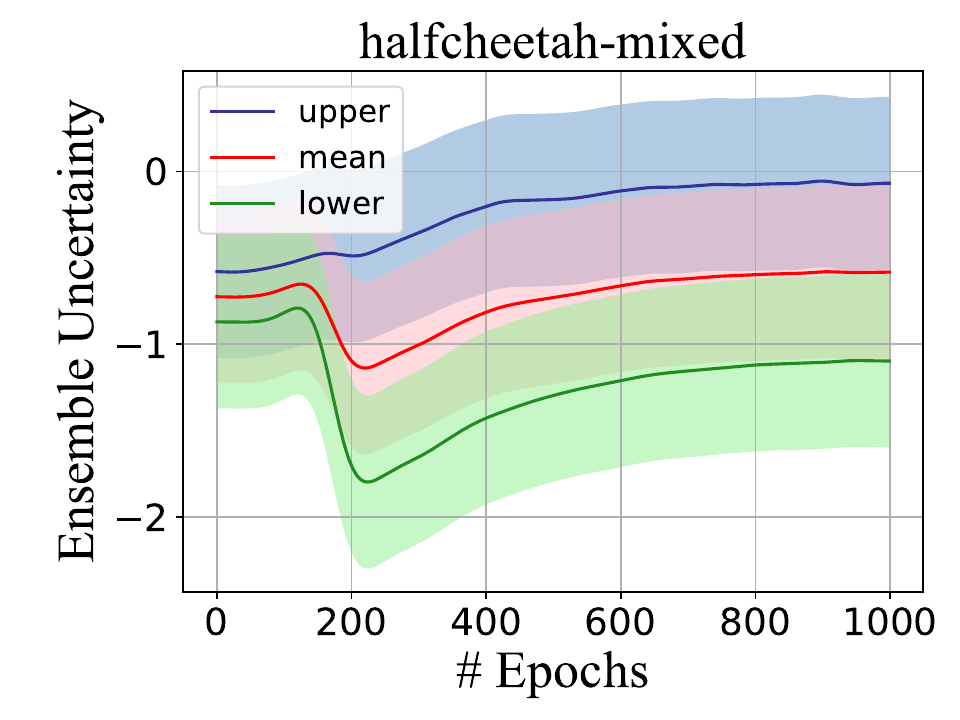}
  \includegraphics[width=0.19\linewidth]{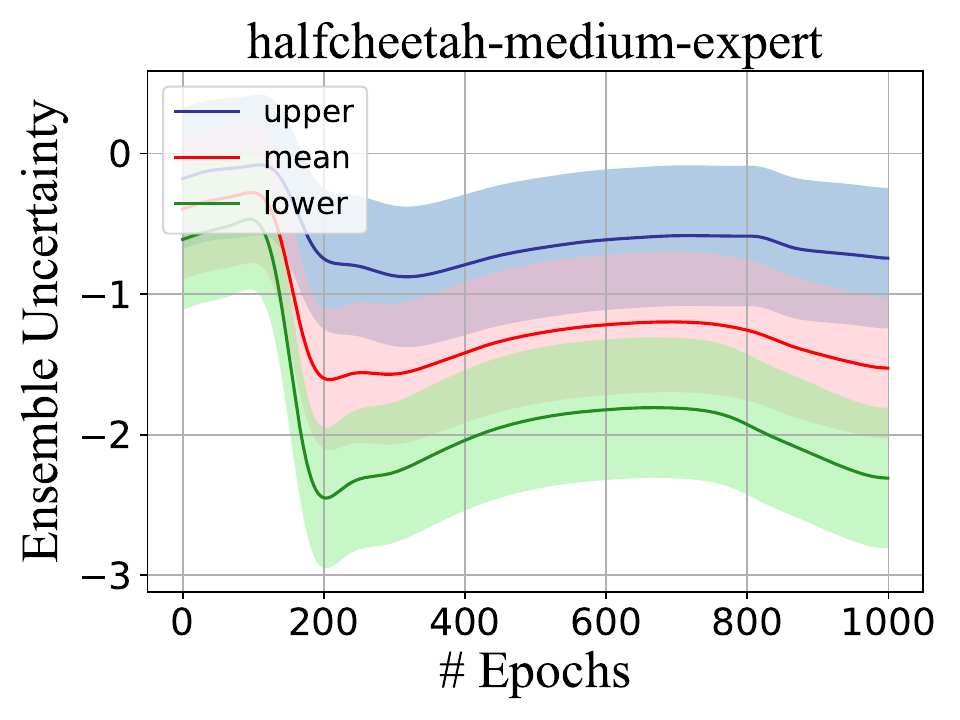}
  \includegraphics[width=0.19\linewidth]{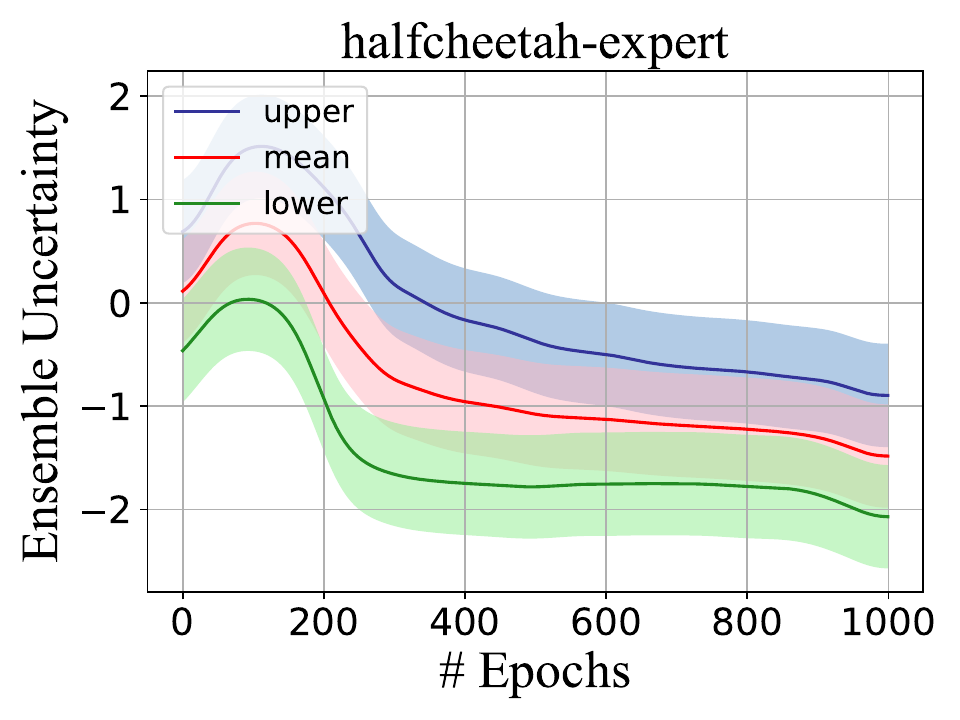}
  \vspace{2mm}
  \includegraphics[width=0.19\linewidth]{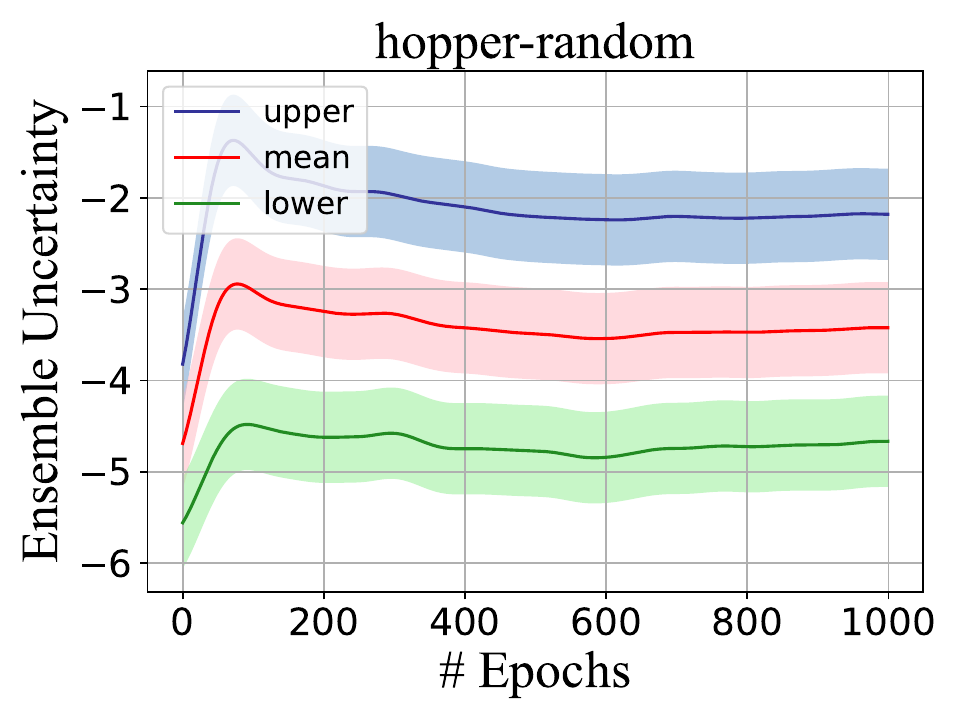}
  \includegraphics[width=0.19\linewidth]{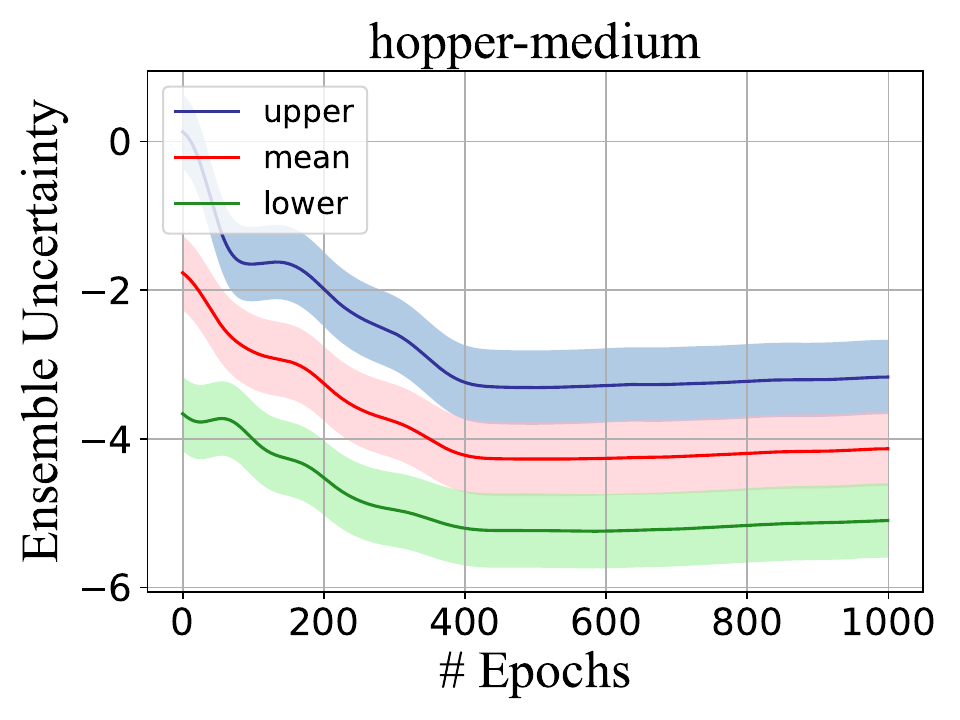}
  \includegraphics[width=0.19\linewidth]{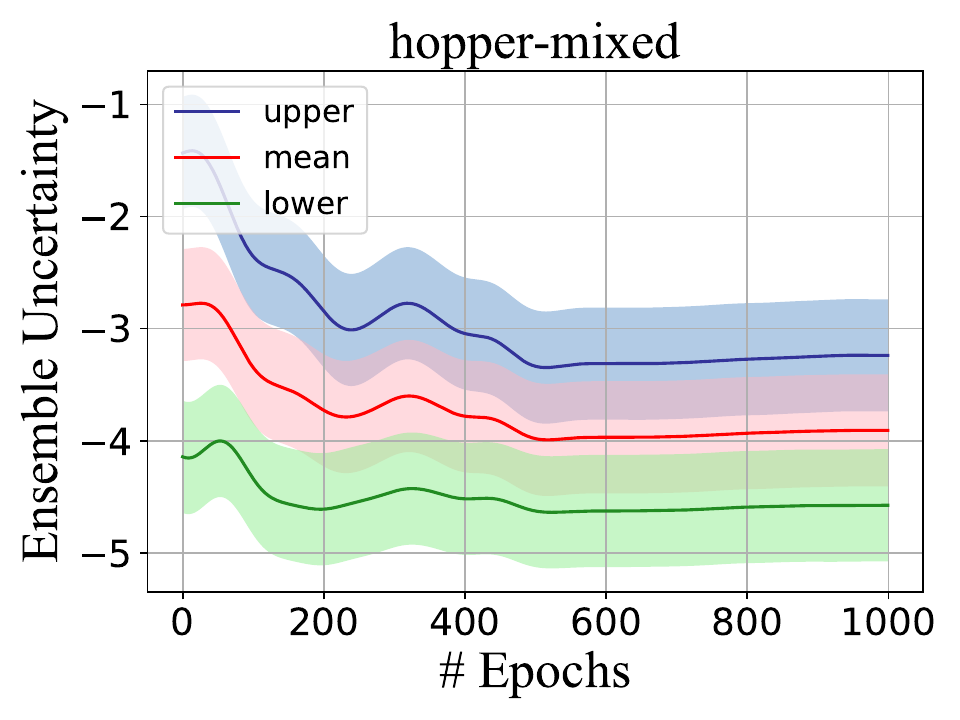}
  \includegraphics[width=0.19\linewidth]{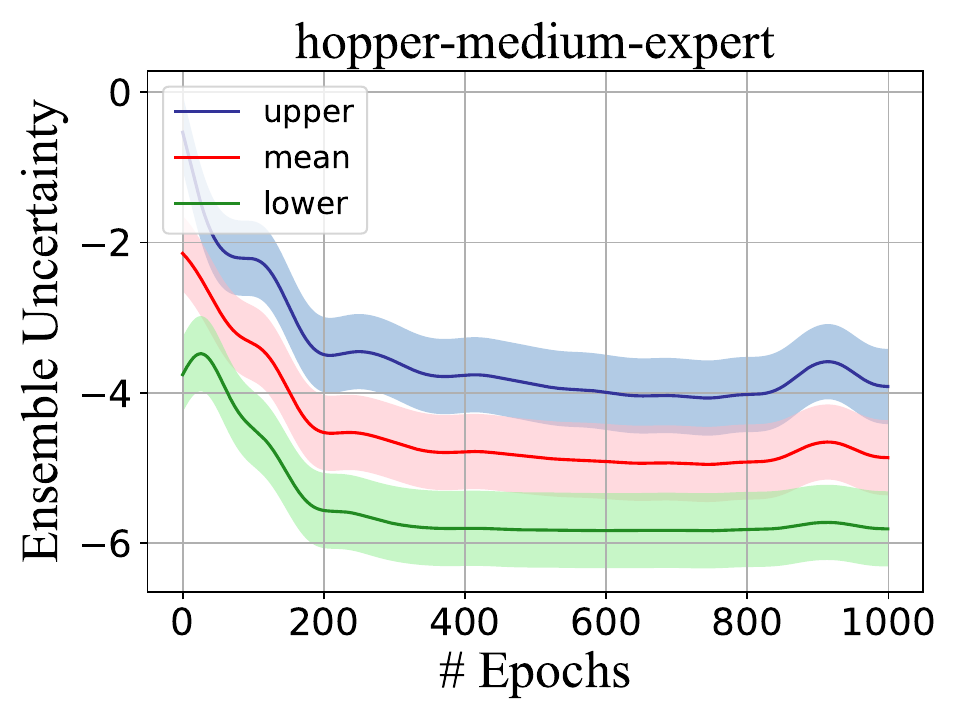}
  \includegraphics[width=0.19\linewidth]{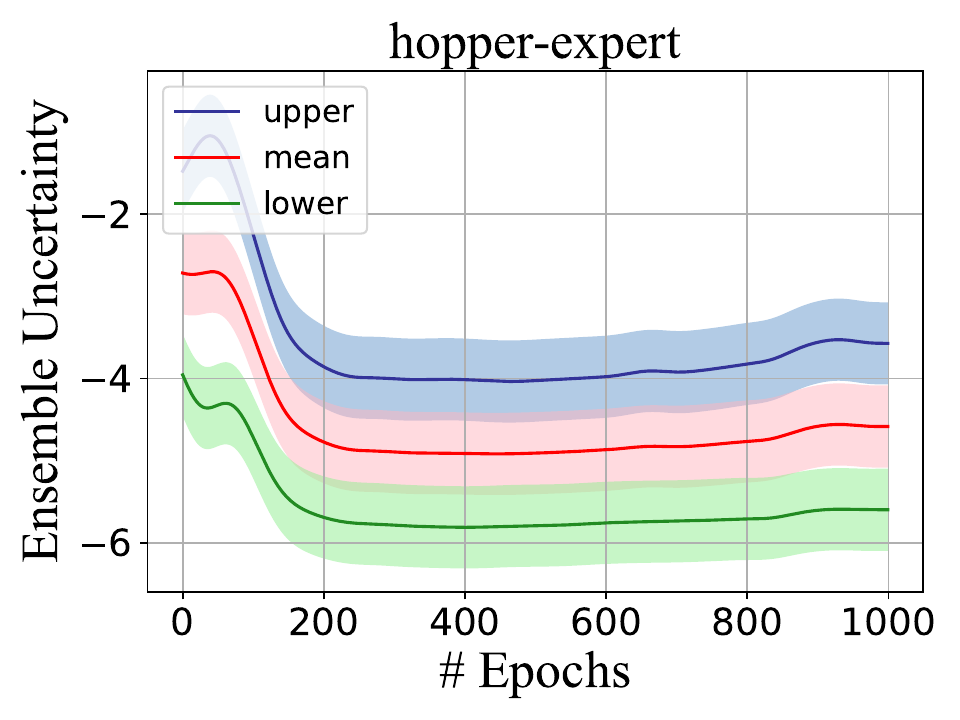}
  \vspace{2mm}
  \includegraphics[width=0.19\linewidth]{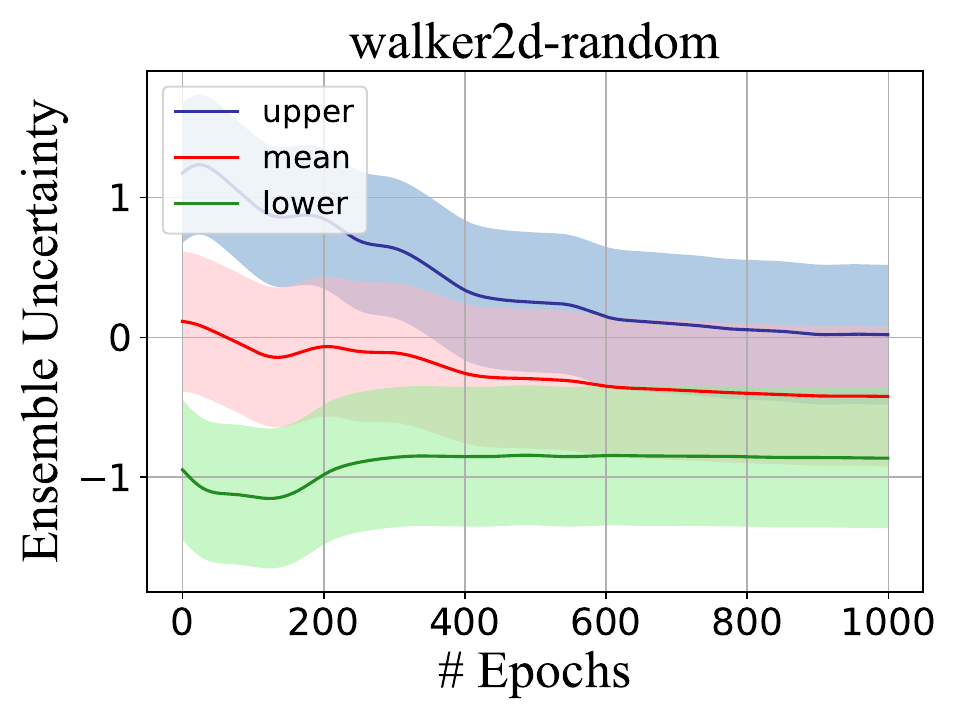}
  \includegraphics[width=0.19\linewidth]{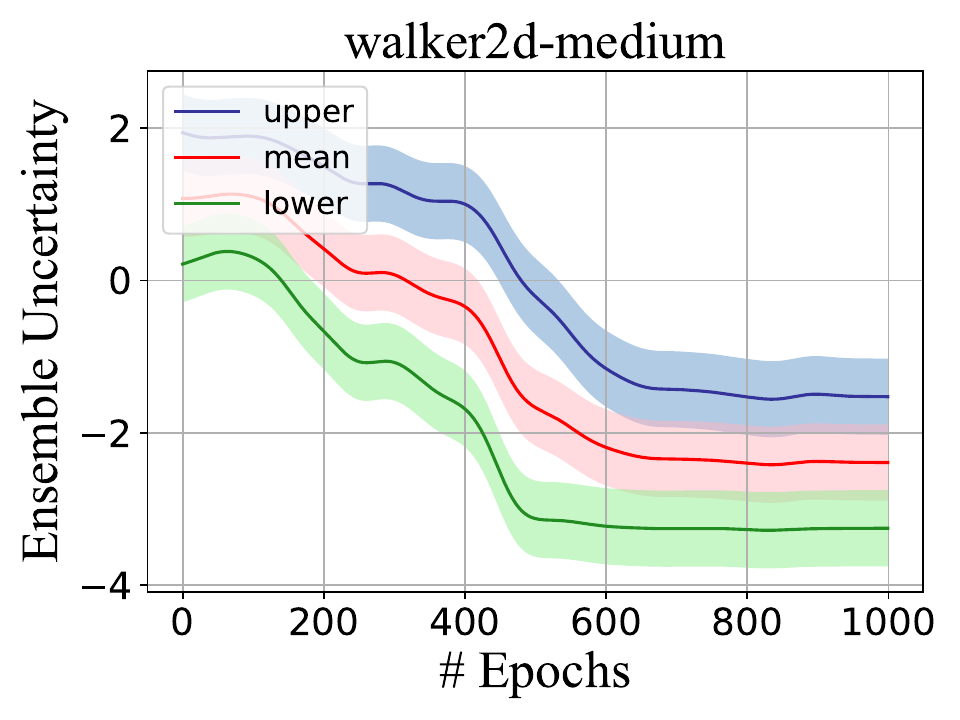}
  \includegraphics[width=0.19\linewidth]{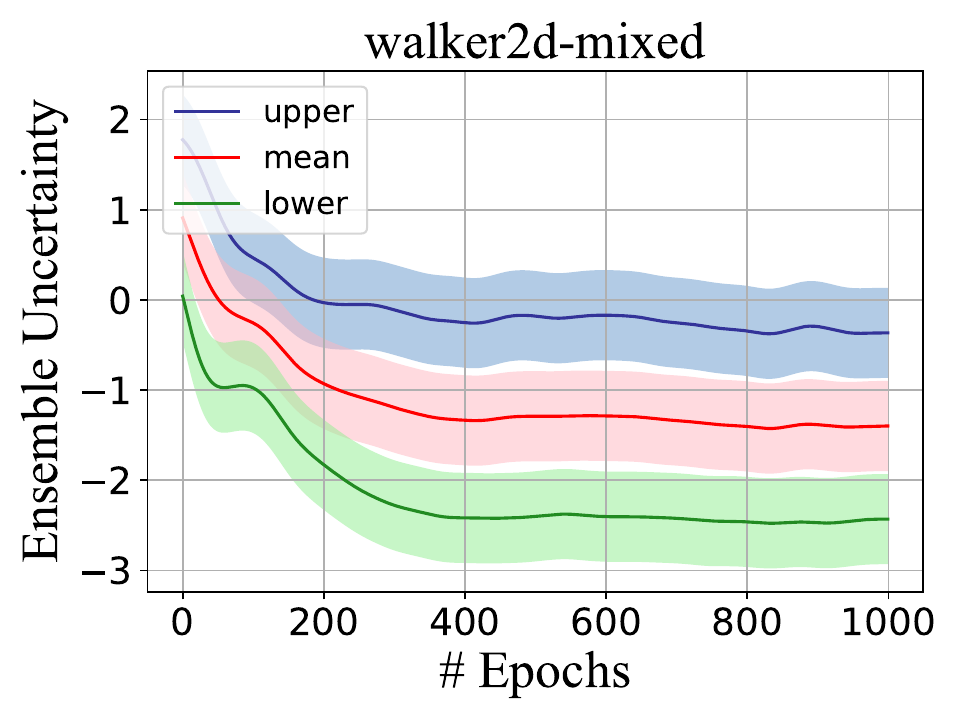}
  \includegraphics[width=0.19\linewidth]{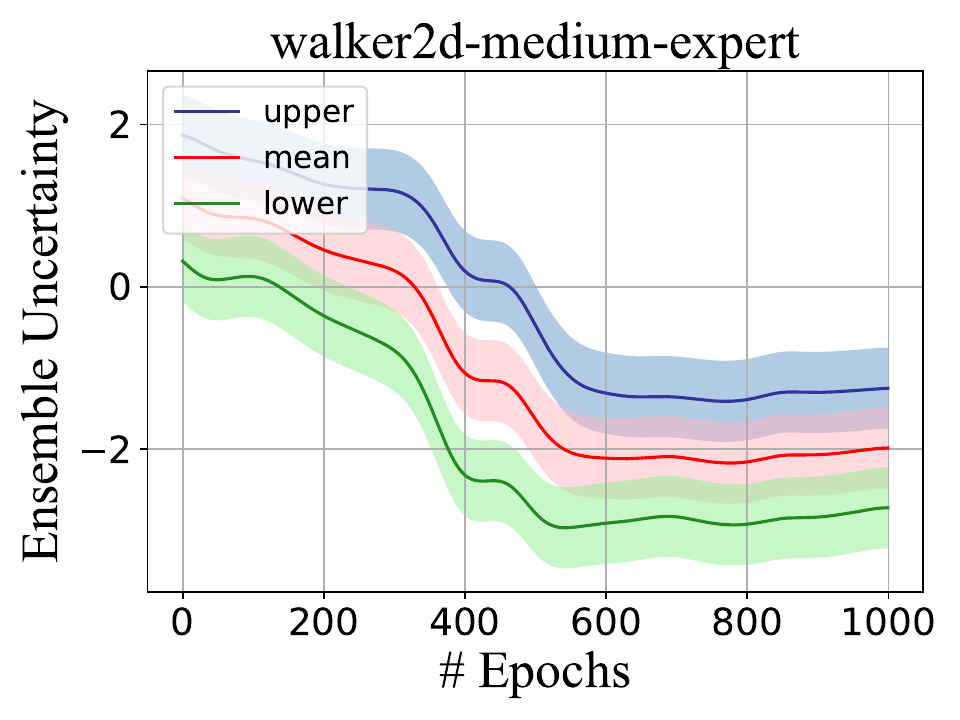}
  \includegraphics[width=0.19\linewidth]{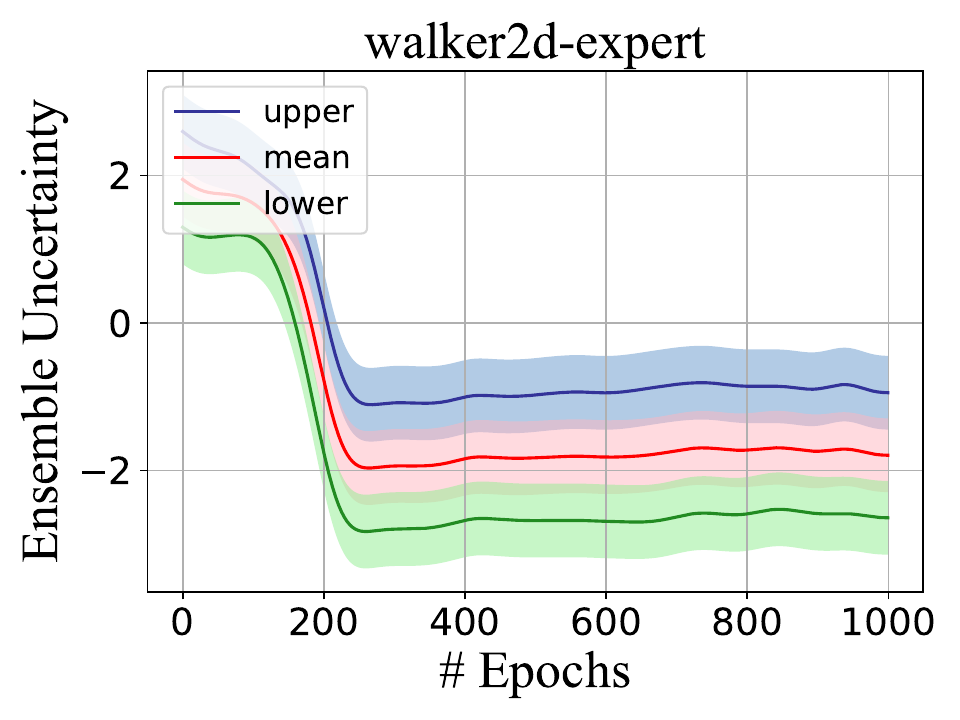}
  \caption{Learning curves of ensemble uncertainty on three environments of 15 tasks. Three lines represent the upper, mean and lower scores respectively.}
  \label{fig:learn_process_uncertainty}
\end{figure*}

\paragraph{Learning Process in MORAL} To further analyze the effectiveness of MORAL, we conduct an analysis during the learning process in MORAL. 
The target Q-value, Q-value curves on 15 datasets are shown in Fig.~\ref{fig:learn_process_Q}.
MORAL exhibits consistent improvement in both the target Q-values and Q-values during the policy learning process, ultimately achieving stabilization within a predetermined range. The alterations in the target Q-value align consistently with the fluctuations in the Q-value. This finding further underscores the effectiveness of the proposed adversarial process in enhancing the robustness and stability of the learned offline policy. 

Furthermore, the ensemble uncertainty curves during the learning process are shown in Fig.~\ref{fig:learn_process_uncertainty}. 
Ensemble uncertainty is quantified through the logarithm of the standard deviation of predicted means derived from ensemble dynamics samples.
The uncertainty gradually decreases, demonstrating the reduction of ensemble errors during task learning. Moreover, the observed increase in Q-values aligns with the gradual reduction in ensemble uncertainty. The optimization of offline policies, coupled with uncertainty reduction, ensures convergence and high-performance outcomes of the final policy. \textbf{The reduction in uncertainty substantially boosts Q-values, facilitating proficient policy optimization. Moreover, the integration of DF contributes to steering the policy away from divergence, promoting stability.}

Overall, this outcome further underscores the effectiveness of MORAL, which effectively tackles the challenge of policy instability stemming from disparities among ensemble models and the fixed rollout horizon.

\begin{table}[t]
\renewcommand{\arraystretch}{1.2}
\setlength{\tabcolsep}{3.6pt} 
 \caption{Impact of the hyperparameter $N'$ in MORAL on expert datasets of normalized score.} 
\begin{tabular}{ccccccc}
\toprule
   & \multicolumn{2}{c}{Hopper-expert} & \multicolumn{2}{c}{Halfcheetah-expert} & \multicolumn{2}{c}{Walker2d-expert} \\ \cline{2-7} 
$N'$  & MORAL           & ADV             & MORAL         & ADV                    & MORAL            & ADV              
\\ \hline
    5  
    & 103.1$\pm$4.5       
    & 90.3±2.8       
    & 98.2$\pm$2.1     
    & 87.4±2.7      
    & 110.3$\pm$2.3    
    & 81.3±4.7         
\\
    10  
    & \textbf{113.2$\pm$1.2}       
    & \textbf{104.2±2.3}       
    & \textbf{102.9$\pm$0.7}     
    & \textbf{95.3±1.8}      
    & \textbf{118.6$\pm$0.7}         
    & \textbf{87.3±4.2}        
\\
    15  
    & 102.3$\pm$6.2       
    & 86.5±3.9       
    & 95.0$\pm$4.1     
    & 84.3±6.0      
    & 103.2$\pm$4.1    
    & 80.1±5.3        
\\ \bottomrule
\label{tab:impact}
\end{tabular}
\end{table}

\begin{figure}[]
  \centering  \includegraphics[width=0.32\linewidth]{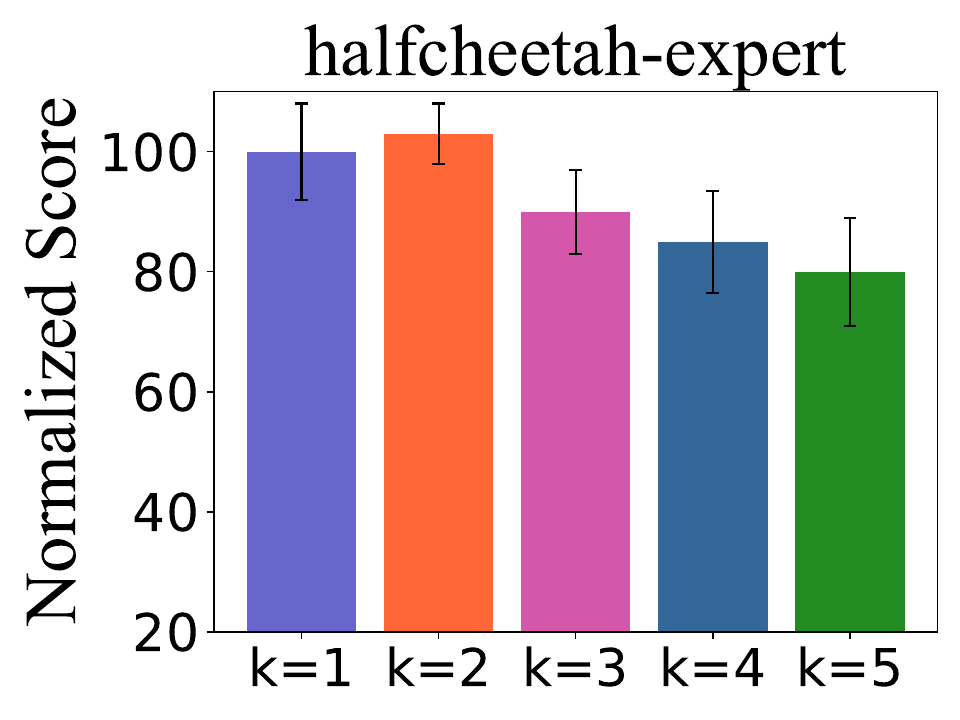}
  \includegraphics[width=0.32\linewidth]{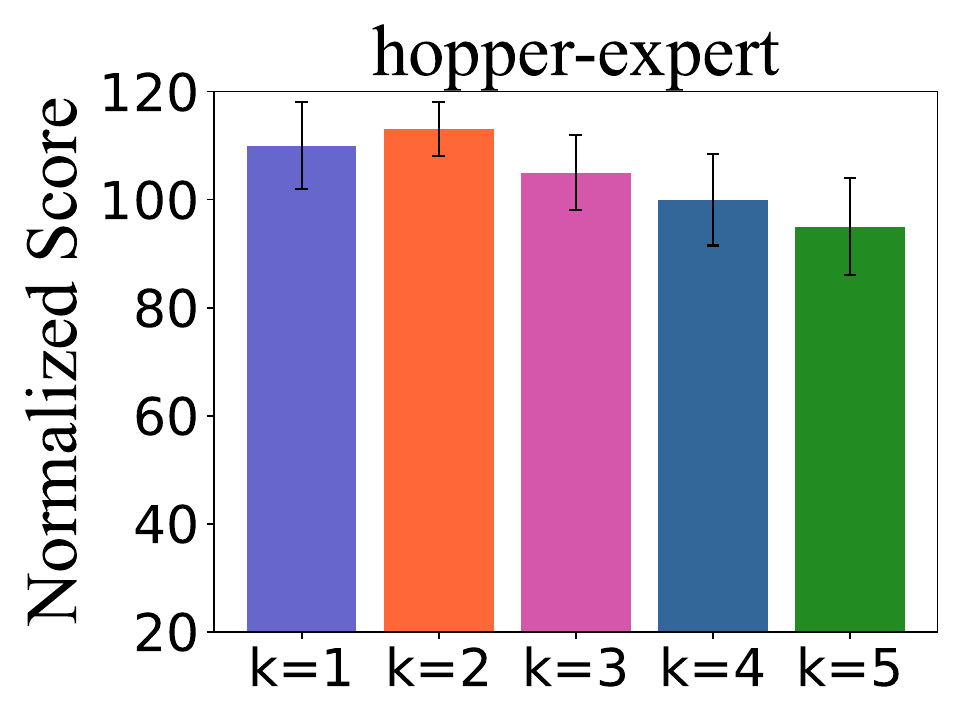}
  \includegraphics[width=0.32\linewidth]{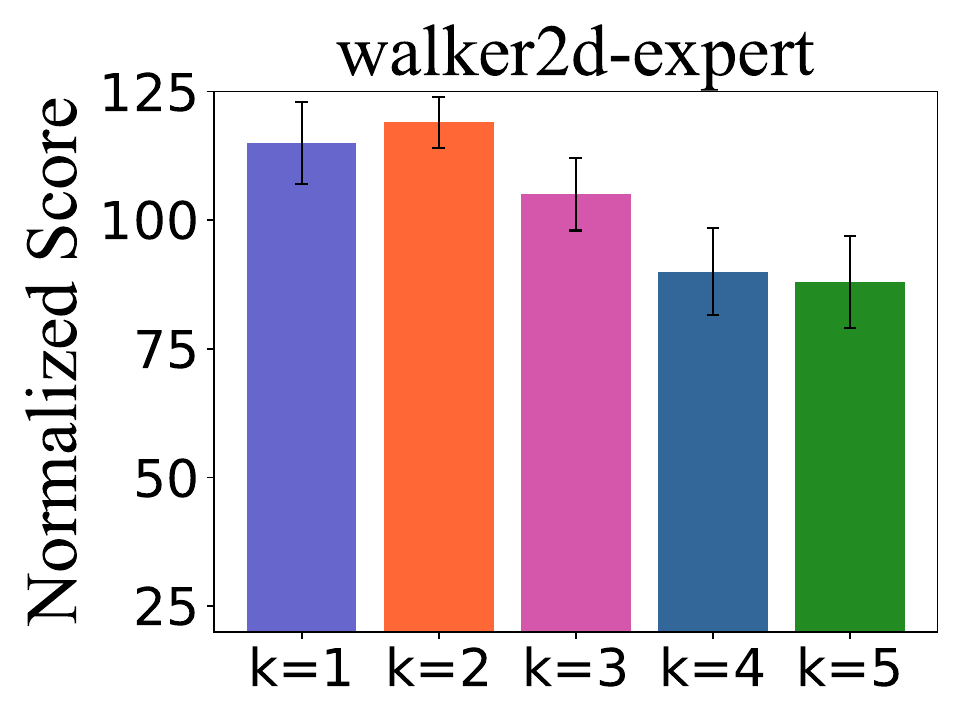}
  \caption{Impact of the hyperparameter $k$ in MORAL on expert datasets.}
  \label{fig:impact}
\end{figure}

\paragraph{Impact of $N'$}
We perform comparative experiments on the hyperparameter $N'$ using three expert datasets, and the results are illustrated in Table \ref{tab:impact}. It can be observed that the policy performance reached its peak when $N'$ was set to $10$ in all three environments. As a result, we set the hyperparameter $N'$ in MORAL to $10$. Through comparative experiments on this parameter setting, we can demonstrate that MORAL does not necessitate parameter adjustments in response to environmental changes.

\paragraph{Impact of $k$}
We perform comparative experiments on the hyperparameter $k$ in MORAL using three expert datasets. The results, illustrated in Fig.~\ref{fig:impact}, show that policy performance peaked when $k$ was set to $2$ in all three environments, so we set $k$ to $2$ for MORAL. Further analysis reveals that a minimal $k$ value introduces significant adversarial interference, leading to suboptimal policy outcomes, while larger $k$ values fail to effectively harness these benefits. Despite the necessity of determining the $k$ value, MORAL maintains a lower algorithmic cost compared to fixed horizon rollout approaches. 
MORAL updates policies based on ensemble models without setting different hyperparameters for various environments, enabling execution with fixed parameters and robustly enriching training data without need for environment-specific adjustments.

\begin{table*}[t]
    \caption{Ablation studies on the D4RL datasets. Each score is the normalized score, and averaged over $12$ random seeds. We bold the highest scores and underline the sub-optimal scores. ‘$\pm$’ represents the standard deviation. w/o represents without.} 
\renewcommand{\arraystretch}{1.2}
\setlength{\tabcolsep}{6pt} 
    \centering 
    \begin{tabular}{ccccccccccc}
    \toprule
        Environment & Dataset & MORAL & \makecell[c]{MORAL w/o DF} & \makecell[c]{MORAL w/o ADV} & RAMBO~\cite{rambo} & MOPO~\cite{mopo} & TD3+BC~\cite{td3bc}
        \\ \hline
        halfcheetah & random & \underline{35.6$\pm$0.6} &  30.4$\pm$2.1 & 25.4$\pm$4.0 & \textbf{40.0$\pm$2.3} & 35.4 & 10.2
        \\
        halfcheetah & medium & \underline{76.3$\pm$0.8} & 72.6$\pm$2.2 & 40.1$\pm$2.5 & \textbf{77.6$\pm$1.5} & 42.3 & 42.8
        \\ 
        halfcheetah & mixed & \textbf{70.0$\pm$0.4}  & 65.3$\pm$1.2 & 52.4$\pm$3.1 & \underline{68.9$\pm$2.3} & 53.1 & 43.3
        \\ 
        halfcheetah & med-expert & \textbf{109.2$\pm$2.4}  & 93.9$\pm$5.6 & 62.3$\pm$2.1 & 93.7$\pm$10.5 & 63.3 & \underline{95.9}
        \\ \hline
        Hopper & random & \textbf{35.7$\pm$9.2}  & \underline{28.7$\pm$1.5} & 10.9$\pm$9.1 & 21.6$\pm$8.0 & 11.7 & 11.0
        \\ 
        Hopper & medium & \textbf{107.4$\pm$0.8}  & \underline{100.5$\pm$3.4} & 25.0$\pm$5.1 & 92.8$\pm$6.0 & 28.0 & 98.5
        \\ 
        Hopper & mixed & \textbf{105.2$\pm$0.6} & \underline{98.4$\pm$1.4} 
        & 59.7$\pm$4.2 & 96.6$\pm$7.0 & 67.5 & 31.4
        \\ 
        Hopper & med-expert & \underline{112.0$\pm$1.1} & 105.0$\pm$2.1 & 21.8$\pm$7.9 & 83.3$\pm$9.1 & 23.7 & \textbf{112.2}
        \\ \hline
        Walker2d & random & \textbf{21.8$\pm$0.2} & \underline{14.4$\pm$1.6} &  9.4$\pm$2.6 & 11.5$\pm$10.5 & 13.6 & 1.4
        \\ 
        Walker2d & medium & \textbf{95.2$\pm$1.7} & \underline{90.1$\pm$2.1}
        & 10.8$\pm$0.8 & 86.9$\pm$2.7 & 11.8 & 79.7
        \\ 
        Walker2d & mixed & \underline{77.5$\pm$2.0} & 68.2$\pm$2.1  & 35.7$\pm$2.4 & \textbf{85.0$\pm$15.0} & 39.0 & 25.2
        \\ 
        Walker2d & med-expert & \textbf{113.6$\pm$1.5} & \underline{105.1$\pm$3.2} & 38.4$\pm$2.3 & 68.3$\pm$20.6 & 44.6 & 101.1
        \\  \hline
        \multicolumn{2}{c}{\textbf{Average}} &  \textbf{80.0} & \underline{72.7} & 32.7 & 68.9 & 36.2 & 54.4
        \\
        \bottomrule
    \end{tabular}
    \label{tab:abla}
\end{table*}

\subsection{Ablation Studies} 
To substantiate the effectiveness of DF and adversarial process, we conduct a comparative ablation experiment involving MORAL, MORAL without (w/o) DF and  MORAL w/o ADV across 12 D4RL datasets. 
The results presented in Table \ref{tab:abla} demonstrate that the DF can enhance the performance of MORAL, validating that the DF can enhance stability and achieving performance improvements. Furthermore, MORAL w/o DF outperforms RAMBO, showcasing this adversarial data augmentation can improve the offline policy performance. Conversely, MORAL w/o ADV demonstrates subpar policy performance, failing to reach levels comparable to MOPO. 
Compared to the model-free method TD3+BC, MORAL without DF can achieve better performance than TD3+BC.

These experimental outcomes solidify the effectiveness of the adversarial framework and the DF. 
The proposed adversarial framework, leveraging extensive ensemble models, consistently demonstrates robust enhancements in policy performance. Additionally, MORAL w/o DF exhibits a significant performance decline in difficult tasks, underscoring the crucial role of the differential factor in mitigating extrapolation errors and improving the policy stability.
In conclusion, the ablation experiments further demonstrate the effectiveness of the game architecture and the differential factor within the MORAL. We establish the adversarial process to conduct alternating sampling, robustly enhancing policy performance through data augmentation under stochastic perturbations. This approach efficiently utilizes ensemble models, circumventing the necessity for costly configurations across diverse datasets and tasks, while introducing a universal architecture for model-based offline RL. 

\subsection{Analysis of Time Cost} 
\label{sec:time_cost}
To further analyze the time cost of MORAL, we choose the adversarial method RAMBO~\cite{rambo} for comparison. In RAMBO, the offline learning process involves dynamic ensemble models construction, rollout with a fixed horizon, and adversarial model update. 
In MORAL, the offline learning process includes dynamic ensemble models construction and alternating sampling for offline policy optimization. 
The construction of dynamic models in model-based offline RL is a computationally efficient process that does not require excessive consumption of computational resources.
Within the same environmental platform, we conduct comparative experiments using the medium and random datasets across three different environments. The experimental results, as depicted in Figure \ref{fig:time}, demonstrate that MORAL exhibits superior time efficiency compared to RAMBO. This finding further corroborates the effectiveness and practical applicability of MORAL. 

\begin{figure}[t]
  \centering
  \includegraphics[width=0.46\linewidth]{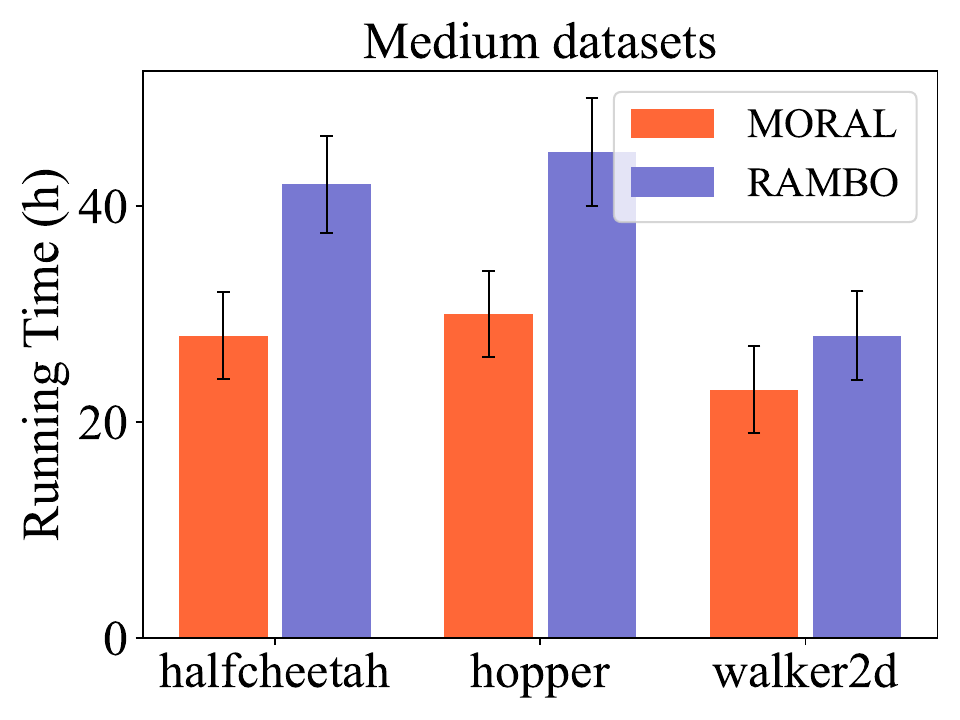}
  \includegraphics[width=0.46\linewidth]{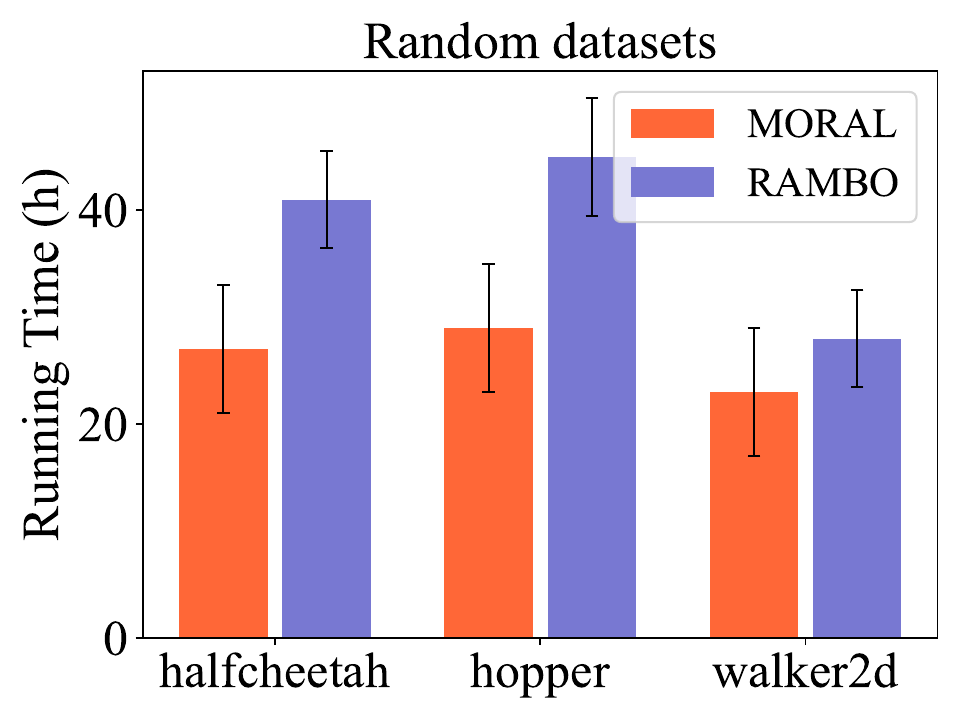}
  \caption{Running time of MORAL and RAMBO in medium and random datasets of three environments.}
  \label{fig:time}
\end{figure}

\section{Conclusion}
\label{sec:conclusion}
This study has proposed an offline model-based adversarial data-augmented optimization method. By combining adversarial game-theoretic techniques with the differential factor, the offline rollout procedure is modeled into a two-player alternating sampling. 
Extensive experiments on the D4RL datasets have substantiated the remarkable sample efficiency and wide-ranging applicability of MORAL across diverse domains. MORAL has also achieved stable and robust learning performance across all tasks. 

\textbf{Limitation and Future Work:} A key limitation of MORAL lies in its generalization capabilities. 
For our future work, we will concentrate our efforts on extending this framework to encompass model-free offline RL methods. Additionally, we will explore the incorporation of hierarchical and meta-learning approaches into the MORAL framework to improve generalization in out-of-distribution regions across challenging long-horizon tasks.

\bibliographystyle{IEEEtran}
\bibliography{tnnls.bib}
\end{document}